\documentclass{article}
\usepackage[letterpaper,top=1in,bottom=1in,left=1in,right=1in]{geometry}


\newif\ifdraft
\draftfalse

\newif\ifoc 
\octrue

\ifdraft
\usepackage[letterpaper,paperwidth=9in,paperheight=11in,
			top=0.75in,bottom=1in,left=0.125in,right=1.625in,
			footskip=.25in,heightrounded,marginparwidth=1.6in,
			marginparsep=0.05in]{geometry}
\usepackage{xcolor}
\usepackage{xargs} 
\usepackage[textsize=footnotesize]{todonotes}
\newcommandx{\nt}[2][1=]{\todo[linecolor=blue,
			backgroundcolor=blue!10,bordercolor=blue,#1]{#2}}
\newcommandx{\jj}[2][1=]{\todo[linecolor=red,
			backgroundcolor=red!10,bordercolor=red,#1]{{\bf JJ}: #2}}
\def\ct#1{\textcolor{red}{#1}}
\else
\def\ct#1{}
\def\td#1{}
\def\jj#1{}
\fi

\usepackage{tabularx}
\usepackage{amssymb,amsmath,amsthm}
\usepackage{epsfig}
\usepackage{textcomp}
\usepackage{epstopdf}
\usepackage{url}
\usepackage{cite}
\usepackage{wrapfig}
\usepackage{enumerate}
\usepackage{soul}
\usepackage{tipa}
\usepackage[percent]{overpic}
\usepackage{tikz}
\usepackage{pgfgantt}

\usepackage[linesnumbered,ruled]{algorithm2e}
\SetKwInput{KwIn}{Initial condition}
\SetKwInput{KwResult}{Outcome}
\SetAlgorithmName{Algorithm}{List of algorithms}

\newtheorem{theorem}{Theorem}

\newtheorem{proposition}[theorem]{Proposition}
\newtheorem{corollary}[theorem]{Corollary}

\newtheorem{lemma}[theorem]{Lemma}




\makeatletter
\newcommand{\customlabel}[2]{%
\protected@write \@auxout {}{\string \newlabel {#1}{{#2}{}}}}
\makeatother

\usepackage{xspace}

\def\sagalgo{\textsc{SplitAndGroup}\xspace}
\def\sag{\textsc{SaG}\xspace}
\def\isag{i\textsc{SaG}\xspace}
\def\pafalgo{\textsc{PartitionAndFlow}\xspace}
\def\paf{\textsc{PaF}\xspace}

\def\mpp{MPP\xspace}

\usepackage{mathtools,xparse}

\definecolor{purp}{rgb}{0.4, 0.1, 0.6}



\title{Constant-Factor Time-Optimal Multi-Robot Routing on High-Dimensional Grids in 
Mostly Sub-Quadratic Time}


\author{Jingjin Yu %
\thanks{Jingjin Yu is with the Department of Computer 
Science, Rutgers University at New Brunswick. E-mails: 
jingjin.yu@cs.rutgers.edu.
}
}

\begin{document}
\maketitle

\begin{abstract}Let $G = (V, E)$ be an $m_1 \times \ldots \times m_k$ grid. 
Assuming that each $v \in V$ is occupied by a robot and a robot may move 
to a neighboring vertex in a step via synchronized rotations along cycles 
of $G$, we first establish that the arbitrary reconfiguration of labeled 
robots on $G$ can be performed in $O(k\sum_i m_i)$ makespan and requires 
$O(|V|^2)$ running time in the worst case and $o(|V|^2)$ when $G$ is 
non-degenerate (in the current context, a grid is degenerate if it is nearly 
one dimensional). The resulting algorithm, \isag, provides average case 
$O(1)$-approximate (i.e., constant-factor) time optimality guarantee. 
When all dimensions are of similar size $O(|V|^{\frac{1}{k}})$, the running 
time of \isag approaches a linear $O(|V|)$. Define $d_g(p)$ as the largest 
distance between individual initial and goal configurations over all robots 
for a given problem instance $p$, building on \isag, we develop the \pafalgo 
(\paf) algorithm that computes $O(d_g(p))$ makespan solutions for arbitrary 
fixed $k \ge 2$, using mostly $o(|V|^2)$ running time. \paf  provides worst 
case $O(1)$-approximation regarding solution time optimality. We note that 
the worst case running time for the problem is $\Omega(|V|^2)$. 
\end{abstract}

\section{Introduction}\label{section:introduction}
We study the time-optimal multi-robot routing or path planning problem on 
$k$ dimensional grids and grid-like settings, with the assumption that each 
vertex of the grid is occupied by a labeled robot, i.e., the robot density 
is maximal. Our work brings several technical breakthroughs: 
\begin{itemize}
\item On a $k \ge 2$ (assuming $k$ is a constant) dimensional grid $G = (V, E)$, 
our algorithm, \isag, improves the running time of the {\em average case} 
$O(1)$-approximate (makespan) time-optimal \sagalgo(\sag) algorithm from 
\cite{yu2017constant} from $O(|V|^3)$ to a sub-quadratic $o(|V|^2)$ for most 
cases and $O(|V|^2)$ in the worst case (when $G$ is degenerate and nearly one 
dimensional). The problem has a worst case time complexity lower bound of 
$\Omega(|V|^2)$.  
\item Define $d_g(p)$ as the largest distance between individual initial 
and goal configurations over all robots for a given problem instance $p$,
building on \isag, we develop the \pafalgo (\paf) algorithm that computes
$O(d_g(p))$ makespan solutions for arbitrary fixed dimension in mostly 
$o(|V|^2)$ time and $O(|V|^2)$ time in the worst case. \paf provides {\em 
worst case} $O(1)$-approximate guarantee on time optimality. We note that 
\paf is developed independently of a key result from 
\cite{demaine2018coordinated} that achieves the same effect for two 
dimensions only. 
\item Certain techniques in our work, which help enable the near optimal 
running time for \isag and \paf, may be of independent interest, including: 
\begin{itemize}
\item We provide a shuffling procedure based on bipartite matching that 
allows the arbitrary redistribution of a group of unlabeled robots on 
arbitrary-dimensional grids (Theorem~\ref{t:kd-shuffle}). 
\item We provide an efficient procedure, also based on matching, that 
decouples an $f > 0$ circulation into $f$ unit circulations on arbitrary
graphs (Theorem~\ref{t:fd}).
\item We establish the existence of $\Omega(d_g^{k-1})$ vertex disjoint paths 
for {\em reshaping} the same amount of flow through a $k$ dimensional grid 
with a side length of $\Theta(d_g)$ (Lemma~\ref{l:vdp-kd}). 
\end{itemize}
\end{itemize}

From the practical standpoint, our results are of significance in multiple 
application domains including robotics and network routing. Particularly, in 
robotics, our results imply that even in highly dense settings, 
if among a group of labeled robots the maximum distance between a robot and 
its goal is of distance $d_g$, then it is possible to compute a routing plan 
that solves the entire problem that requires $O(d_g)$ makespan in only 
quadratic time, assuming that the robots travel at no faster than unit speed. 
Further exploration of the algorithmic insights from our work may lead to more 
optimal coordination algorithms for applications including warehousing 
\cite{WurDanMou08}, automated container port management 
\cite{stahlbock2008operations}, and coordinated aerial flight \cite{tang2018hold}. 
As noted in \cite{demaine2018coordinated}, algorithms like \paf also help 
resolve open questions regarding routing strategies for inter-connected mesh 
networks. Indeed, solving multi-robot routing on grid and grid-like structures 
is equivalent to finding vertex disjoint paths in the underlying network, 
extended over discrete time steps. 

\textbf{Related work}. 
Multi-robot path planning, from both the algorithmic and the application 
perspectives, has been studied extensively 
\cite{ErdLoz86,LavHut98b,GuoPar02,JanStu08,LunBer11,StaKor11,BerSnoLinMan09,
SolHal12,YuLav13STAR,TurMicKum14,ChoLynHutKanBurKavThr05,blm-rvo,
bekris2007decentralized,alonso2015local,knepper2012pedestrian}, covering many 
application domains \cite{HalLatWil00, Nna92, RodAma10, FoxBurKruThr00,DinChaFai01,
GriAke05,MatNilSim95, RusDonJen95,JenWheEva97,tang2018hold}. Multi-robot path 
and motion planning is known to be computationally hard under continuous settings 
\cite{SpiYak84,HopSchSha84}, even when the robots are unlabeled \cite{HeaDem05,SolHal15}. 
While the general multi-robot motion planning problem seems rather difficult 
to tackle, relaxed unlabeled continuous problems are solvable in polynomial time 
even near optimally \cite{TurMicKum14,SolYu15}. 

Restricting our attention to the discrete and labeled setting, in contrast to 
the continuous setting, feasible solutions are more readily computable. Seminal 
work by Kornhauser et al. \cite{KorMilSpi84}, which builds on the work by Wilson 
\cite{Wil74}, establishes that a discrete instance can be checked and solved in 
$O(|V|^3)$ time on a graph $G=(V,E)$. Feasibility test can in fact be completed in 
linear time \cite{AulMonParPer99,GorHas10,YuRus15STAR}. Optimal solutions 
remain difficult to compute in the discrete settings, however, even on planar 
graphs \cite{Yu2016RAL,demaine2018coordinated}. Whereas many algorithms have 
been proposed toward optimally solving the discrete labeled multi-robot path 
planning problems \cite{StaKor11,ShaSteFelStu12,WagChoC11,ferner2013odrm,
sharon2013increasing,boyarski2015icbs,honig2016multi,cohen2016improved,YuLav16TOR}, 
few provide simultaneous guarantees on solution optimality and (polynomial) 
running time. This leads to the development of polynomial time methods that 
also provide these desirable guarantees \cite{yu2017constant,demaine2018coordinated}.

\textbf{Organization}. 
The rest of the paper is organized as follows. In 
Section~\ref{section:preliminary}, we outline the multi-robot path planning 
problem to be solved. In Section~\ref{section:improved-average}, we provide 
an average case $O(1)$-approximate algorithm, \isag, that significantly 
improves an earlier algorithm for the same purpose \cite{yu2017constant}. 
In Section~\ref{section:sketch}, we provide an descriptive outline of the 
key \pafalgo (\paf) algorithm, restricted to the 2D setting, which frequently 
invokes \isag as a subroutine to realize $O(1)$-approximation in the worst case. 
While only the 2D setting is being discussed in this section, we mention that 
the general underlying strategy applies to higher dimensions as well. 
Sections~\ref{section:two-dimensions} and~\ref{section:high-dimensions} are 
then devoted to the details of \paf in 2D and higher dimensions, respectively. 
We conclude with some discussions in Section~\ref{section:discussion}.

\section{Preliminaries}\label{section:preliminary}
Let $G = (V, E)$ be a simple, undirected, and connected graph. 
A set of $n \le |V|$ robots labeled $1$-$n$ may move synchronously on $G$ in a 
collision-free manner described as follows. At integer (time) steps starting 
from $t = 0$, each robot must reside on a unique vertex $v \in V$, 
inducing a {\em configuration} $X_t$ of the robots as an injective map 
$X_t: \{1, \ldots, n\} \to V$, specifying which robot occupies which vertex 
at step $t$ (see Fig.~\ref{fig:problem}). From step $t$ to step $t + 1$, 
a robot may {\em move} from its current vertex to an adjacent 
one under two collision avoidance constraints: {\em (i)} $X_{t+1}$ is 
injective, i.e., each robot occupies a unique vertex, and {\em (ii)} for 
$1 \le i, j \le n$, $i \ne j$, $X_t(i) = X_{t+1}(j) \to X_t(j) \ne 
X_{t+1}(i)$, i.e., no two robots may {\em swap} locations in a single 
step. If all individual robot moves between some $X_t$ and $X_{t+1}$ are 
valid (i.e., collision-free), then $M_t = (X_t, X_{t+1})$ is a valid 
{\em move} for all robots. Multiple such moves can be chained together
to form a sequence of moves, e.g., taking the form of  
$(X_t, X_{t+1}, \ldots, X_{t+t'})$ for some positive integer $t'$. 

\begin{figure}[h]
\begin{center}
\begin{overpic}[width={\ifoc 2.8in \else 2.66in \fi},tics=5]{./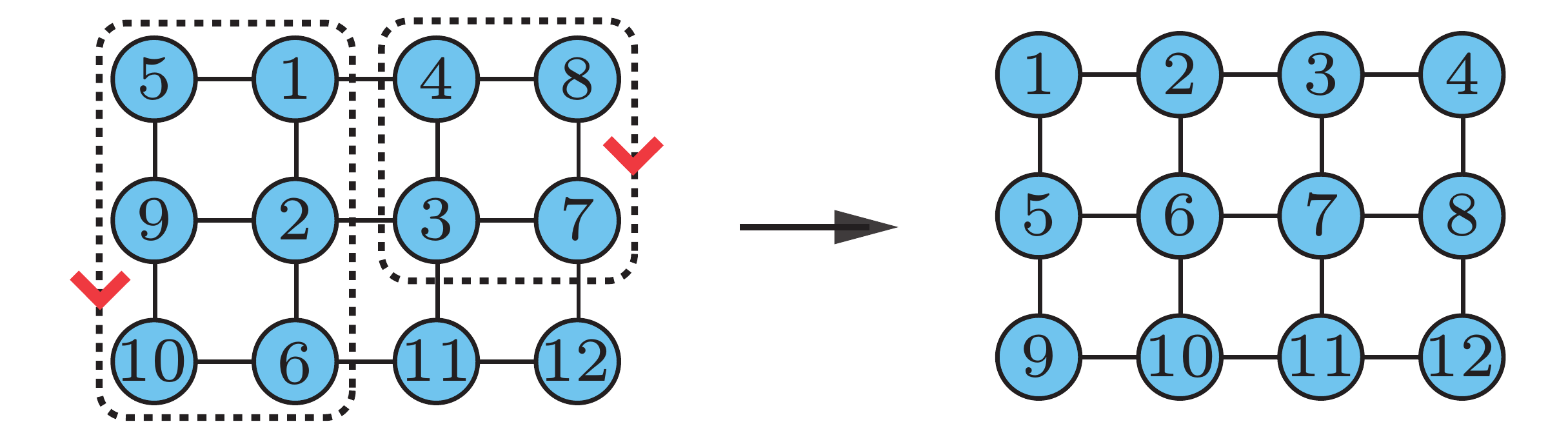}
\ifoc
\put(21.5, -5){{\small (a)}}
\put(77.4, -5){{\small (b)}}
\else
\put(21.5, -3){{\small (a)}}
\put(77.4, -3){{\small (b)}}
\fi
\end{overpic}
\end{center}
\caption{\label{fig:problem} Graph-theoretic formulation of the multi-robot 
path planning problem. (a) A configuration of 12 robots on a $4 \times 3$
grid. (b) A configuration that is reachable from (a) in a single synchronous 
move through simultaneous rotations of robots along two disjoint cycles.}
\end{figure}

Under this model, a multi-robot path planning problem (\mpp) instance is 
fully specified with a 3-tuple $(G, X_I, X_G)$ in which $X_I = X_0$ and $X_G$ 
are the initial and goal configurations, respectively. 
To handle the most difficult case, we assumed that $n = |V|$, i.e.,
the number of robots is the maximum possible under the model. 
We note that the case of $n' < |V|$ may be reduced to 
the $n = |V|$ case by arbitrarily placing $(|V| - n')$ ``virtual'' robots  
on vertices that are empty as indicated by $X_I$ and $X_G$. An algorithm for 
the $n = |V|$ case is then also an algorithm for the $n' < |V|$ case via the 
reduction. 

For this study, $G$ is assumed to be a $k$-dimensional ($k \ge 2$) 
grid graph, i.e., $G$ is an $m_1 \times \ldots \times m_k$ grid with $|V| = 
\prod_{i=1}^km_i$. For each vertex $v$ of $G$ that is not on the boundary 
of $G$, $v$ is connected to $2k$ other vertices, $2$ in each dimension. 
Without loss of generality, throughout the paper, we always assume that 
$m_1 \ge \ldots \ge m_k \ge 2$ and $|V| \ge 6$ (note that constant sized 
problems can be solved in $O(1)$ makespan through first doing brute force 
search and then direct solution look up, which takes constant time). Such a 
grid graph $G$ is also meant whenever the term {\em grid} is used in the paper 
without further specifications. We say $G$ is {\em degenerate} if $m_1 = 
\Omega(|V|)$, which implies that all other dimensions are of constant sizes, 
i.e., $G$ is mostly one-dimensional. Otherwise, $G$ is {\em non-degenerate}. 
Since the most interesting cases are $k = 2, 3$ due to their relevance in 
applications, these cases are sometimes treated more carefully with additional 
details. 

Given an \mpp instance and a feasible solution, as a sequence of moves $M = 
(X_I=X_0, X_1, \ldots, X_{t_f}=X_G)$ that takes $X_I$ to $X_G$, we define 
the solution's {\em makespan} as the length $t_f$ of the sequence. For an 
\mpp instance $p = (G, X_I, X_G)$, let $d(v_1, v_2)$ denote the distance 
between two vertices $v_1, v_2 \in V$, assuming each edge has unit length. 
We define the {\em distance gap} between $X_I$ and $X_G$ as 
\[
d_g(p) = \max_{1 \le i \le |V|} d(X_I(i), X_G(i)),  
\]
which is an underestimate of the minimum makespan for $p$. The main aim of 
this work is to establish a polynomial time algorithm that computes solutions 
with $O(d_g(p))$ makespan for an arbitrary instance $p$ whose underlying 
grid are of some fixed dimension $k \ge 2$. In other words, the algorithm 
produces, in the worst case, $O(1)$-approximate makespan optimal solutions. 
Note that, on an $m_1 \times \ldots \times m_k$ grid, $d_g(p) \le 
\sum_{1=1}^k (m_i-1)$. 

\section{Improved Average Case $O(1)$-Approximate Makespan Algorithm}\label{section:improved-average}
Our worst case $O(1)$-approximate algorithm makes use of, as a subroutine, 
an average case $O(1)$-approximate algorithm for the same problem that 
improves over the \sagalgo (\sag) algorithm from \cite{yu2017constant}. Main 
properties of \sag are summarized in the following theorem. 

\begin{theorem}[\cite{yu2017constant}]\label{t:ao1}Let $(G, X_I, X_G)$ be an 
\mpp instance with $G = (V, E)$ being an $m_1 \times m_2$ grid. Then, a 
solution with $O(m_1 + m_2)$ makespan can be computed in $|V|^3$ time.
\end{theorem}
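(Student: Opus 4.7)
The plan is to exploit the fact that the diameter of an $m_1 \times m_2$ grid is $\Theta(m_1+m_2)$, which already lower-bounds any solution's makespan, and to match this lower bound up to a constant via a column-then-row strategy. First route every robot into its correct destination column; then sort each column internally so every robot reaches its target row. Since the second phase is straightforward on each column-path, the entire argument will hinge on realizing the first phase in $O(m_1+m_2)$ makespan despite the maximal robot density.

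For the column-routing phase, I would cast the assignment of robots to destination columns as a bipartite matching problem: each source column supplies $m_2$ robots, and each destination column demands $m_2$ slots labeled by origin; a feasible matching simply tells each robot which destination column it will travel to. Solving this matching fits within an $O(|V|^{2.5})$ budget. To execute the transport physically, I would dedicate one row (say the top) as a shared ``highway'': each robot rises into this row via a synchronous rotation along a cycle that borrows cells from its source column, slides horizontally, and then descends into its destination column via another rotation. The rotations are scheduled in $O(m_1+m_2)$ rounds of vertex-disjoint cycles, ensuring the aggregate makespan stays $O(m_1+m_2)$. For the row-sorting phase, each column now holds exactly the correct multiset of robots on a path of length $m_2$, so a parallel odd-even transposition sort across all $m_1$ columns simultaneously finishes in $O(m_2)$ makespan.

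The hard part will be Phase 1. Because the grid is completely full, a robot cannot simply ``move up'' into the buffer row; it can only do so as part of a coherent cyclic displacement that returns the global configuration to injectivity within a single step. Proving that such cycles always exist, and that one can pack enough of them into each round so the overall schedule has length $O(m_1+m_2)$ rather than degrading to $\Theta(m_1 m_2)$, is the nontrivial combinatorial core of \sag. Once such a schedule is in hand, the $O(|V|^3)$ running-time bound follows from the cost of the matching step together with the cost of materializing and carrying out on the order of $|V|$ cycle rotations, each of which may touch up to $O(|V|)$ vertices when written out explicitly.
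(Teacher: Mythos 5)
Your overall architecture (get every robot into its destination column, then sort each column in parallel) is a legitimate alternative to the paper's route --- the cited \sag proof is instead a divide-and-conquer that recursively halves the grid along its longer dimension and performs a grouping step, giving the makespan recurrence $O(m_1+m_2)+O(\frac{m_1}{2}+m_2)+\cdots = O(m_1+m_2)$. But your Phase~1 as described cannot meet the makespan bound, and the failure is quantitative rather than a missing detail. If all column-changing traffic is funneled through a single dedicated ``highway'' row, that row holds at most $m_1$ robots at any step and therefore realizes at most $m_1$ units of horizontal displacement per step; a worst-case instance (e.g., reversing the order of the columns) requires total horizontal displacement $\Theta(m_1^2 m_2)$, so any single-highway schedule needs makespan $\Omega(m_1 m_2)$, not $O(m_1+m_2)$. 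No amount of clever cycle-packing rescues this: it is a bandwidth obstruction, and it is precisely the ``nontrivial combinatorial core'' you flagged but did not resolve.

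The missing idea is that horizontal transport must be spread over all $m_2$ rows in parallel, which requires a preliminary within-column permutation that balances the traffic. Build the bipartite multigraph on source columns versus destination columns with one edge per robot; it is $m_2$-regular, so by Hall's theorem it decomposes into $m_2$ perfect matchings. Send the robots of the $i$-th matching to row $i$ (a column permutation via Lemma~\ref{l:distribute}), then move each row's robots horizontally (each row now carries exactly one robot per source--destination column pair, so Lemma~\ref{l:swap}/Lemma~\ref{l:distribute} applies row by row in parallel), then finish with a final column permutation. This is exactly the three-phase, matching-based scheme the present paper develops in Lemma~\ref{l:2d-shuffle} and Theorem~\ref{t:kd-shuffle} to replace \sag's tree-routing. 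Note also that your matching is aimed at the wrong question: for labeled robots the destination column of each robot is already determined by $X_G$, so no matching is needed to assign destination columns; the matching is needed to assign each robot a \emph{transit row}. With that correction, your Phase~2 (parallel sorting within columns in $O(m_2)$ makespan) and the generous $O(|V|^3)$ time accounting go through.
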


To be able to state our improvements over \sag, we briefly describe how \sag 
operates on an $m_1 \times m_2$ grid $G$. \sag recursively splits $G$ into 
halves along a longer dimension. During the first iteration, $G$ is {\em split} 
into two $\frac{m_1}{2} \times m_2$ grids (assuming without loss of 
generality that $m_1$ is even), $G_1$ and $G_2$. Then, all robots 
whose goals belong to $G_2$ will be routed to $G_2$. This will also force 
all robots whose goals belong to $G_1$ to be moved to $G_1$ because $G$ is 
fully occupied. This effectively partitions all robots on $G$ into two 
equivalence classes (those should be in $G_1$ and those should be in $G_2$); 
there is no need to distinguish the robots within each class during the
current iteration. This is the {\em grouping} operation in \sag. 
Fig.~\ref{fig:split} illustrates graphically what is to be achieved in the 
grouping operation in an iteration of \sag. 
\begin{figure}[h]
\begin{center}
\begin{overpic}[width={\ifoc 3.2in \else 3.04in \fi},tics=5]{./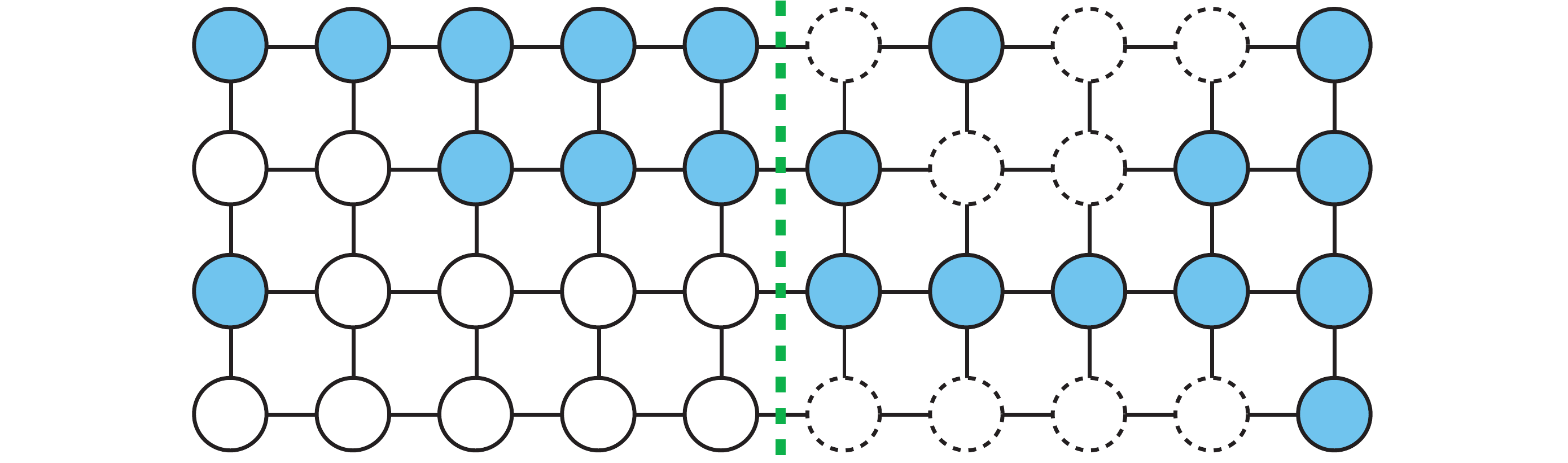}
\end{overpic}
\end{center}
\caption{\label{fig:split} On a $10\times 4$ grid, the shaded robots have goals 
on the right $5\times 4$ grid. The grouping operation of an \sag iteration seeks 
to move the $9$ shaded robots on the left $5\times 4$ grid to exchange with the 
$9$ unshaded robots marked with dashed boundaries on the right $5\times 4$ grid.}
\end{figure}

To be able to move the robots to the desired halves of $G$, it was noted 
\cite{YuLav16TOR} that an exchange of two robots can be realized on a 
$3 \times 2$ grid using a constant number of moves (Fig.~\ref{fig:23}). 
\begin{figure}[h]
\begin{center}
\begin{overpic}[width={\ifoc 3.6in \else 3.42in \fi},tics=5]{./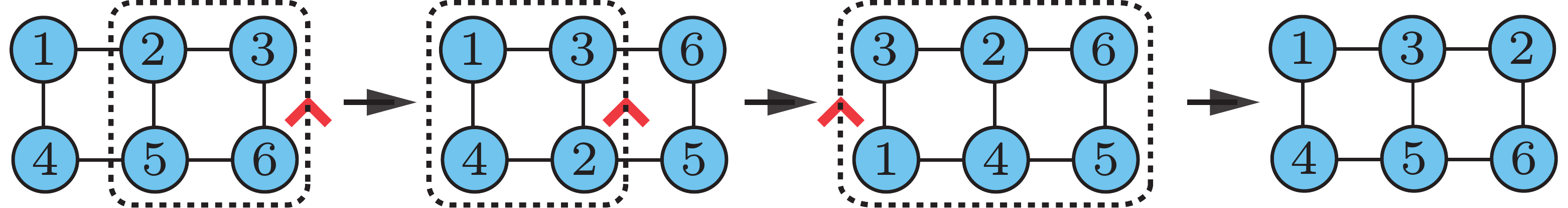}
\end{overpic}
\end{center}
\caption{\label{fig:23} Robots $2$ and $3$ may be ``swapped'' using three 
synchronous moves on a $3 \times 2$ grid. This implies that arbitrary 
configuration on a $3\times 2$ grid can be realized in a constant number of 
moves.}
\end{figure}

The local ``swapping'' primitives can be executed in parallel on $G$, which 
implies Lemma~\ref{l:distribute} as follows. An illustration of the operation
is provided in Fig.~\ref{fig:distribute}.
\begin{lemma}[Lemma 6 in \cite{yu2017constant}]\label{l:distribute} On a 
length $\ell$ path embedded in a grid, a group of indistinguishable robots 
may be arbitrarily rearranged using $O(\ell)$ makespan. Multiple such 
rearrangements on vertex disjoint paths can be carried out in parallel. 
\end{lemma}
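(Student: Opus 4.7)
The plan is to reduce the single-path rearrangement to a pipelined schedule of the constant-move $3\times 2$ swap primitive from Fig.~\ref{fig:23}, and then argue parallelism by spatial disjointness.

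First, I note that because the robots in question are indistinguishable, any initial-to-target rearrangement on the path is equivalent to specifying, for each of the $\ell$ path vertices, a bit indicating whether a ``group'' robot should reside there at the end; both the initial and target bit strings have the same number of $1$s. Hence the problem is a binary sorting problem on a line of $\ell$ cells. The classical odd-even transposition sort solves this in at most $\ell$ rounds, where each round consists of vertex-disjoint adjacent swaps, alternately on the pairs $(2i-1,2i)$ and $(2i,2i+1)$.

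Second, I would realize each adjacent swap by extending the relevant path edge to a $3 \times 2$ sub-grid, which is possible because the path is embedded in a grid of dimension $k\ge 2$ and thus every path edge has a parallel off-path neighbor edge. The primitive depicted in Fig.~\ref{fig:23} executes an adjacent swap in a constant number of synchronous moves while returning all four helper robots to their original positions, so off-path robots are not permanently disturbed. Since the pairs within a single round of odd-even sort are vertex-disjoint on the path, and their $3\times 2$ extensions can be chosen on a consistent side of the path so as to also be disjoint among themselves, an entire round is executed in $O(1)$ makespan. Summing over $\ell$ rounds yields $O(\ell)$ makespan for rearranging a single path.

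Finally, for the parallel claim across multiple vertex-disjoint paths, I would pick, uniformly for all paths, a consistent ``helper side'' so that the $3\times 2$ extensions attached to different paths also do not collide; when paths run along grid axes this is immediate, and otherwise one can schedule the invocations in $O(1)$ color classes based on local conflicts. Each color class then completes in $O(\ell)$ makespan in parallel, and summing over a constant number of color classes preserves the bound. The main obstacle I anticipate is exactly this parallelism step: ensuring that ``vertex-disjoint on the path'' implies ``vertex-disjoint on the $3\times 2$ extensions''. In the \sag setting the paths that arise are highly structured (axis-aligned slabs of the grid), so a fixed choice of helper side suffices; but for the general statement of the lemma one either has to appeal to a constant-coloring argument as above or to adopt the convention, standard in this literature, that the input paths come with non-overlapping width-$2$ neighborhoods.
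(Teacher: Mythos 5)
Your proposal is correct and follows essentially the same route the paper takes: the paper itself only sketches this lemma (importing it from \cite{yu2017constant}) by observing that the constant-move $3\times 2$ swap primitive of Fig.~\ref{fig:23} can be applied in parallel along the path, which is exactly the engine of your argument; your odd-even-transposition framing and the coloring of overlapping $3\times 2$ extensions are reasonable ways of making the $O(\ell)$ round count and the disjointness of the helper regions explicit. The only minor gloss is that routing between two arbitrary placements of the indistinguishable group is ``sort, then un-sort'' rather than a single sort, but this only doubles the constant and does not affect the bound.
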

\begin{figure}[h]
\begin{center}
\begin{overpic}[width={\ifoc 3.2in \else 3.04in \fi},tics=5]{./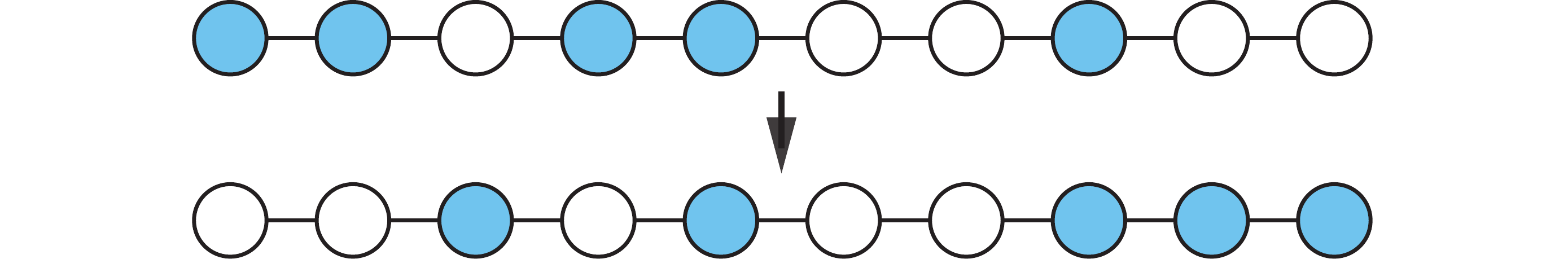}
\end{overpic}
\end{center}
\caption{\label{fig:distribute} Assuming a length $\ell$ path is embedded in a
grid, Lemma~\ref{l:distribute} guarantees that the arbitrary distribution of a 
group of robots can be performed using $O(\ell)$ make span.}
\end{figure}

Lemma~\ref{l:distribute} further implies Lemma~\ref{l:swap}. 
Fig.~\ref{fig:lineswap} illustrates graphically the operation realized 
by Lemma~\ref{l:swap}. 
\begin{lemma}[Lemma 7 in \cite{yu2017constant}]\label{l:swap} On a length $\ell$ path 
embedded in a grid, two groups of robots, equal in number and initially 
located on two disjoint portions of the path, may exchange locations in 
$O(\ell)$ makespan. Multiple such exchanges on vertex disjoint paths can 
be carried out in parallel. 
\end{lemma}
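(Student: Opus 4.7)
The plan is to derive Lemma~\ref{l:swap} as an almost immediate corollary of Lemma~\ref{l:distribute}. The clinching observation is that the length-$\ell$ path is fully occupied (by the maximum-density assumption), so once the new positions of group $A$ are specified, the new positions of group $B$ are determined by set-theoretic complement. Since $B$ is an indistinguishable group of the same cardinality as $A$, any bijection of $B$ onto the complementary vertices accomplishes the swap.

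Concretely, I would invoke Lemma~\ref{l:distribute} exactly once, treating $A$ as the indistinguishable group being rearranged and the $B$ robots as ambient ``other'' occupants of the path, with the target vertices of the $A$-rearrangement chosen to be precisely the initial positions of $B$. Lemma~\ref{l:distribute} then delivers this rearrangement in $O(\ell)$ makespan. At termination, the $A$ robots sit at the former $B$-vertices; since no $A$ or $B$ robot has been forced off the path, and the path is fully loaded by $A \cup B$, the remaining $|A| = |B|$ vertices (the former $A$-positions) must hold the $B$ robots. Indistinguishability of $B$ makes the internal assignment irrelevant, so the group-level exchange is complete. The parallelism clause is inherited verbatim from Lemma~\ref{l:distribute}: vertex-disjoint paths permit their underlying distribution procedures to be executed simultaneously, with the overall makespan bounded by the longest individual $O(\ell)$.

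The main point that requires care, rather than being a genuine obstacle, is bookkeeping: I need to check that Lemma~\ref{l:distribute} as stated tolerates non-$A$ occupants on the path during the rearrangement, and that the swap primitives it uses (the $3\times 2$ rotations of Fig.~\ref{fig:23} leveraging adjacent off-path grid vertices as temporary buffers) return those off-path vertices to their original states so that neighboring operations are unaffected. Both conditions hold in the setup of \cite{yu2017constant}, so no additional scheduling beyond a single call to Lemma~\ref{l:distribute} is needed.
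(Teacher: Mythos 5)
There is a genuine gap: your argument hinges on the claim that the path is ``fully loaded by $A \cup B$,'' so that placing $A$ on $B$'s former vertices forces $B$ onto $A$'s former vertices by complementation. Nothing in the hypotheses of Lemma~\ref{l:swap} gives you this. The two groups are merely equal in number and located on two disjoint portions of the path; the path is indeed fully occupied (maximum density), but in general by $A$, $B$, \emph{and} bystander robots belonging to neither group. This is exactly the situation in which the paper uses the lemma: after the reconfiguration of Fig.~\ref{fig:permute}, a row of the $10\times 4$ grid carries $k$ shaded robots on the left half and $k$ marked robots on the right half with $2k$ strictly less than the row length, and the remaining robots on that row must not (net) cross the split line. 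With bystanders present, a single invocation of Lemma~\ref{l:distribute} on $A$ controls only where $A$ ends up; the $B$ robots and the bystanders land in uncontrolled positions among the remaining $\ell - |A|$ vertices, so the exchange is not achieved and bystanders may drift across the boundary, breaking the grouping invariant the lemma exists to establish (cf.\ the caption of Fig.~\ref{fig:lineswap}, which explicitly requires no net movement of the other robots on the line).

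The paper itself only asserts that Lemma~\ref{l:distribute} implies Lemma~\ref{l:swap} and defers the full proof to \cite{yu2017constant}, but the intended derivation needs more than one call. A standard repair: apply Lemma~\ref{l:distribute} in parallel on the two portions to compact $A$ and $B$ into two contiguous, equal-length blocks abutting the interface between the portions (recording the original positions); swap the two adjacent blocks in $O(\ell)$ steps using the $3\times 2$ primitive of Fig.~\ref{fig:23}; then apply Lemma~\ref{l:distribute} again in parallel to spread $A$ over $B$'s recorded positions and $B$ over $A$'s, restoring the bystanders to their own portions in the process. Each phase costs $O(\ell)$ makespan, and the parallelism over vertex-disjoint paths is inherited as you describe. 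Your one-call argument is valid only in the special case $|A|+|B|=\ell$, which covers neither the lemma as stated nor its use in the paper.
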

\begin{figure}[h]
\begin{center}
\begin{overpic}[width={\ifoc 3.2in \else 3.04in \fi},tics=5]{./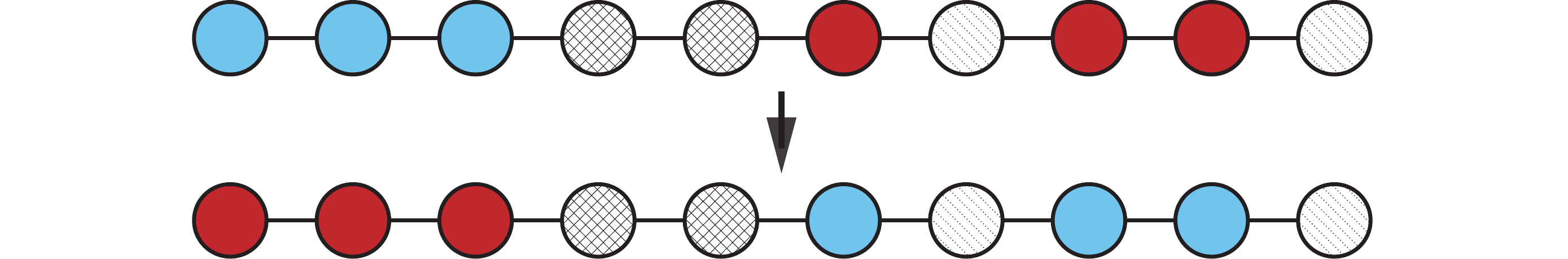}
\end{overpic}
\end{center}
\caption{\label{fig:lineswap} Assuming the grid-embedded path has a length of 
$\ell$, Lemma~\ref{l:swap} guarantees that the swapping of the two separated groups 
of robots, up to $\frac{\ell}{2}$ per group, can be done in $O(\ell)$ make span without
any net movement of other robots on the line.}
\end{figure}

Lemma~\ref{l:distribute} and Lemma~\ref{l:swap} both demand a running time of
$O(\ell^2)$. We note that some problems requires $\Omega(\ell^2)$ time to simply 
write down the solution, e.g., when $\frac{\ell}{2}$ robots need to be moved on 
a path of length $\ell$. Several additional results were developed over 
Lemma~\ref{l:swap} in \cite{yu2017constant} to complete the grouping operation, 
which involves complicated routing of robots on trees, embedded in a grid, that 
may overlap. We provide an alternative method that not only simplifies the process 
with better running time but also allows easy generalization to high dimensions. 
We note that, to complete the grouping operations, using the example from 
Fig.~\ref{fig:split} for illustration, we only need to reconfigure robots on the 
left $5\times 4$ grid so that for each row, robots to be exchanged across the 
split line are equal in number (see Fig.~\ref{fig:permute}). Lemma~\ref{l:swap} 
then takes care of the rest. 

\begin{figure}[h]
\begin{center}
\begin{overpic}[width={\ifoc 3.2in \else 3.04in \fi},tics=5]{./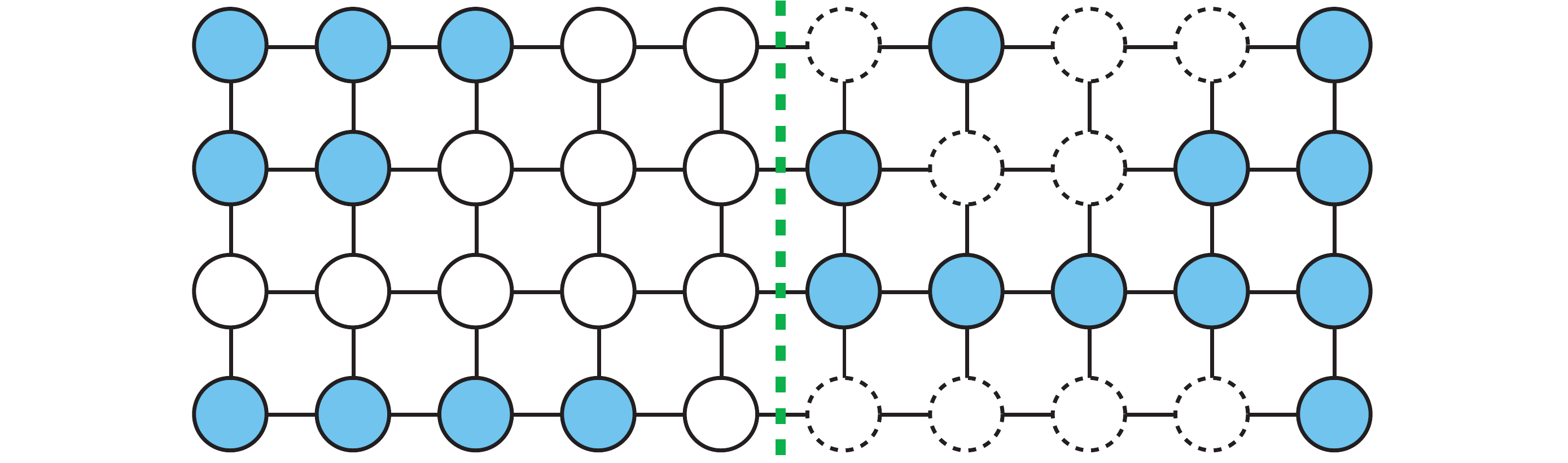}
\end{overpic}
\end{center}
\caption{\label{fig:permute} We would like to reconfigure robots on the left 
$5\times 4$ half of Fig.~\ref{fig:split} to the configuration as shown. The 
right $5\times 4$ portion will not be touched in the operation. In this 
configuration, robots do not need to move between different rows to 
complete the grouping operation, using Lemma~\ref{l:swap}.}
\end{figure}

To perform the reconfiguration, we begin by assigning labels to the robots 
as illustrated in Fig.~\ref{fig:matching-setup} (see the description in the 
figure on how the labels are assigned in a straightforward manner, which takes 
linear time with respect to the size of the grid). These labels are only for 
pairing up robots for the reconfiguration; keep in mind that the shaded robots 
are in fact indistinguishable in the execution of the grouping operation.
\begin{figure}[h]
\begin{center}
\begin{overpic}[width={\ifoc 3.2in \else 3.04in \fi},tics=5]{./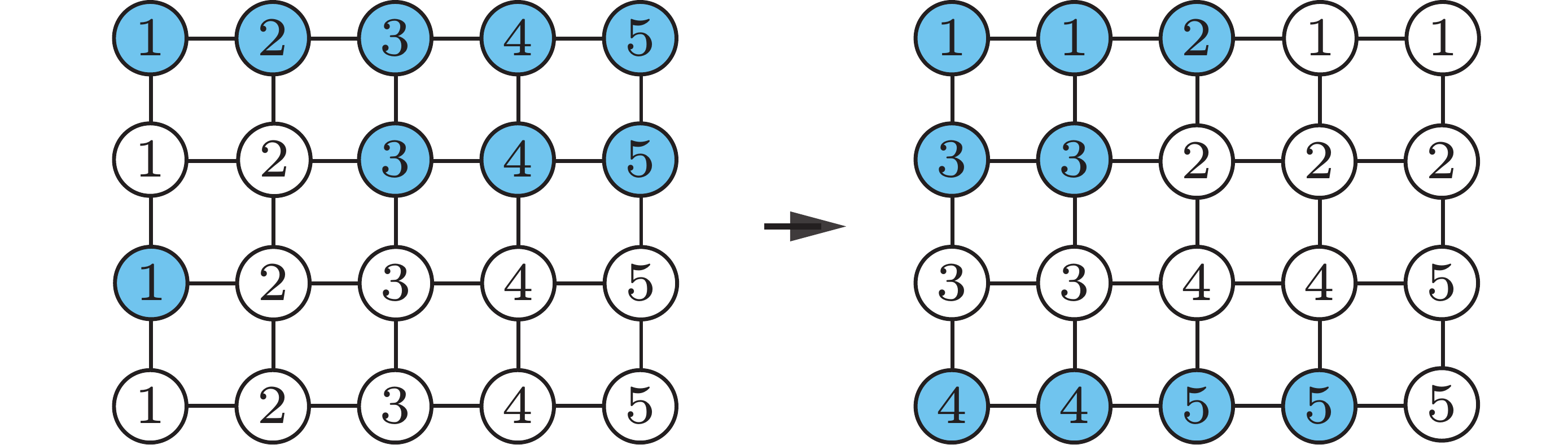}
\put(23, -5){{\small (a)}}
\put(74, -5){{\small (b)}}
\end{overpic}
\end{center}
\caption{\label{fig:matching-setup}(a) and (b) correspond to the left $5\times 4$ 
grids from Fig.~\ref{fig:split} and Fig.~\ref{fig:permute}, respectively. We would 
like to reconfigure the shaded robots to go from (a) to (b) (ignoring the labels). 
In (a), shaded robots are assigned labels based on the column they belong to. In 
(b), from top to bottom and left to right, we sequentially assign each shaded labeled 
robot from (a) a goal. The same is done to the unshaded robots.}
\end{figure}

With the labeling, we set up a bipartite graph as follows. One of the partite set 
$\{v_i^1\}$ (e.g., $\{v_1^1, \ldots, v_5^1\}$ in Fig.~\ref{fig:permute-bipartite}) 
represents the initial columns and the other set $\{v_j^2\}$ (e.g., $\{v_1^2, 
\ldots, v_5^2\}$ in Fig.~\ref{fig:permute-bipartite}) the goal columns. We draw an 
edge between $v_i^1$ and $v_j^2$ if a shaded robot labeled $i$ ends up at a goal
column $j$. For example, in Fig.~\ref{fig:matching-setup}, shaded robots with label 
$1$ in (a) ends up at columns $1$ and  $2$ in (b), yielding the edges $(v_1^1, v_1^2)$ and 
$(v_1^1, v_2^2)$ in Fig.~\ref{fig:permute-bipartite}. If a goal column $j$ contains 
multiple shaded robots with label $i$, then multiple edges between $v_i^1$ and 
$v_j^2$ are added. Note that, if we also add the edges for the unshaded robots in 
Fig.~\ref{fig:matching-setup} in a similar manner, the bipartite graph will be 
$d$-regular where $d$ is the number of rows in the original grid 
($d = 4$ in the provided example). 
\begin{figure}[h]
\vspace*{2mm}
\begin{center}
\begin{overpic}[width={\ifoc 3.6in \else 3in \fi},tics=5]{./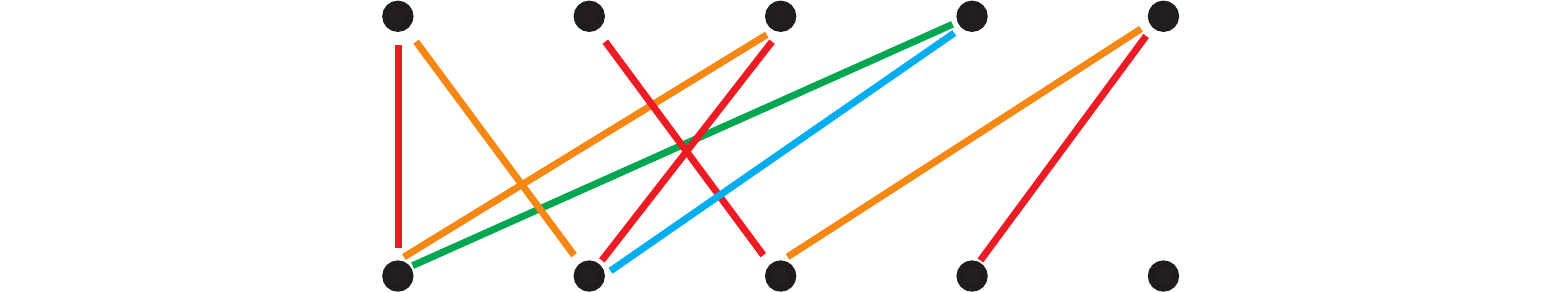}
\put(24,20.5){{\small $v_1^1$}}
\put(35,20.5){{\small $v_2^1$}}
\put(48.5,20.5){{\small $v_3^1$}}
\put(60,20.5){{\small $v_4^1$}}
\put(72,20.5){{\small $v_5^1$}}
\put(24,-4){{\small $v_1^2$}}
\put(35,-4){{\small $v_2^2$}}
\put(48.5,-4){{\small $v_3^2$}}
\put(60,-4){{\small $v_4^2$}}
\put(72,-4){{\small $v_5^2$}}
\end{overpic}
\end{center}
\vspace*{3mm}
\caption{\label{fig:permute-bipartite} A bipartite graph constructed for 
rearranging robots. The $4$ colorings of the edges indicate a possible set of 
$4$ matchings, which are $\{1-1, 2-3, 3-2, 5-4\}$ (red), $\{1-2, 3-1, 5-3\}$ 
(orange), $\{4-1\}$ (green),  $\{4-2\}$ (cyan).}  
\end{figure}

With the bipartite graph constructed, we proceed to obtain a set of up to $d$ 
maximum matchings. We note that this is always possible because our bipartite 
graph is a sub graph of a $d$-regular bipartite graph (By Hall's theorem 
\cite{hall1935representatives}, a perfect matching may be obtained on a 
$d$-regular bipartite graph, the removal of which leaves a $(d-1)$-regular 
bipartite graph). From the obtained set of matchings (e.g., using Hopcroft-Karp 
\cite{hopkroft1973n5}), we permute with Lemma~\ref{l:distribute} to distribute 
the robots vertically so that a robot matched in the $i$-th matching gets moved 
to the $i$-th row. In our example, the first set is $\{1-1, 2-3, 3-2, 5-4\}$, 
which means that a set of three shaded robots labeled $1, 2, 3$, and $5$ 
should be moved to the first row. Doing this for all matching sets shown 
in Fig.~\ref{fig:matching-setup}(a) yields the configuration in 
Fig.~\ref{fig:permute-steps}(a). Then, in a second round, the robots are 
permuted within their row, again using the matching result. In the example, 
the first matching set $\{1-1, 2-3, 3-2, 5-4\}$ says that robots $1, 2, 3$, 
and $5$ on the first row should be moved to columns $1, 3, 2$, and $4$. We 
note that going from $1,2,3,5$ to $1,3,2,4$ is possible with 
Lemma~\ref{l:distribute} because the labels are nominal; we only need to move 
the four indistinguishable robots to columns $1, 2, 3$, and $4$. For the 
configuration in Fig.~\ref{fig:permute-steps}(a), this round yields the 
configuration in Fig.~\ref{fig:permute-steps}(b). We note that the bipartite 
matching technique mentioned here was due to \cite{SzeYu2017}, in which a 
variation of it is used for a different reconfiguration problem.
\begin{figure}[h]
\begin{center}
\begin{overpic}[width={\ifoc 3.2in \else 3.04in \fi},tics=5]{./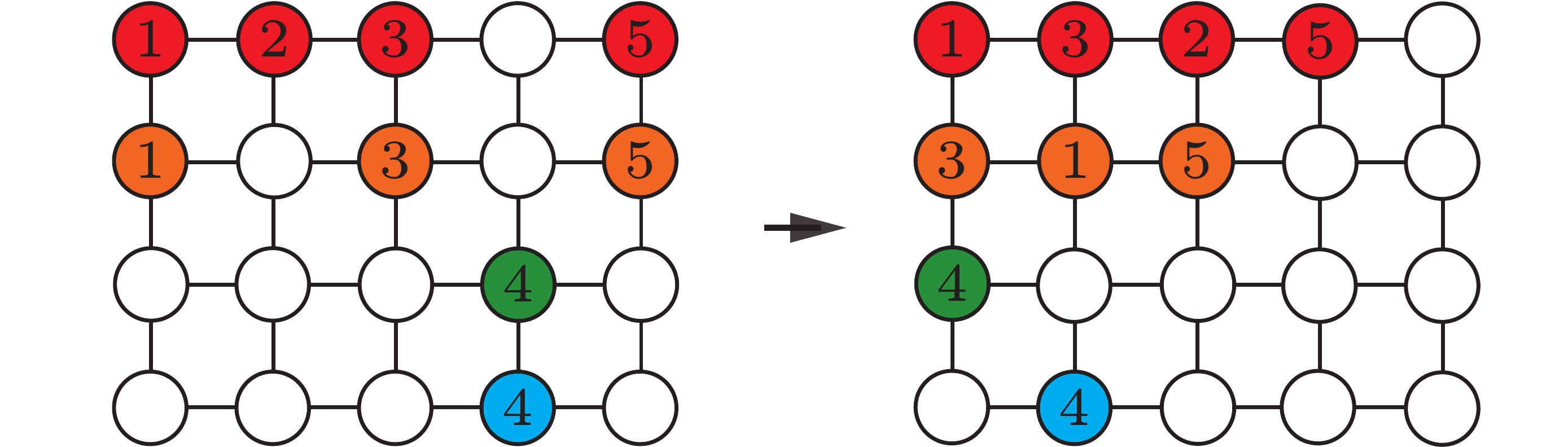}
\put(23, -5){{\small (a)}}
\put(74, -5){{\small (b)}}
\end{overpic}
\end{center}
\caption{\label{fig:permute-steps}(a) The initial permutation of columns of
Fig.~\ref{fig:matching-setup}(a) using the bipartite matching result. (b) A 
second row-based permutation of (a) using the bipartite matching result. Our 
procedure operates following the sequence Fig.~\ref{fig:matching-setup}(a) 
$\to$ Fig.~\ref{fig:permute-steps}(a) $\to$ Fig.~\ref{fig:permute-steps}(b) 
$\to$ Fig.~\ref{fig:matching-setup}(b).}
\end{figure}

We observe that the labeled robots that need to be moved now are all in the 
correct columns. One last column permutation then moves the robots in place. 
In the example, this is going from Fig.~\ref{fig:permute-steps}(b) to 
Fig.~\ref{fig:matching-setup}(b). We summarize the the discussion in the 
following lemma.

\begin{lemma}\label{l:2d-shuffle}On an $m_1 \times m_2$ grid, the reconfiguration 
of a group of indistinguishable robots between two arbitrary configurations can be 
completed using $O(m_1 + m_2)$ makespan in $O(m_1^2m_2 + m_1m_2^2)$ time. 
\end{lemma}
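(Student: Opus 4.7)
The plan is to execute the three-phase shuffling procedure sketched in the preceding discussion and verify that each phase meets the claimed makespan and running-time bounds. Phase~0 assigns pairing labels: each ``shaded'' robot of the reconfiguration class receives its source-column index as its label in the initial configuration, while goal positions are labeled in a top-to-bottom, left-to-right sweep so that the multiset of labels appearing at the goal exactly matches the column counts at the source. Since the total number of shaded robots in the initial and goal configurations coincide, the sweep completes consistently in $O(m_1 m_2)$ time.

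I then construct the bipartite multigraph $H$ whose partite sets $\{v_i^1\}$ and $\{v_j^2\}$ index the $m_1$ source and target columns, with an edge $(v_i^1, v_j^2)$ for each labeled shaded robot starting in column $i$ and destined for column $j$. Formally including the symmetric edges for the complementary unshaded robots turns $H$ into an $m_2$-regular bipartite multigraph (one edge per row per column pair). By K\"onig's edge-coloring theorem, a consequence of the Hall-style argument already invoked in the text, $H$ decomposes into $m_2$ perfect matchings, which I extract iteratively with Hopcroft--Karp, removing matched edges after each call. Only the edges corresponding to shaded robots are retained for the actual routing; the unshaded edges are bookkeeping.

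With the decomposition in hand, the reconfiguration proceeds in three rounds, each a parallel family of path-based distributions governed by Lemma~\ref{l:distribute} acting on vertex-disjoint paths. Round~1 permutes each source column vertically so that the shaded robot whose outgoing edge lies in the $i$-th matching is moved to row $i$; Round~2 permutes each row horizontally so that each shaded robot reaches its target column, as dictated by the matching label; Round~3 permutes each target column vertically to place each robot at its final destination. Every path used is a row or column, of length at most $\max(m_1,m_2)$, so each round costs $O(m_1 + m_2)$ makespan and the total makespan remains $O(m_1 + m_2)$.

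The running time is dominated by emitting explicit move schedules. A single invocation of Lemma~\ref{l:distribute} on a path of length $\ell$ may involve $\Theta(\ell)$ robots moving for $\Theta(\ell)$ steps and therefore costs $\Theta(\ell^2)$ time to write down; the matching extraction is strictly lower order. Summing $m_1$ vertical distributions of length $m_2$ with $m_2$ horizontal distributions of length $m_1$ across the three rounds yields the claimed $O(m_1 m_2^2 + m_1^2 m_2)$ bound. The main delicate point I anticipate is verifying that the top-to-bottom, left-to-right goal-labeling indeed preserves the column-count multiplicities that make $H$ an $m_2$-regular multigraph; once that is confirmed, correctness of each round follows by a direct appeal to Lemma~\ref{l:distribute}, and the time accounting reduces to summing path lengths across the parallel distributions.
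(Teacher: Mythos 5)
Your construction mirrors the paper's proof almost exactly: the same column-based labeling, the same $m_2$-regular bipartite multigraph over the $m_1$ columns, the same decomposition into $m_2$ perfect matchings, and the same three-round column/row/column permutation schedule driven by Lemma~\ref{l:distribute}. The makespan accounting and the $O(m_1^2m_2 + m_1m_2^2)$ cost of writing down the path-based distributions are both correct and match the paper.

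The one genuine gap is in your claim that the matching extraction is ``strictly lower order.'' Iterating Hopcroft--Karp $m_2$ times on a bipartite graph with $|V| = 2m_1$ vertices and up to $|E| = m_1m_2$ edges costs $O(m_2 \cdot m_1 m_2 \sqrt{m_1}) = O(m_1^{3/2} m_2^2)$, which exceeds the claimed bound whenever $m_2 = \omega(\sqrt{m_1})$; for a square $m \times m$ grid this is $O(m^{3.5})$ against the stated $O(m^3)$. The paper sidesteps this by exploiting regularity: a perfect matching in a $d$-regular bipartite graph can be found in $O(|E|)$ time (Cole et al.), so $m_2$ successive extractions cost $O(m_2 \cdot m_1 m_2) = O(m_1 m_2^2)$, which is absorbed into the stated bound. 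Equivalently, one can Euler-partition the regular multigraph to obtain the full edge coloring into $m_2$ matchings in near-linear time. Substituting either of these for Hopcroft--Karp repairs the argument; everything else in your proposal stands.
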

\begin{proof}
The procedure is already fully described; here, we analyze its performance. The 
procedure operates in three phases, each requiring a makespan of either $O(m_1)$ 
or $O(m_2)$ (because only one dimension of the $m_1\times m_2$ grid is involved 
in each phase). The overall makespan is then $O(m_1 + m_2)$. Regarding the 
computation time, each invocation of the procedure from Lemma~\ref{l:distribute} 
or Lemma~\ref{l:swap} on an $m_1 \times m_2$ grid takes $O(m_1^2)$ or $O(m_2^2)$ 
time; doing these in parallel on the grid then takes $O(m_1^2m_2 + m_1m_2^2) = 
O(m_1^2m_2)$ time. For doing the bipartite matching, we may invoke an $O(|E|)$ 
time matching algoithm \cite{cole2001edge} $d$ times to get a $O(d|E|)$ running time where 
$d = m_2$ and $|E_B| = m_1m_2$ are the degree and the number of edges of the 
$d$-regular bipartite graph. The total time spent on matching is $O(m_1m_2^2)$. 
The overall running time is then $O(m_1^2m_2 + m_1m_2^2)$.
\end{proof}

We now generalize Lemma~\ref{l:2d-shuffle} to $k\ge 2$ dimensions. 

\begin{theorem}\label{t:kd-shuffle}On an $m_1 \times \ldots \times m_k$ grid, the 
reconfiguration of a group of indistinguishable robots between two arbitrary 
configurations can be completed using 
\[
O(\sum_{i = 1}^k m_i)
\]
makespan and requires time 
\begin{align}\label{f:reconfigure}
O((\prod_{i=1}^km_i)(\sum_1^km_i)).
\end{align}
\end{theorem}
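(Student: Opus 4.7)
The plan is to induct on the dimension $k$, using Lemma~\ref{l:2d-shuffle} as the base case and peeling off the last axis at each step. The inductive step mirrors the three-phase strategy of the 2D case (permute along an axis, shuffle in the orthogonal slice, permute along the axis again), but with the 2D subroutine replaced by the inductive hypothesis in dimension $k-1$.

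For the inductive step, I view the $m_1 \times \cdots \times m_k$ grid as $m_k$ copies of $(k-1)$-dimensional slabs stacked along dimension $k$, and I partition positions into $\prod_{i<k} m_i$ \emph{fibers}, each a length-$m_k$ line in dimension $k$. After pairing the group's initial and goal positions arbitrarily (and doing the same for the non-group positions so that every fiber contributes exactly $m_k$ pairings), I construct a bipartite multigraph whose two sides each have one vertex per fiber, with one edge per pairing joining the initial-fiber vertex to the goal-fiber vertex. Including the non-group pairings makes this graph $m_k$-regular, so by Hall's theorem it decomposes into $m_k$ perfect matchings $M_1, \ldots, M_{m_k}$. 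Phase 1 uses Lemma~\ref{l:distribute} in parallel on every fiber to place the robot paired under $M_i$ into the slab-$i$ position, in $O(m_k)$ makespan. After Phase 1, slab $i$ holds exactly the robots whose goals lie in the specified goal fibers of slab $i$, so Phase 2 invokes the inductive hypothesis on all $m_k$ slabs in parallel for $O(\sum_{i<k} m_i)$ makespan. Phase 3 finishes with a second per-fiber permutation along dimension $k$, adding $O(m_k)$. The recursion $T(k) = T(k-1) + O(m_k)$ then unrolls to the claimed $O(\sum_i m_i)$ makespan.

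For running time, each of Phases 1 and 3 applies Lemma~\ref{l:distribute} on $\prod_{i<k} m_i$ parallel fibers of length $m_k$ at cost $O(m_k^2)$ apiece, totaling $O((\prod_i m_i)\, m_k)$. The matching decomposition, following the 2D proof, uses $m_k$ rounds of an $O(|E|)$ edge-matching routine on a graph with $(\prod_{i<k} m_i)\, m_k$ edges, again $O((\prod_i m_i)\, m_k)$. The $m_k$ parallel recursive calls contribute $m_k \cdot R(k-1) = O((\prod_i m_i)(\sum_{i<k} m_i))$ by induction, and summing yields the claimed $O((\prod_i m_i)(\sum_i m_i))$ bound. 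The main subtlety to nail down is that the augmented bipartite graph is genuinely $m_k$-regular---which requires choosing the non-group pairings so that every fiber contributes exactly $m_k$ edges on each side---and that each slab after Phase 1 presents a fully specified $(k-1)$-dimensional subinstance to which the inductive hypothesis applies cleanly.
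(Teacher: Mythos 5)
Your proposal is correct and follows essentially the same route as the paper: decompose along the last axis into fibers/columns, use a regularized $m_k$-regular bipartite multigraph and Hall's theorem to split the assignment into $m_k$ matchings, permute within fibers, recurse on the $(k-1)$-dimensional slabs in parallel, and permute within fibers again, with the same makespan recursion and running-time tally. The subtlety you flag about regularity is handled in the paper the same way, by including the non-group robots' pairings so every fiber contributes exactly $m_k$ edges.
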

\begin{proof}
Since the case of $k = 3$ is of practical importance, we first provide the 
proof for this case, which also outlines the inductive proof approach for 
general $k$. On an $m_1 \times m_2 \times m_3$ grid, we partition the gird 
into $m_1m_2$ columns of size $m_3$, in the natural way. To build the 
bipartite graph, robots to be moved will be labeled based on the column it 
belongs to, yielding $m_1m_2$ labels. The goals for these robots are assigned 
sequentially, similar to how it is done in the 2D case. After building the 
bipartite graph as before and performing the matching, the robots to be 
moved are partitioned into $m_3$ {\em layers} (a layer in the 3D case 
corresponds to a {\em row} in the 2D case) with each layer being an $m_1 
\times m_2$ grid.

Then, as in the 2D case, a column permutation is done for each of the $m_1m_2$ 
columns, in parallel. To be able to move the robots on each layer which is a 
$m_1\times m_2$ grid, we invoke Lemma~\ref{l:2d-shuffle} in parallel on all 
$m_3$ layers. This is then followed by a final parallel column permutation. 

To count the makespan, the initial and final column permutations require 
$O(m_3)$ makespan and working with the layers requires $O(m_1 + m_2)$ makespan, 
yielding a total makespan of $O(m_1 + m_2 + m_3)$. For running time, at 
the top layer, the bipartite matching process creates a bipartite graph 
$G_B = (V_B, E_B)$ with $|E_B| = m_1m_2m_3$. The time for doing $d = m_3$ 
matchings is then $O(m_1m_2m_3^2)$. The initial and final column permutation 
takes time $O(m_1m_2m_3^2)$ (because we need to arrange $m_1m_2$ columns 
of size $m_3$ each). For handling the $m_3$ layers of $m_1 \times m_2$ grids, by 
Lemma~\ref{l:2d-shuffle}, it takes time $O(m_1^2m_2m_3 + m_1m_2^2m_3)$. The 
overall running time is then $O(m_1^2m_2m_3 + m_1m_2^2m_3 + m_1m_2m_3^2)$.

For constructing the inductive proof, suppose for dimension $k$, our makespan 
hypothesis for reconfiguration is $O(m_1 + \ldots + m_k)$. The running time 
hypothesis is as given in the theorem statement. 
For dimension $k+1$, the problem is first approached at the top level to generate
$m_{k+1}$ ``layers'' of size $\prod_{i=1}^km_i$ each (corresponding to a 
$m_1\times \ldots \times m_k$ grid). After permuting 
$\prod_{i=1}^km_i$ columns of size $m_{k+1}$, $m_{k+1}$  
$k$-dimensional problems are then solved via the induction hypothesis. 
Lastly, another column permutation is performed to complete the reconfiguration. 

To count the makespan required, we note that at dimension $k$, the initial and 
final column permutations require a makespan of $O(m_{k+1})$ as all 
$\prod_{i=1}^km_i$ columns of size $m_{k+1}$ can be operated on in parallel. By the 
induction hypothesis, the total makespan is then $O(m_1 + \ldots + m_{k+1})$,
which actually does not directly depend on the dimension. The running time 
for the first matching operation takes $O((\prod_{i=1}^{k}m_i)m_{k+1}^2)$ 
time. The running time for the initial and final column permutations require 
calling the $O(m_{k+1}^2)$ routine (Lemma~\ref{l:distribute}) $\prod_{i=1}^km_i$ 
times, taking the same amount of time. By the induction hypothesis, handling the 
$m_{k+1}$ layers take time $m_{k+1}$ multiple of~\eqref{f:reconfigure}. Putting 
these together yields again~\eqref{f:reconfigure} with $k$ replaced by $k+1$. 
\end{proof}

A case of special interest is when all $m_i$, $1 \le i \le k$, are about the same.

\begin{corollary}\label{c:uniform}
On a $k$-dimensional grid with all sides having lengths $O(|V|^{\frac{1}{k}})$, 
the reconfiguration of a group of indistinguishable robots between two arbitrary 
configurations can be completed using $O(k|V|^{\frac{1}{k}})$ makespan and 
$O(|V|^{\frac{k+1}{k}})$ time.
\end{corollary}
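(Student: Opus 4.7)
The plan is to obtain the corollary as a direct specialization of Theorem~\ref{t:kd-shuffle}. Since the reconfiguration procedure in that theorem already handles an arbitrary $k$-dimensional grid, no new algorithmic content is required; the only task is to substitute the assumed side-length bound into the two complexity expressions.

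First I would write each $m_i = O(|V|^{1/k})$, and then plug into the makespan bound $O(\sum_{i=1}^k m_i)$ from Theorem~\ref{t:kd-shuffle} to obtain $O\bigl(\sum_{i=1}^k |V|^{1/k}\bigr) = O(k|V|^{1/k})$, which matches the stated makespan. Next I would handle the running time: note that $\prod_{i=1}^k m_i = |V|$ by definition of the grid, and $\sum_{i=1}^k m_i = O(k|V|^{1/k})$ as above, so the running time in~\eqref{f:reconfigure} becomes
\[
O\!\left(|V|\cdot k|V|^{1/k}\right) = O\!\left(k\,|V|^{(k+1)/k}\right).
\]
Since $k$ is assumed to be a fixed constant throughout the paper, the factor of $k$ is absorbed into the big-$O$, yielding the stated $O(|V|^{(k+1)/k})$ time bound.

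There is no real obstacle here; the only thing to be slightly careful about is the treatment of the dimension $k$. Strictly speaking the makespan carries an explicit factor of $k$ (which is retained in the statement, consistent with how the theorem was phrased), while in the running time the analogous factor is hidden inside the $O(\cdot)$ because $k$ is a constant. I would make this convention explicit in a one-line remark so the asymmetric appearance of $k$ between the two bounds is not mistaken for an oversight.
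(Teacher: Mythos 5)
Your proof is correct and is exactly the substitution the paper intends: the corollary is stated without proof as an immediate specialization of Theorem~\ref{t:kd-shuffle}, obtained by plugging $m_i = O(|V|^{1/k})$ into the makespan bound $O(\sum_i m_i)$ and the running-time bound~\eqref{f:reconfigure}. Your remark about the factor of $k$ being explicit in the makespan but absorbed into the $O(\cdot)$ in the running time is a reasonable and accurate reading of the paper's conventions.
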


Replacing the tree-routing based grouping operation in \sag with the updated, 
staged grouping routine, we obtain the following improved result. 

\begin{theorem}\label{t:isag}Let $(G, X_I, X_G)$ be an \mpp instance
with $G$ being an $m_1 \times \ldots \times m_k$ grid for some $k \ge 2$. 
Then, a solution with 
\[
O(k\sum_{i = 1}^k m_i)
\]
makespan can be computed in time
\begin{align}\label{f:sagmpp}
O(k(\prod_{i=1}^km_i)(\sum_1^km_i)).
\end{align}
\end{theorem}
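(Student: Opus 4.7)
The plan is to adapt the \sag algorithm of \cite{yu2017constant} to arbitrary dimension $k$, preserving its recursive split-and-group skeleton but replacing the tree-routing grouping subroutine with the staged bipartite-matching reconfiguration of Theorem~\ref{t:kd-shuffle}. The algorithm, which I will call \isag, proceeds on the $m_1 \times \ldots \times m_k$ grid $G$ (with $m_1 \geq \ldots \geq m_k$) as follows: split $G$ into two equal halves $G_1, G_2$ along the longest dimension $m_1$; classify each robot by which half contains its goal; treat the ``to-be-swapped'' robots in $G_1$ as indistinguishable and use Theorem~\ref{t:kd-shuffle} to reconfigure them so that along each row parallel to the split, the swap count matches that of the corresponding row in $G_2$; apply Lemma~\ref{l:swap} in parallel across the cut to deliver each such robot to its destination half; then recursively solve $G_1$ and $G_2$ in parallel.

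For the makespan analysis, I would trace any root-to-leaf path in the recursion tree. At a cut along dimension $j$ with current sub-grid sides $n_1, \ldots, n_k$, Theorem~\ref{t:kd-shuffle} and Lemma~\ref{l:swap} together contribute $O(\sum_i n_i)$ makespan. Since the cut always selects the longest current dimension, $\sum_i n_i \leq k \cdot n_j$. Charging this $O(k n_j)$ cost to dimension $j$ and summing the geometric series $m_j + m_j/2 + m_j/4 + \cdots$ over all cuts along $j$ on the path yields $O(k m_j)$; summing over all $k$ dimensions gives the advertised $O(k \sum_i m_i)$ makespan bound.

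For the running time, at each recursion node with sides $n_1, \ldots, n_k$, Theorem~\ref{t:kd-shuffle} bounds the grouping work by $O((\prod_i n_i)(\sum_i n_i))$. Across the $2^\ell$ sub-grids at depth $\ell$, the aggregate $\sum_{\text{node}} \prod_i n_i$ equals $\prod_i m_i$ because volumes partition, while the representative $\sum_i n_i$ shrinks geometrically with $\ell$ because the always-cut-longest rule keeps sub-grid sides balanced (so at depth $\ell$ the sum is on the order of $(\sum_i m_i)/2^{\ell/k}$). Summing the resulting geometric series over $\ell$ telescopes to $O(k (\prod_i m_i)(\sum_i m_i))$, which is exactly \eqref{f:sagmpp}. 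Correctness is inherited directly from the original \sag argument since only the grouping subroutine has been replaced.

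The main obstacle is the running time analysis, specifically verifying that the always-cut-longest-dimension policy keeps the depth-$\ell$ subproblems sufficiently balanced that the geometric shrinkage of $\sum_i n_i$ holds uniformly across the tree; this is what suppresses an otherwise anticipated $O(\log |V|)$ factor and yields the clean $O(k)$ dependence on dimension. The makespan bound, by contrast, is straightforward once one notices that only a single root-to-leaf path contributes (because sibling sub-grids execute in parallel on disjoint vertex sets).
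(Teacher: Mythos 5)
Your proposal is correct and follows essentially the same route as the paper: replace \sag's grouping subroutine with the reconfiguration of Theorem~\ref{t:kd-shuffle} plus a parallel application of Lemma~\ref{l:swap} across the cut, then bound makespan and running time by geometric series over the recursion. The only real difference is bookkeeping: the paper cycles the cut through dimensions $1,\ldots,k$ so that after every $k$ rounds all sides are exactly halved (making the geometric decay of both cost measures immediate), whereas your always-cut-longest policy requires the balancedness claim $\sum_i n_i = O\bigl((\sum_i m_i)2^{-\ell/k}\bigr)$ that you correctly flag as the remaining obstacle --- adopting the cyclic order dissolves that obstacle without changing anything else in your argument.
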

\begin{proof}
Similar to \sag, standard divide-and-conquer is applied that iteratively divides 
$G$ and subsequent partitions into equal halves; the grouping operation is then 
applied. For the grouping operation, after reconfiguration on a half grid, a parallel
invocation of Lemma~\ref{l:swap} is needed to move the robots  across the splitting
boundary, which takes at most $O(m_1^2m_2\ldots m_k)$ time. Because 
$O(m_1^2m_2\ldots m_k)$ is already a term in~\eqref{f:reconfigure}, this additional 
operation does not contribute to more computation time in an iteration of \sag. 

For a $k$-dimensional grid, in the first $k$ iterations, we may choose the $d$-th round 
to divide dimension $d$ into two halves (i.e., $m_d' = \frac{m_d}{2}$). Following this 
scheme, for the $d$-th round, the makespan is 
\[
O(\frac{m_1}{2} + \ldots + \frac{m_{d-1}}{2} + m_{d} + \ldots + m_k).
\]
For computation time, we need to operate on $2^{d-1}$ subproblems with each subproblem
requiring time no more than 
\[
O(2^{-d+1}(\prod_{i=1}^km_i)(\sum_1^km_i)).
\]

That is, each of the first $k$ iterations takes no more time than~\eqref{f:reconfigure}. 
Tallying up, the first $k$ rounds require makespan and running time as given 
in the theorem statement. 

After $k$ rounds of division, all dimensions are halved. To complete the next
$k$ rounds, the required makespan is halved and the computation time shrinks 
even more (since it's quadratic in at least one of the dimensions and super 
linear in the rest). Subsequently, the makespan and the running time for the 
first $k$ rounds dominate. 
\end{proof}

To distinguish our modification with \sag, we denote the improved \sag algorithm 
as \isag. We mention that \isag runs in quadratic $O(|V|^2)$ time if we allow 
$G$ to be degenerate, i.e., $m_1 = \Omega(|V|)$. To see that this is true, we 
observe that the term inside~\eqref{f:sagmpp} is bounded by 
\[
k^2 m_1\prod_{i = 1}^k m_i = k^2|V|^2 \prod_{i = 1}^{k-1} m_i^{-1} 
< k^22^{-k+1}|V|^2
\]
because $m_i \ge 2$. The last term is $O(|V|^2)$ since $k^22^{-k+1}$ is bounded 
by some small constant. As noted, the quadratic bound is sometimes necessary 
when $G$ is degenerate (see discussion following Lemma~\ref{l:swap}). We note 
that in this case, the running time lower bound can also be $\Omega(|V|^2)$. 
When $G$ is non-degenerate, \isag runs in a sub-quadratic $o(|V|^2)$ time that 
approaches $O(|V|)$.  

We conclude this section with a corollary, which will be useful later, that directly 
follows Corollary~\ref{c:uniform} and Theorem~\ref{t:isag}. 
\begin{corollary}\label{c:sagmpp23}
When all dimensions of the underlying grid are of similar magnitude, the makespan 
and computation time for solving an \mpp instance are $O(\sqrt{|V|})$ and 
$O(|V|^{\frac{3}{2}})$, respectively, for two dimensions. For three dimensions, 
these are $O(|V|^{\frac{1}{3}})$ and $O(|V|^{\frac{4}{3}})$, respectively. For 
general $k$, these are $O(k^2|V|^{\frac{1}{k}})$ and $O(k^2|V|^{\frac{k+1}{k}})$, 
respectively.
\end{corollary}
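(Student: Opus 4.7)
The plan is to treat this as a direct substitution exercise: apply Theorem~\ref{t:isag} under the additional assumption that all side lengths are comparable, so that $m_i = \Theta(|V|^{1/k})$ for every $1 \le i \le k$. Under this assumption $\sum_{i=1}^k m_i = \Theta(k |V|^{1/k})$ and $\prod_{i=1}^k m_i = |V|$ by definition, and everything else is a matter of plugging these estimates into the two bounds stated in Theorem~\ref{t:isag}.

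More concretely, first I would handle the general $k$ case. The makespan bound $O(k \sum_i m_i)$ from Theorem~\ref{t:isag} immediately becomes $O(k \cdot k \cdot |V|^{1/k}) = O(k^2 |V|^{1/k})$, and the running time bound $O\bigl(k (\prod_i m_i)(\sum_i m_i)\bigr)$ becomes $O\bigl(k \cdot |V| \cdot k|V|^{1/k}\bigr) = O(k^2 |V|^{(k+1)/k})$. Then I would specialize: setting $k = 2$ yields the $O(\sqrt{|V|})$ makespan and $O(|V|^{3/2})$ running time (the constant $k^2 = 4$ being absorbed in the big-$O$), and setting $k = 3$ yields $O(|V|^{1/3})$ makespan and $O(|V|^{4/3})$ time.

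Since Corollary~\ref{c:uniform} already packages exactly this substitution for the reconfiguration subroutine that dominates the cost inside each iteration of \isag, the only thing left to verify is that no other term in the running-time analysis of Theorem~\ref{t:isag} grows faster than $k^2 |V|^{(k+1)/k}$ in the uniform regime. I would briefly note that the $2^{d-1}$-fold replication across subproblems in the $d$-th division round is exactly compensated by the $2^{-d+1}$ shrinkage of each subproblem, so the per-round cost stays bounded by the single-piece reconfiguration cost, and the geometric decay across subsequent rounds keeps the first $k$ rounds dominant just as in the proof of Theorem~\ref{t:isag}.

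There is no real obstacle here; the statement is essentially an algebraic reading of Theorem~\ref{t:isag} with a uniform aspect ratio. The only minor care point is to make sure that the factors of $k$ in front are tracked honestly (rather than hidden in the big-$O$) so that the general-$k$ bounds explicitly display the $k^2$ factor claimed in the corollary, while the $k = 2$ and $k = 3$ specializations absorb the constant and appear in the cleaner form stated.
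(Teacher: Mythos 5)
Your proposal is correct and matches the paper's intent exactly: the paper gives no separate proof for this corollary, stating only that it ``directly follows'' from Theorem~\ref{t:isag} and Corollary~\ref{c:uniform}, which is precisely the substitution $m_i = \Theta(|V|^{1/k})$ you carry out. Your extra check that the divide-and-conquer rounds do not contribute additional cost is already part of the proof of Theorem~\ref{t:isag}, so nothing further is needed.
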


\section{From Average Case to Worst Case: A Solution Sketch for Two Dimensions}\label{section:sketch}
In this section, we highlight, at a high level, why solution produced 
by \isag can be rather undesirable in practice and how its shortcomings can 
be addressed with the \pafalgo (\paf) algorithm. In sketching \paf, we 
resort to the frequent use of figures to illustrate the important steps. 
We emphasize that the steps explained using these pictorial examples are also 
rigorously proved to be correct later in Section~\ref{section:proofs}. Full 
optimality and running time analysis will also be delayed until then. 

\subsection{The Difficulty}
Given an \mpp instance $p = (G, X_I, X_G)$, let the makespan computed by \isag
be denoted as $d_{\isag}(p)$. From an algorithmic perspective, \isag delivers 
$O(1)$-approximate makespan optimal solutions {\em on average}, i.e., 
for a fixed $G$, let all instances of \mpp on $G$ be $\{p_i = (G, X_I^i, X_G^i)\}$, 
then \isag ensures the quantity (as a sum of ratios) 
\[
\sum_i \frac{d_{\isag}(p_i)}{d_g(p_i)}
\]
is a constant. A key assumption in the average case analysis is that all 
instances $\{p_i\}$ for a fixed $G$ are equally likely, implying a uniform 
distribution of problem instances. When this assumption does not hold, as is 
the case in many practical scenarios, \isag no longer guarantees $O(1)$ 
approximation. Such cases may be illustrated with a simple example. On an $m 
\times m$ grid, let an \mpp instance be constructed so that to reach the goal 
configuration, all robots on the outer boundary must rotate synchronously 
once in the clockwise direction (see Fig.~\ref{fig:sag-non-const}). The 
minimum makespan of the instance is $1$ but \isag will incur a makespan of 
$O(m)$ due to its divide-and-conquer approach that agnostically divide the 
grid in the middle. 

\begin{figure}[h!]
\begin{center}
\begin{overpic}[width={\ifoc 2.4in \else 2.4in \fi},tics=5]{./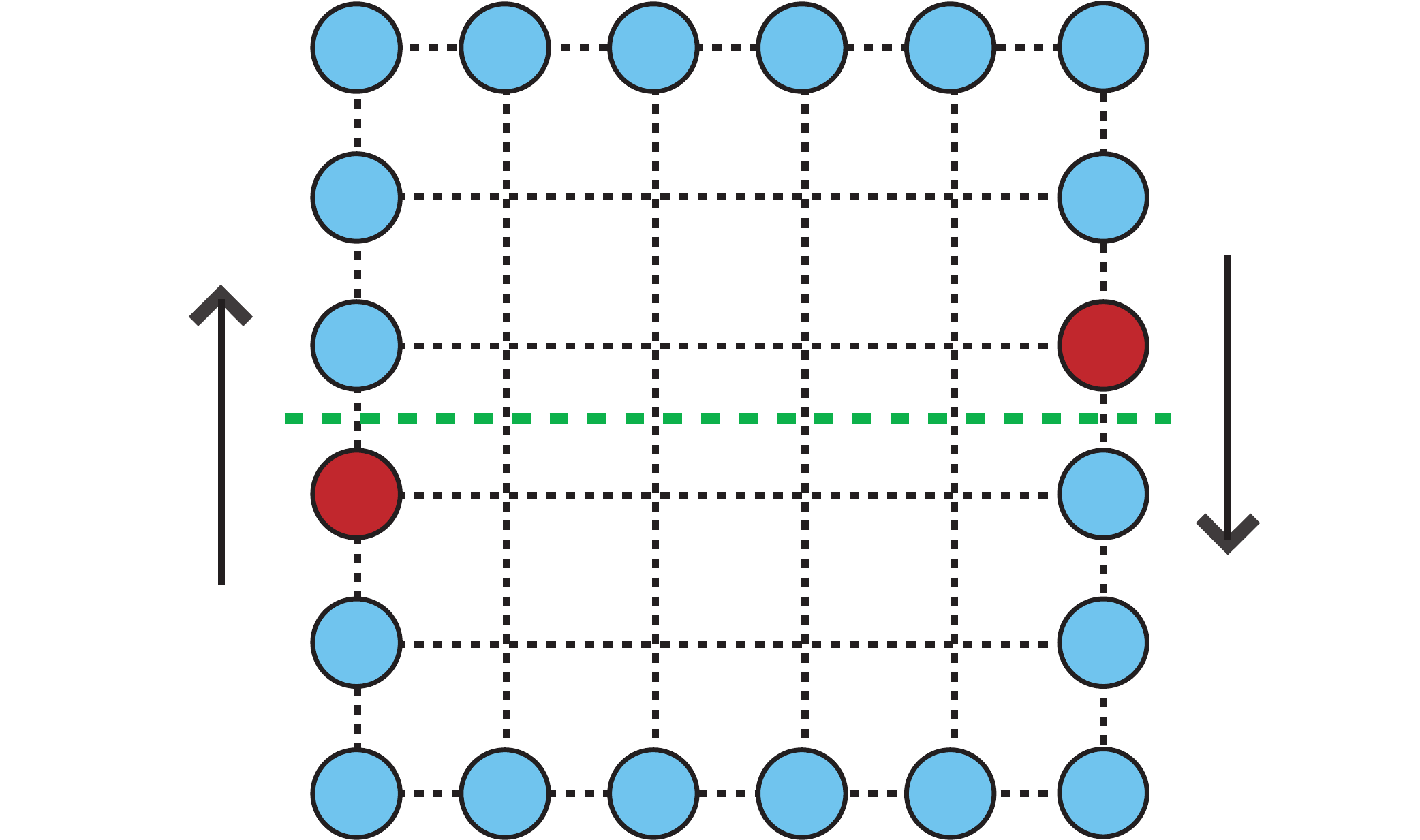}
\end{overpic}
\end{center}
\caption{\label{fig:sag-non-const} An \mpp instance on an $m\times
m$ grid. Solving the instance requires all robots on the outside
perimeter to move clockwise once. \isag will first cause the two (red,
darker shaded) robots to exchange locations, which induces a makespan 
of $O(m)$.}  
\end{figure}

On the other hand, if a polynomial time algorithm can be constructed that 
always produces $O(d_g(p))$ makespan for an arbitrary \mpp instance $p$, 
then $O(1)$-approximate optimal solution can always be guaranteed. 
Naturally, such an algorithm will necessarily require some form of 
divide-and-conquer on top of which the {\em flow} of robots at the 
{\em global} scale must also be dealt with. The key to establishing such 
an algorithm is to be able to recognize the global flow to generate 
appropriate {\em local} routing plans. In terms of the example illustrated 
in Fig.~\ref{fig:sag-non-const}, the two darker shaded (red) robots must 
be routed {\em locally} across the thick dashed (green) boundary lines. This
implies that all the shaded robots must more or less move along synchronously 
around the cycle. A main challenge is how to realize such local-global 
coordination when many such cyclic flows are entangled under maximum 
robot density. 

Here, we mention that the special case of $d_g(p) = 1$ can be easily handled
for an arbitrary dimension $k$. 

\begin{proposition}
Let $G$ be $k$ dimensional grid with $k \ge 2$ and let $p = (G, X_I, X_G)$ 
be an \mpp instance with $d_g(p) = 1$. Then an $O(1)$ makespan plan for 
solving $p$ can be computed in $O(k|V|)$ time. 
\end{proposition}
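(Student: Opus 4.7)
My plan exploits the rigidity imposed by $d_g(p)=1$: every robot either stays put or moves to an adjacent vertex, so the permutation $\sigma := X_G \circ X_I^{-1}$ on $V$ satisfies $\sigma(u)\in\{u\}\cup N(u)$ for every $u\in V$. I would first compute $\sigma$ in $O(k|V|)$ time (scanning each vertex and its at most $2k$ neighbors) and extract its cycle decomposition in $O(|V|)$ time by pointer-chasing. Each non-trivial cycle $u_1\to u_2\to\cdots\to u_L\to u_1$ of $\sigma$ is, by construction, a simple cycle of $G$, and because every grid graph is bipartite such a cycle must have even length. So only three types of cycles occur: length $1$ (fixed points, do nothing), length $2$ (swaps on a single edge of $G$), and length at least $4$.

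The length-$\ge 4$ cycles are easy to deal with. Since the cycles of a permutation decomposition are pairwise vertex-disjoint, I would rotate all of them simultaneously in the direction prescribed by $\sigma$ using a single synchronous move. No two robots exchange locations within such a rotation, since they all shift in the same cyclic direction along a cycle of length at least four, so the model's no-swap constraint is respected. After this one step, every robot not involved in a length-$2$ cycle is already at its goal, and the remaining work is to carry out the swaps encoded by the matching $M\subseteq E(G)$ formed by the length-$2$ cycles.

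For each edge $(u,v)\in M$ I would attach a constant-sized local gadget containing $(u,v)$. Generically this gadget is a $3\times 2$ block of $G$, obtained by extending $(u,v)$ by one unit in some second coordinate direction; this is possible whenever some $m_i\ge 3$, which is guaranteed except in the tiny hypercube case $m_1=\cdots=m_k=2$ where $|V|\le 2^k$ is constant and the whole instance is resolvable by brute force. On a $3\times 2$ gadget the three-step primitive of Figure~\ref{fig:23} performs the requested swap while returning the four helper robots to their original positions. The outstanding issue is executing these gadgets concurrently: I would form a conflict graph on $M$ whose edges link two swaps whose chosen gadgets share a vertex, observe that for fixed $k$ its maximum degree is bounded by a constant (each gadget has six vertices and each vertex lies in only $O(k^2)$ potential gadgets), and color it greedily in $O(k|V|)$ time with $O(1)$ colors. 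Each color class is then run in parallel in three steps, yielding a swap phase of $O(1)$ makespan and an overall makespan of $1+O(1)=O(1)$.

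The crux of the argument is the safe parallel execution of swaps: a helper vertex of one gadget may itself be an endpoint of another swap in $M$, and two swap edges only a couple of hops apart contend for overlapping helpers. The conflict-graph coloring is precisely the device that schedules pairwise non-overlapping gadgets into the same round. Everything else---constructing $\sigma$, extracting its cycle decomposition, picking a gadget for each edge of $M$, and running the greedy coloring---is linear-time bookkeeping that easily fits into the stated $O(k|V|)$ running time.
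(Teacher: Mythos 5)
Your proposal is correct and follows essentially the same route as the paper: decompose the permutation $X_G\circ X_I^{-1}$ into cycles, rotate the disjoint simple cycles of length at least $4$ in one synchronous move, and resolve the remaining $2$-cycles with parallel $3\times 2$ swapping gadgets. You spell out the parallel scheduling of overlapping gadgets via a bounded-degree conflict graph, a detail the paper compresses into ``using parallel swapping operations,'' but the underlying argument and the $O(k|V|)$ bookkeeping are the same.
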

\begin{proof}
In this case, for a given robot $i$, if $X_I(i) \ne X_G(i)$, its goal is just 
one edge away. Starting from any robot $i$, the vertices $v_1 = X_I(i), v_2 = 
X_G(i), v_3 = X_G(X_I^{-1}(v_2)), v_4 = X_G(X_I^{-1}(v_3)), \ldots$ induce a 
cycle on $G$. When such a cycle has two vertices, this represents an exchange of 
two robots. Using parallel swapping operations, such exchanges can be completed
in $O(1)$ makespan, which leave only simple cycles on $G$ that are all 
disjoint. Robots on these simple cycles can then move to their goals in a 
single synchronous move. The total makespan is then $O(1)$ and to compute the 
plan is to simply write down the cycles, which takes time linear with respect
to the size of the grid. The factor $k$ comes from the search branching factor.    
\end{proof}

\subsection{Sketch of \pafalgo}
In sketching the \paf algorithm, we remark that \paf essentially works on 
a problem $(G, X_I, X_G)$ by gradually updating $X_I$. That is, it first creates 
some intermediate $X_G^1$ based on $X_I$ and $X_G$ and solve the problem 
$(G, X_I, X_G^1)$, leaving a new problem $(G, X_G^1, X_G)$. Then, it repeats 
the process to create and solve another problem $(G, X_G^1, X_G^2)$, resulting 
a new problem $(G, X_G^2, X_G)$. The process continues until $X_I$ is updated 
to eventually match $X_G$. It is important to keep this in mind in reading 
the sketch of \paf. 

In our description of \paf in this section, a two-dimensional, $m_1 \times m_2$ 
grid will be assumed. The generalization to a $k$-dimensional grid will use the 
same general approach but require more involved treatment. As the name suggests, 
\paf partitions an \mpp instance on a grid into small pieces and organize the 
flow of robots through these pieces globally. The partition is essentially a 
form of decoupling that includes and is more general than \isag's half-half 
splitting scheme. 

For a given \mpp instance $p = (G, X_I, X_G)$ with $G$ being an $m_1 \times m_2$
grid, \paf starts by computing $d_g(p)$, the distance gap for the 
problem\footnote{Henceforth, we use $d_g$ in place of $d_g(p)$ because the 
instance is always fixed (but arbitrary); $d_g(p)$ is otherwise only used in 
theorem statements when a problem $p$ is being specified.}. In the main case, 
$d_g = o(m_2)$. That is, for any robot $i$, $d(X_I(i), X_G(i)) = o(m_2)$. This 
means that $G$ may be partitioned into square cells of sizes $5d_g \times 
5d_g$ each. This is the {\em partition} operation in \paf (see 
Fig.~\ref{fig:partition} for an illustration). For the moment, we assume that 
a perfect partition can be achieved, i.e., $m_1$ and $m_2$ are both integer 
multiples of $5d_g$; the assumption is justified in 
Section~\ref{section:proofs}.

\begin{figure}[h!]
\begin{center}
\begin{overpic}[width={\ifoc 4in \else 3.2in \fi},tics=5]{./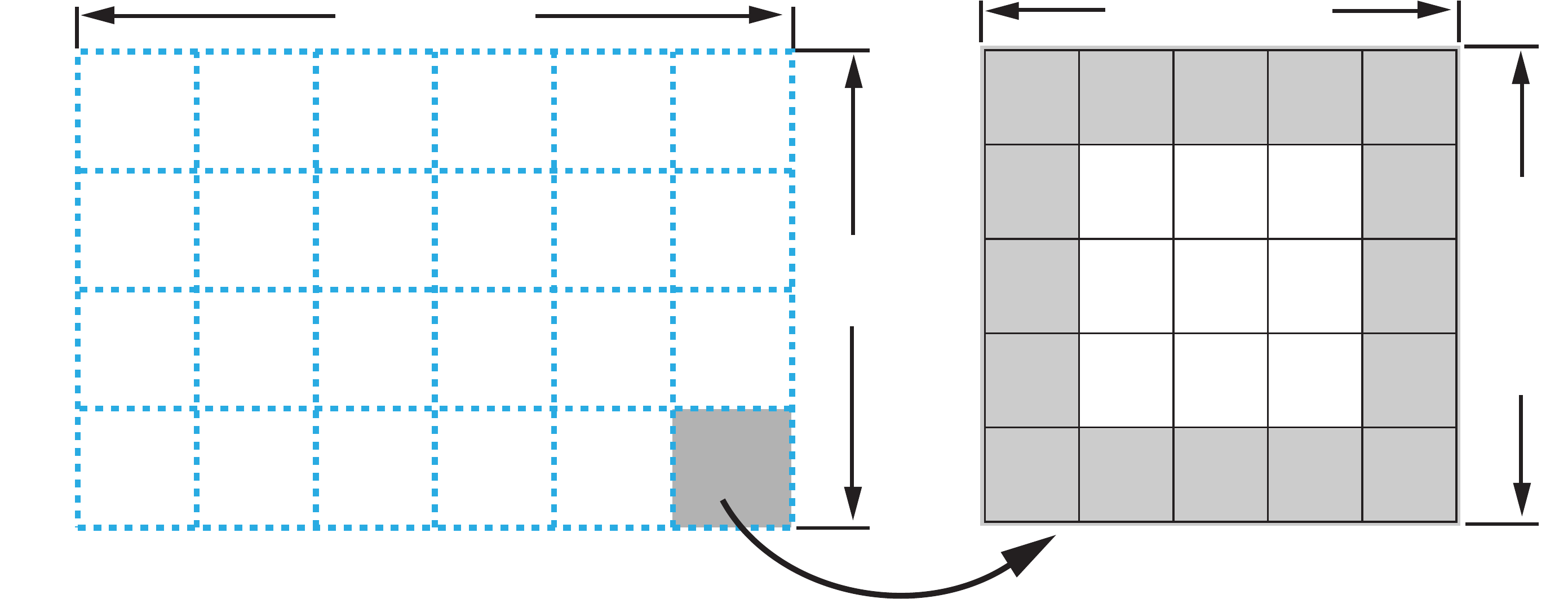}
\put(26.2,37){{\small $m_1$}}
\put(53,19.5){{\small $m_2$}}
\put(75.5,37){{\small $5 d_g$}}
\put(95.5,18){\rotatebox{90}{{\small $5 d_g$}}}
\end{overpic}
\end{center}
\caption{\label{fig:partition} Partitioning of an $m_1 \times m_2$ grid
into $6 \times 4$ cells. Each cell has a size of $5 d_g \times 
5 d_g$. Within a cell (the figure on the right), only robots located 
of a distance no more than $d_g$ from the border may have goals outside 
the cell.}  
\end{figure}

The partition scheme, as a refinement to the splitting scheme from \isag, has 
the property that only robots of distance $d_g$ from a cell boundary may have 
goals outside the cell by the definition of $d_g$ (for more details, see 
Fig.~\ref{fig:boundary}). This means that between two cells that share a 
vertical or horizontal boundary, at most $10 d_g^2$ robots need to cross 
that boundary. If we only count the net exchange, then the number reduces to 
$5 d_g^2$. 

\begin{figure}[h]
\begin{center}
\begin{overpic}[width={\ifoc 2.5in \else 2in \fi},tics=5]{./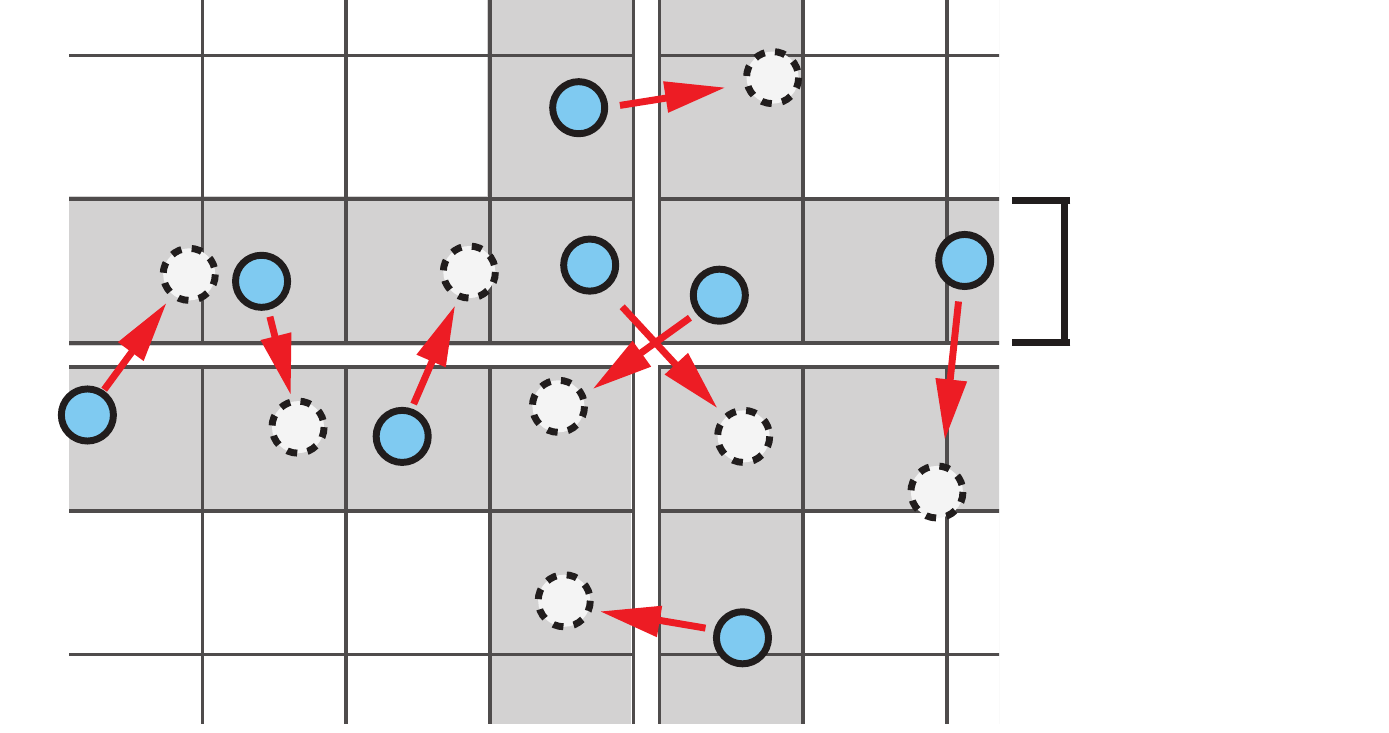}
\put(80,31.5){{\small $d_g$}}
\end{overpic}
\end{center}
\caption{\label{fig:boundary} An illustration of the $d_g$ thick boundary areas 
of four adjacent cells. Any net robot exchange between two cells must happen in this
region by the definition of $d_g$.}  
\end{figure}

Over the partition, \paf will build a flow between the cells treating each cell as 
a node in a graph. To be able to translate the flow into feasible robot movements,
the flow should only happen between adjacent cells that share a boundary. However, 
as illustrated in Fig.~\ref{fig:boundary}, it is possible for a robot to have 
initial and goal configurations that are separated into {\em diagonally adjacent} 
cells which do not share boundaries. To resolve this, we may update the goals
for these robots using robots from another cell that is adjacent to both of the 
involved cells. Fig.~\ref{fig:crossover} illustrates how one such robot can be 
processed. We call this operation {\em diagonal rerouting}, which will create a new 
configuration $X_G^1$ of the robots on $G$. \isag is then invoked to solve
$(G, X_I, X_G^1)$. \isag will do so locally on $4d_g \times 4d_g$ regions that 
span equal parts of four adjacent cells. 
\begin{figure}[h]
\begin{center}
\begin{overpic}[width={\ifoc 3.6in \else 3in \fi},tics=5]{./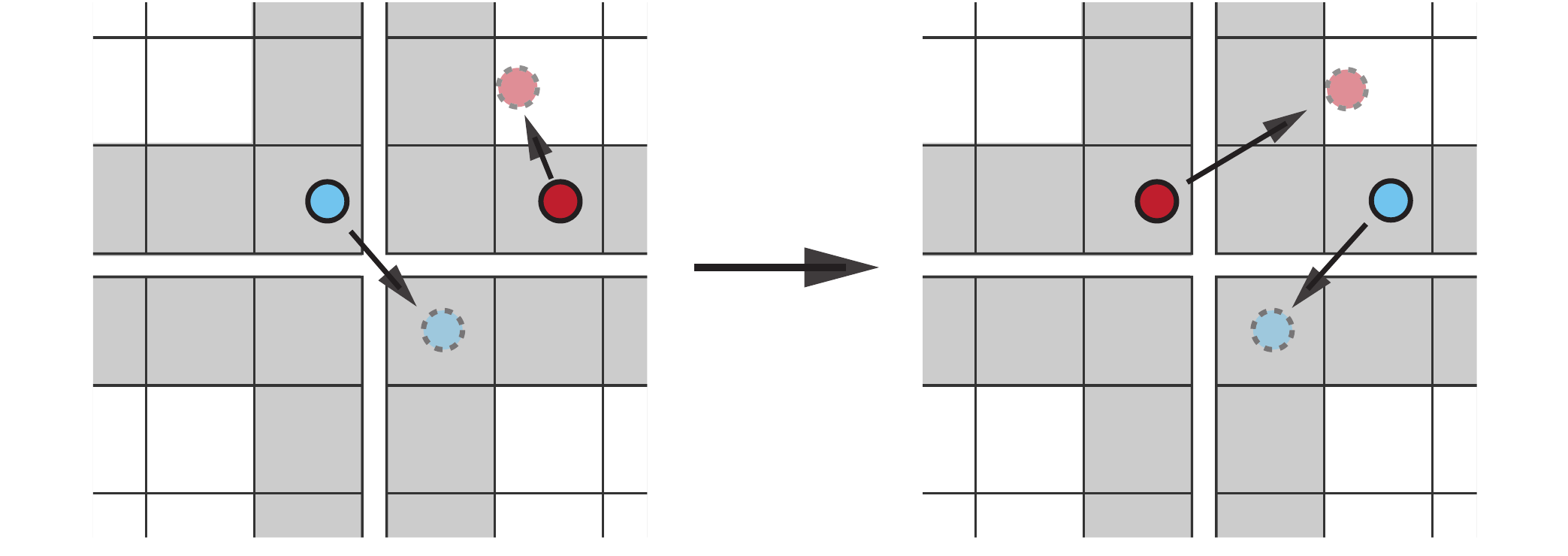}
\put(16, 21){{\small 1}}
\put(27.5, 9){{\small 1'}}
\put(69, 21){{\small 2}}
\put(80.5, 9){{\small 1'}}
\put(35.5, 28.5){{\small 2'}}
\put(38.5, 21.5){{\small 2}}
\put(88.5, 28.5){{\small 2'}}
\put(91.5, 21.5){{\small 1}}
\put(21, -5){{\small (a)}}
\put(74, -5){{\small (b)}}
\end{overpic}
\end{center}
\caption{\label{fig:crossover} (a) At the boundary between four cells, robot $1$
has initial and goal configurations (vertices) spanning two diagonally adjacent 
cells. In the top right cell which is adjacent to both the top left and bottom 
right cells, there exists a robot that has its goal vertex in the same cell. 
(b) By swapping the robots $1$ and $2$ using \isag, no robot needs to cross cell 
boundaries diagonally.}  
\end{figure}

Then, \paf creates another intermediate configuration $X_G^2$ for moving robots 
between each vertical or horizontal cell boundary so that between any two cells, 
robots will only need to move in a single direction when crossing a cell boundary. 
That is, for each cell boundary, \isag is called to ``cancel out'' non-net robot 
movements, as illustrated in Fig.~\ref{fig:cancellation}, leaving only 
uni-directional robot movements across cells. We call this operation {\em flow 
cancellation}.

\begin{figure}[h]
\begin{center}
\begin{overpic}[width={\ifoc 3.6in \else 3in \fi},tics=5]{./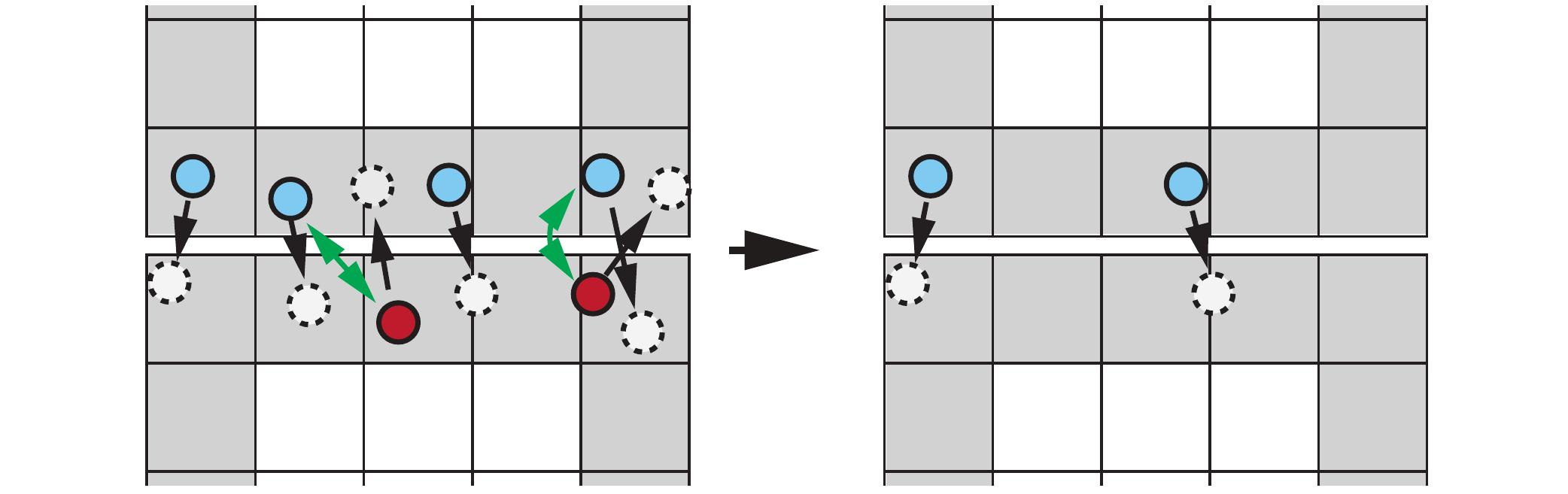}
\put(22, -5){{\small (a)}}
\put(74, -5){{\small (b)}}
\end{overpic}
\end{center}
\caption{\label{fig:cancellation} (a) There are four robots in the top cell and 
two robots in the bottom cell that need to move across the horizontal boundary. 
(b) Through an arbitrary matching (indicated with double sided arrows) of two 
pairs of robots' initial configurations and applying \isag to swap them, the 
robot movements across the boundary are now unidirectional.}  
\end{figure}

The net robot movement across cell boundary induces a flow over the cells (see
Fig.~\ref{fig:flows-decomposition}(a)). Because each cell contains a fixed number 
of robots, the incoming and outgoing flow at each cell (node) must be equal. This
means that all such flows must form a valid {\em circulation}\footnote{A circulation
is essentially a valid flow over a network without source and sink nodes. That is,
the incoming flows and outgoing flows at every node of the network are equal in 
magnitude.} over the graph formed by cells as nodes. The flow between two adjacent 
cells is no more than $6d_g^2$ (to be established later). The circulation can then 
be decomposed into $6d_g^2$ {\em unit circulations} 
(Fig.~\ref{fig:flows-decomposition}(b)). These unit circulations can be translated 
into coordinated {\em global} robot movements that require any robot to travel 
only {\em locally} at most a distance of $O(d_g)$. The translation amounts to 
creating another configuration $X_G^3$. $(G, X_G^2, X_G^3)$ is also solved using 
\isag. 
\begin{figure}[h]
\begin{center}
\begin{overpic}[width={\ifoc 4in \else 3.5in \fi},tics=5]{./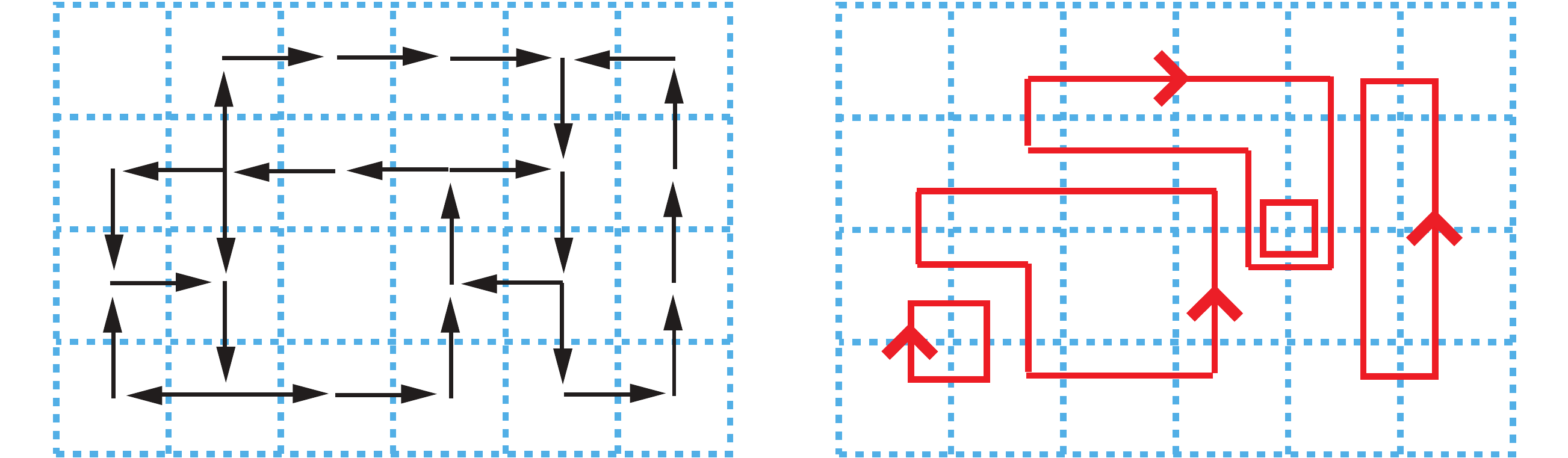}
\put(10,11.8){{\small $2$}}
\put(15.5,7){{\small $2$}}
\put(17.5,19){{\small $2$}}
\put(25,19){{\small $2$}}
\put(29.5,13.5){{\small $3$}}
\put(36.7,13.5){{\small $3$}}
\put(36.7,21.5){{\small $2$}}
\put(32.5,11.8){{\small $2$}}
\put(22.5,-5){{\small (a)}}
\put(72.5,-5){{\small (b)}}
\end{overpic}
\end{center}
\caption{\label{fig:flows-decomposition} (a) Induced circulation (network) from 
required robot movements. The numbers denote the total flow on a given edge. The 
edges without numbers have unit flows. (b) After decomposition, the circulation 
can be turned into unit circulations on simple cycles.}  
\end{figure}

After the preparation phase is done, the scheduled global robot movements can be 
directly executed, yielding a new configuration $X_G^4$. The configuration 
$X_G^4$ has the property that every robot is now in the $5d_g \times 5d_g$ partitioned
cell where its goal resides. \isag can then be invoked to solve $(G, X_G^4, X_G)$ 
(\isag is invoked at the cell level). Throughout the process, each robot only needs 
to move a distance of $O(d_g)$ and calls to \isag can be performed in parallel, 
yielding an overall makespan of $O(d_g)$. Before presenting the details of \paf 
in Section~\ref{section:proofs}, we outline the steps of \paf in Algorithm~\ref{alg:paf}. 
We emphasize that the outline is provided at a very high level that summarizes
the sketch of \paf and only covers the main case in 2D. 

\begin{algorithm}
    \SetKwInOut{Input}{Input}
    \SetKwInOut{Output}{Output}
    \SetKwComment{Comment}{\%}{}
    \Input{$G = (V, E)$: an $m_1 \times m_2$ grid graph \\ 
		$X_I$: initial configuration\\
		$X_G$: goal configuration\\}
    \Output{$M = \langle M_1, M_2, \ldots \rangle$: a sequence of {\em moves} }

\vspace{0.1in}
		
\Comment{\small Partition $G$; $G_S$ represents the partition}
\vspace{0.025in}
$G_S \leftarrow \textsc{Parition}(G, X_I, X_G)$\\
\vspace{0.05in}

\Comment{\small Orienting flows on $G_S$}
\vspace{0.025in}
$M^1, X_G^1 \leftarrow \textsc{DiagoalReroute}(G, G_S, X_I, X_G)$\label{line:reroute}\\
$M^2, X_G^2 \leftarrow \textsc{FlowCancellation}(G, G_S, X_G^1, X_G)$\label{line:cancellation}
\vspace{0.075in}

\Comment{\small Flow decomposition and global route preparation; $P$ are the routes}
\vspace{0.025in}
$M^3, X_G^3, P \leftarrow \textsc{DecomposeFlow}(G, G_S, X_G^2, X_G)$\label{line:prepare}
\vspace{0.075in}

\Comment{\small Global robot routing}
\vspace{0.025in}
$M^4, X_G^4 \leftarrow \textsc{GlobalRouting}(G, P)$\label{line:route}
\vspace{0.075in}

\Comment{\small Final local robot routing}
\vspace{0.025in}
$M^5 \leftarrow \textsc{FinalLocalRoute}(G, G_S, X_G^4, X_G)$\label{line:route}
\vspace{0.075in}

\Return{$M^1 + M^2 + M^3 + M^4 + M^5$}
\caption{\textsc{PafMainCase2D}($G$, $X_I$, $X_G$)} \label{alg:paf}
\end{algorithm}

In closing this section, we note that in providing the details of \paf in 
Section~\ref{section:proofs}, objects of minor importance, including the temporary 
configurations (e.g., $X_G^i$'s) and actual robot movement plans (e.g., $M^i$'s), 
will be omitted in the description. However, sufficient details are provided if a 
reader is interested in deriving these objects. 

\section{\pafalgo in 2D: the Details}\label{section:two-dimensions}
At this point, we make the assumption that for the rest of the paper (unless stated 
explicitly otherwise), for a given problem with $G$ being an $m_1 \times \ldots \times 
m_k$ grid, $d_g = o(m_1)$. Otherwise, $d_g(p) = \Omega(m_1)$ and we may simply
invoke \isag to solve the problem. We note that this is a different condition than 
requiring $G$ being non-degenerate. 
 
We now proceed to provide the full description of how to piece together \paf. The 
goal of this section is to establish the following main result on the existence 
of a polynomial time algorithm (\paf) for computing worst case $O(1)$-approximate 
makespan optimal solution for \mpp, in two dimensions. 

\begin{theorem}\label{t:constant-time-optimal}Let $p = (G, X_I, X_G)$ be 
an arbitrary \mpp instance with $G$ being an $m_1\times m_2$ grid. 
A solution for $p$ with $O(d_g(p))$ makespan can be computed 
in $O(m_1m_2d_g^2)$ deterministic time or $O(m_1m_2d_g + 
m_1m_2\log\frac{m_1m_2}{d_g^2})$ expected time. 
\end{theorem}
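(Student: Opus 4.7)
My plan is to formalize each of the five stages of Algorithm~\ref{alg:paf} in turn, verify their correctness, and then audit their cost. First I dispose of the non-main case $d_g(p) = \Omega(m_1)$: since $m_1 \geq m_2$, in this regime $m_1 + m_2 = O(d_g)$, so a single invocation of \isag (Theorem~\ref{t:isag}) already produces an $O(d_g)$ makespan solution in $O(m_1 m_2 (m_1 + m_2)) = O(m_1 m_2 d_g)$ time, which is dominated by the stated bound. Henceforth I assume $d_g = o(m_1)$, so that the $5 d_g \times 5 d_g$ partition of $G$ is non-trivial; the few boundary cells whose side-lengths cannot be made exact multiples of $5 d_g$ are padded to size at most $10 d_g$, and this padding does not alter any of the asymptotic estimates.

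In the main case, I carry out \textsc{DiagonalReroute}, \textsc{FlowCancellation}, and \textsc{DecomposeFlow} in sequence. For \textsc{DiagonalReroute}, I show that whenever a robot has its initial and goal vertices in diagonally adjacent cells, a donor robot can be found in one of the two cells that share a boundary with both (by a counting argument that uses the $d_g$-thick boundary band depicted in Fig.~\ref{fig:boundary}), and the required swap is realized by \isag on the $4 d_g \times 4 d_g$ window straddling the four cells. For \textsc{FlowCancellation}, each cell boundary is handled by pairing up oppositely-oriented crossings via an arbitrary bijection and swapping each pair with \isag on the $O(d_g) \times O(d_g)$ window around that boundary. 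Correctness of the induced cell-graph circulation (equal in- and out-flow at every cell) follows because each cell's robot count is an invariant of the preparatory moves. The per-edge flow is at most $5 d_g^2$, so by Theorem~\ref{t:fd} the circulation admits a decomposition into $O(d_g^2)$ unit circulations. Each preparatory stage is supported on $O(m_1 m_2 / d_g^2)$ mutually disjoint windows of side $O(d_g)$, so by Corollary~\ref{c:sagmpp23} they contribute $O(d_g)$ makespan (disjoint windows run in parallel) and sum to $O(m_1 m_2 d_g)$ reconfiguration time. The bipartite matching inside \isag dominates the total, yielding $O(m_1 m_2 d_g^2)$ deterministically and $O(m_1 m_2 d_g + m_1 m_2 \log\frac{m_1 m_2}{d_g^2})$ in expectation when a randomized matcher is used, in line with the theorem's two bounds.

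I then implement \textsc{GlobalRouting} by treating each unit circulation as a conveyor-belt shift around its cycle in the cell graph: every cell on the cycle simultaneously exports one robot to its successor, and the resulting per-cell configuration stays feasible by the unit-circulation property. To execute the $O(d_g^2)$ belts concurrently I route them through the $\Omega(d_g)$ vertex disjoint paths that Lemma~\ref{l:vdp-kd}, specialized to $k=2$, provides across each cell boundary. Each such path therefore carries $O(d_g)$ belts, each advancing $O(d_g)$ edges, so \textsc{GlobalRouting} completes in $O(d_g)$ makespan with $O(m_1 m_2 d_g)$ preparation time. After this stage every robot lies in the $5 d_g \times 5 d_g$ cell that contains its goal, so \textsc{FinalLocalRoute} decouples into one independent \mpp instance per cell, and a parallel application of \isag finishes in $O(d_g)$ makespan and $O(m_1 m_2 d_g)$ aggregate time by Corollary~\ref{c:sagmpp23}. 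Summing the five stages gives the advertised makespan and time complexities.

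The main obstacle, and the step that will consume the most care, is the joint correctness of \textsc{DecomposeFlow} and \textsc{GlobalRouting}: I must show that the $O(d_g^2)$ belts can be packed onto the $\Omega(d_g)$ vertex-disjoint-path bundles provided by Lemma~\ref{l:vdp-kd} without congestion conflicts, and that the conveyor-belt shifts never overflow a cell's capacity at any intermediate step. Everything else is bookkeeping on top of the primitives already proved in Section~\ref{section:improved-average}, but the belt scheduling is precisely where the constant $5$ in the $5 d_g \times 5 d_g$ partition is spent and where any slack in Theorem~\ref{t:fd} and Lemma~\ref{l:vdp-kd} must be absorbed.
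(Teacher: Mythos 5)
Your overall architecture matches the paper's (partition into $5d_g\times 5d_g$ cells, diagonal rerouting, flow cancellation, circulation decomposition via Theorem~\ref{t:fd}, global routing, final per-cell \isag), but there are two genuine gaps. First, you only split into ``$d_g=\Omega(m_1)$'' versus ``$d_g=o(m_1)$,'' and in the latter regime you immediately assume the $5d_g\times 5d_g$ square partition exists. It need not: when $d_g=o(m_1)$ but $d_g=\Omega(m_2)$ the grid is too thin in the $m_2$ direction to tile with $5d_g\times 5d_g$ cells, and no amount of boundary padding to $10d_g$ fixes a dimension that is itself $O(d_g)$. The paper handles this sub-case separately (Lemma~\ref{l:2dspecial}) with a one-dimensional partition into $\approx d_g$-wide strips of full height $m_2$ and two alternating rounds of \isag on adjacent strip pairs; you need that case or an equivalent.

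Second, your \textsc{GlobalRouting} is not actually proved: you yourself flag the packing of $O(d_g^2)$ unit circulations onto vertex-disjoint paths as ``the main obstacle'' and leave it open, and the tool you reach for, Lemma~\ref{l:vdp-kd}, does not do the job in 2D. That lemma gives vertex-disjoint paths between \emph{two} designated faces of a cell, whereas a cell in the circulation may carry flow through all four boundaries simultaneously with several in/out orientations (the three cases of Fig.~\ref{fig:cell-flow}); moreover the paths must be executable as synchronous cycles whose boundary crossings match between adjacent cells. The paper resolves this with an explicit construction: Lemma~\ref{l:gfrsb} routes one batch of up to $d_g$ unit circulations per step through the central ``+'' region of each cell, with robots aligned and crossing each $d_g\times d_g$ boundary zone in straight lines so that neighboring cells' plans concatenate, and Lemma~\ref{l:batch} then stacks $d_g$ such batches for sequential execution, handling all $O(d_g^2)$ circulations in $O(d_g)$ makespan. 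Without these (or a substitute congestion argument) the $O(d_g)$ makespan claim for the global phase is unsupported. A smaller point: the dominant deterministic term $O(m_1m_2d_g^2)$ does not come from matchings inside \isag (those total $O(m_1m_2d_g)$ by your own accounting) but from the repeated perfect matchings in the circulation decomposition on the skeleton grid, i.e., Theorem~\ref{t:fd} with $f=O(d_g^2)$ on $O(m_1m_2/d_g^2)$ nodes; that is also the source of the $\log\frac{m_1m_2}{d_g^2}$ factor in the randomized bound.
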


Beside the main case 
outlined in Section~\ref{section:sketch}, there is also a special case that needs 
to be analyzed in proving Theorem~\ref{t:constant-time-optimal}, depending on the 
magnitude of $d_g$ relative to $m_1$ and $m_2$. 
The cases for $d_g = o(m_1)$ are divided into 
two disjoint cases: {\em (i)} $d_g = \Omega(m_2)$ and {\em (ii)} $d_g = o(m_2)$. 
The first case can be readily addressed. 

\begin{lemma}\label{l:2dspecial}
Let $p = (G, X_I, X_G)$ be 
an arbitrary \mpp instance in which $G$ is an $m_1\times m_2$ grid with 
$d_g(p) = o(m_1)$ and $d_g(p) = \Omega(m_2)$. The instance admits 
a solution with a makespan of $O(d_g(p))$, computable in $O(m_1m_2d_g(p))$time. 
\end{lemma}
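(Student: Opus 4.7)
The plan is to exploit the fact that the condition $d_g = \Omega(m_2)$ forces $m_2 = O(d_g)$, so the grid is essentially a long strip whose narrow dimension is already within the ``local'' scale. Thus I only need to partition along the long dimension and can skip the circulation/global-routing machinery of the main case of \paf entirely. Concretely, I would divide $G$ into a linear chain of slabs of width $5d_g$ (the last slab absorbing any remainder, which costs only constants), so that each slab is an $O(d_g)\times m_2$ subgrid with both dimensions $O(d_g)$. Because the maximum travel distance is $d_g$, every robot's goal lies either in its own slab or in an immediately adjacent slab.

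Next I would run a flow-cancellation step analogous to the one in the main-case sketch, but restricted to the single family of vertical boundaries between consecutive slabs. The key observation here, which removes all residual global work, is a conservation argument: each slab has exactly $5d_g\cdot m_2$ vertices and is fully packed both in $X_I$ and in $X_G$, so the number of robots crossing any internal slab boundary from left to right must equal the number crossing from right to left. Pair up those opposite-direction crossers arbitrarily and, for each boundary, invoke \isag on a $\Theta(d_g)\times m_2$ region straddling the boundary to perform all the required swaps. Since both sides of every swap lie within $d_g$ of the boundary, the induced reconfiguration is feasible inside this local region. After this step, every robot is in the same slab as its goal.

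The final step is to invoke \isag in parallel on each slab to solve the now-decoupled local MPP instance on an $O(d_g)\times m_2$ grid. By Theorem~\ref{t:isag} applied with $k=2$ and side lengths $O(d_g)$ and $m_2=O(d_g)$, each such call yields makespan $O(d_g+m_2)=O(d_g)$ and runs in time $O(d_g\cdot m_2\cdot(d_g+m_2))=O(d_g^{2}m_2)$. Summing over the $O(m_1/d_g)$ slabs (and, analogously, over the same number of boundary regions for the cancellation step), the total running time is $O((m_1/d_g)\cdot d_g^{2}m_2)=O(m_1 m_2 d_g)$, while the three phases (cancellation plus per-slab routing, each handled in parallel across slabs) compose an overall makespan of $O(d_g)$, matching the bounds claimed in the lemma.

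The main obstacle, and where I would spend the most care, is making the flow-cancellation step rigorous: I must show that the $O(d_g\cdot m_2)$ opposite-direction crossers at a boundary can always be matched in pairs and rerouted within a single $\Theta(d_g)\times m_2$ straddling region without introducing collisions with crossers at neighboring boundaries when cancellations are performed in parallel. This is handled by spacing the straddling regions so they are vertex-disjoint (each uses only the $\Theta(d_g)$ columns nearest its own boundary, leaving a buffer to the next), so that all boundary \isag calls may be executed simultaneously; the divisibility issue for $m_1/(5d_g)$ is absorbed into a slightly larger terminal slab, and the conservation equality above then holds for every internal boundary by the maximum-density assumption.
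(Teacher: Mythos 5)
Your proof is correct and follows essentially the same route as the paper's: partition $G$ along the long dimension into $\Theta(d_g)$-wide cells, note via the full-occupancy conservation argument that the left-to-right and right-to-left crossings at each internal boundary are equal in number, and resolve everything with a constant number of parallel \isag calls on $O(d_g)\times m_2$ regions, giving $O(d_g)$ makespan and $O(m_1m_2d_g)$ time. The only cosmetic difference is that the paper uses cells of width $\approx d_g$ and two alternating odd/even rounds of \isag on merged adjacent cell pairs, whereas you use $5d_g$-wide slabs with one parallel round of disjoint boundary-straddling swap cancellations followed by one per-slab round; both decompositions are valid and achieve the stated bounds.
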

\begin{figure}[h]
\begin{center}
\begin{overpic}[width={\ifoc 4in \else 3.45in \fi},tics=5]{./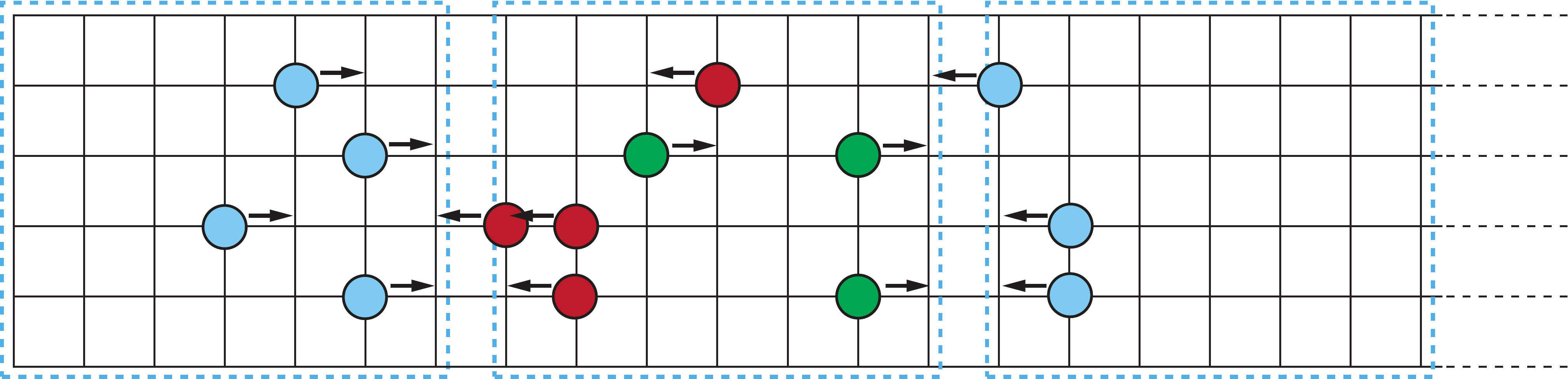}
\put(15, -3){{\small $c_1$}}
\put(46.5, -3){{\small $c_2$}}
\put(78, -3){{\small $c_3$}}
\end{overpic}
\end{center}
\caption{\label{fig:dg-omega-ms} Partitioning of an $m_1 \times m_2$ grid
along the $m_1$ dimension into $q = \lfloor m_1/d_g \rfloor$ cells of roughly 
the same size of $w \times m_2$ with $w \approx m_1/q$. Three
partitioned cells $c_1, c_2$ and $c_3$ are shown. Four robots need to 
move from $c_1$ to $c_2$ and three robots need to move from $c_2$ to $c_3$. Equal
number of robots must move in the opposite direction. The goals of the robots
are not illustrated in the drawing.}  
\end{figure}
\begin{proof}
When $d_g = \Omega(m_2)$, We compute $q = \lfloor m_1/d_g \rfloor$ and $w = 
\lfloor m_1 / q \rfloor$ (note that $w \ge d_g$). Partition $G$ into $q$ 
grid cells along the direction of $m_1$; each cell is of size $m_2 \times w$ 
or $m_2 \times (w + 1)$ (see Fig.~\ref{fig:dg-omega-ms}). Assuming that
$G$ is oriented such that its longer dimension is aligned horizontally, 
then from left to right, we label these cells $c_1, \ldots, c_q$. By the 
definition of $d_g$, a robot initially located in cell $c_i$ may only have 
its goal in either $c_{i-1}$, $c_i$, or $c_{i+1}$ (for applicable $i-1$, and 
$i+1$). This further implies that for any applicable $i$, the number of 
robots that needs to move from $c_i$ to $c_{i+1}$ is the same as the number 
of robots that needs to move from $c_{i+1}$ to $c_i$. The \mpp instance can 
then be solved in two rounds through first invoking \isag on the combined 
cells $c_i+c_{i+1}$ for all applicable odd $i$. This round finishes all robot 
exchanges between $c_i$ and $c_{i+1}$ for odd $i$. In the second round, 
\isag is invoked again to do the same, now for all applicable even $i$. 
Since both parallel applications of \isag incur a makespan of $O(w + d_g) = 
O(d_g)$, the total makespan is $O(d_g)$. For running time, each round of 
\isag application requires $O(q(d_g^2m_2 + d_gm_2^2))$. 
Since $q = O(\frac{m_1}{d_g})$ and $d_g = \Omega(m_2)$, this yields a total 
time of $O(m_1m_2d_g)$.
\end{proof}

The rest of this section is devoted to the case $d_g = o(m_2)$. 
Because $d_g = o(m_2)$, without loss of generality, we assume that $m_1 \ge 
m_2 \ge 5d_g$. Furthermore, we may assume without loss of generality that 
$m_1$ and $m_2$ are multiples of $5d_g$. If that is not the case, assuming 
that \paf is correct, then we can apply \paf up to four times without adding 
makespan or running time penalty. To execute this, first we compute $q_1 = 
\lfloor m_1 / (5d_g) \rfloor$ and $q_2 = \lfloor m_2/(5d_g) \rfloor$. We note 
that $|V| \approx q_1q_2d_g^2$. Then, \paf is applied to the top left portion 
of $G$. This will fully solve the problem for the top left $(q_1 - 1) \times 
(q_2 -1)$ cells of sizes $5d_g \times 5d_g$. Doing the same three more times 
with each application on a different section of $G$, as illustrated in 
Fig.~\ref{fig:multiple}, the entire problem is then solved. 
\begin{figure}[h]
\begin{center}
\begin{overpic}[width={\ifoc 4.5in \else 3.5in \fi},tics=5]{./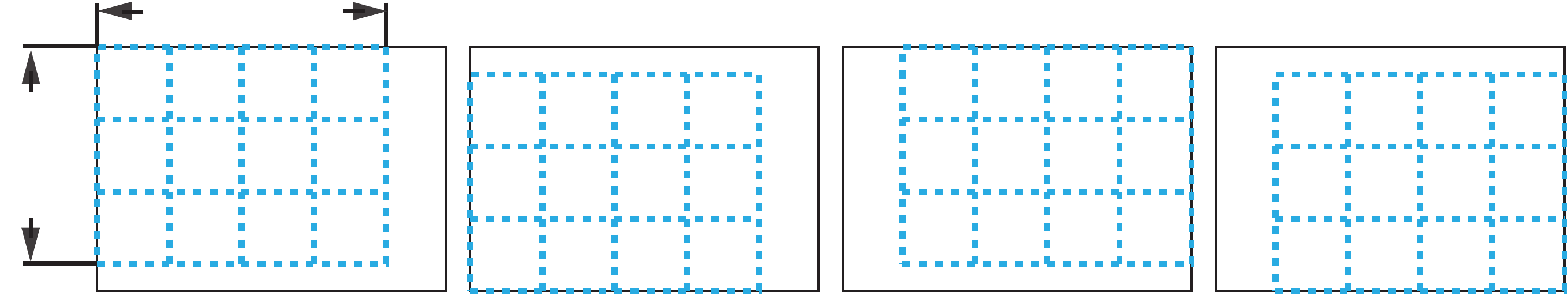}
\put(12,18){{\small $5q_1d_g$}}
\put(0.5,5){\rotatebox{90}{{\small $5q_2d_g$}}}
\end{overpic}
\end{center}
\caption{\label{fig:multiple} For $G = m_1 \times m_2$, if $m_1$ or
$m_2$ are not multiples of $5d_g$, we may apply \paf to a $q_1\times q_2$
cell partition of $G$ up to four times to cover $G$.}  
\end{figure}

Henceforth in this section, we assume $m_1 = 5q_1d_g$ and $m_2 = 5q_2d_g$ in 
which $q_1$ and $q_2$ are integers. $G$ is partitioned into a $q_1 \times q_2$ 
{\em skeleton grid} $G_S$ with its nodes being $5d_g \times 5d_g$ {\em cells}. 
We remind the readers that after the partition, by the definition of $d_g$, 
robot exchanges between cells can only happen in a $d_g$ wide border for any 
cell, as explained earlier and illustrated in Fig.~\ref{fig:boundary}.
Our immediate goal is to make sure that between neighboring cells, the 
movement of robots are uni-directional and does not happen between diagonally 
adjacent cells. That is, we would like to realize what is illustrated in 
Fig.~\ref{fig:flows-decomposition}(a) from a raw partition, in polynomial 
time and $O(d_g)$ makespan, using {\em diagonal rerouting} 
(Fig.~\ref{fig:crossover}) and {\em flow cancellation} 
(Fig.~\ref{fig:cancellation}) operations.

\begin{lemma}[Flow Orientation]\label{l:orientation}In $O(m_1m_2d_g)$
time and $O(d_g)$ makespan, the flow of robots on the $q_1 \times q_2$ 
skeleton grid may be arranged to be only vertical or horizontal between 
adjacent cells and uni-directional. The largest total incoming flow through 
a cell boundary is no more than $6d_g^2$. 
\end{lemma}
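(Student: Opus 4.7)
The plan is to establish the lemma in three stages, all carried out via batched parallel calls to \isag on local windows of the partitioned grid, with the scheduling chosen so that conflicting local operations never execute simultaneously.

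Stage one handles diagonal rerouting. For each corner of the skeleton grid $G_S$ where four cells meet, I would enumerate robots whose initial and goal vertices lie in diagonally opposite cells of the surrounding $2\times 2$ block, and pair each such robot with a ``donor'' robot in one of the two orthogonally adjacent cells whose goal lies inside its own cell. Swapping the initial positions of each such pair, as in Fig.~\ref{fig:crossover}, replaces a diagonal cell-crossing by two orthogonal ones. Donors always exist because the robots of any cell with boundary-crossing goals all sit within a $d_g$-thick rim of at most $O(d_g^2)$ vertices, while each cell contains $25 d_g^2$ robots in total. All required swaps at a given corner fit inside a $4d_g \times 4d_g$ window centered at that corner; four color classes induced by a $2 \times 2$ pattern on the corners of $G_S$ make these windows disjoint within a class, so \isag can be invoked on them in parallel.

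Stage two performs flow cancellation. Across each edge of $G_S$, I would greedily pair robots whose goals require them to cross that boundary in opposite directions, then swap the initial positions of each such pair using \isag on the $10 d_g \times 5 d_g$ region spanning the two incident cells (Fig.~\ref{fig:cancellation}). Afterwards every boundary carries unidirectional flow only. A constant-color scheduling on the edges of $G_S$ again keeps the working windows disjoint. Each local \isag invocation in stages one and two operates on an $O(d_g) \times O(d_g)$ region, so by Corollary~\ref{c:sagmpp23} it incurs $O(d_g)$ makespan and $O(d_g^3)$ running time. Summing over the $O(m_1 m_2 / d_g^2)$ windows and a constant number of color rounds yields total makespan $O(d_g)$ and total running time $O(m_1 m_2 d_g)$. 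Stage three is the counting argument for the $6 d_g^2$ bound: direct crossings from cell $A$ into an adjacent cell $B$ originate entirely in the $5 d_g \times d_g$ strip along their shared boundary, bounding those by $5 d_g^2$; diagonal rerouting can add crossings on this edge only from robots whose initial lies in one of the two $d_g \times d_g$ corner squares of $A$ touching the boundary, contributing at most $d_g^2$ more. Flow cancellation only decreases net flow, giving the advertised $6 d_g^2$ bound.

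The main obstacle I expect is the bookkeeping in stages one and two. One has to show that (i) donors are always available for every diagonal robot and can be assigned without conflict, (ii) the parallel \isag windows within a color class are genuinely vertex-disjoint so their moves compose into a single valid collision-free schedule of makespan $O(d_g)$, and (iii) stage two operates on the updated configuration $X_G^1$ produced by stage one without re-introducing any diagonal crossing through its pairings. Once these scheduling and accounting points are in place, the makespan and runtime bounds follow immediately from the per-window guarantees of \isag.
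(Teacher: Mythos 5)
Your three-stage plan is, in structure, the paper's own proof: diagonal rerouting by swapping each diagonally crossing robot with a donor in an orthogonally adjacent cell, flow cancellation by pairwise swaps of opposite-direction crossers, both realized as parallel local \isag calls on $O(d_g)\times O(d_g)$ windows, with the same $O(d_g)$ makespan and $O\bigl((m_1m_2/d_g^2)\cdot d_g^3\bigr)=O(m_1m_2d_g)$ time accounting. Your colorings are harmless but largely unnecessary: the $4d_g\times 4d_g$ corner windows are already pairwise disjoint because corners of $G_S$ are $5d_g$ apart, and the paper needs only two cancellation rounds (horizontal boundaries, then vertical) by keeping each cancellation window $2d_g$ deep on either side of its boundary rather than using the full $10d_g\times 5d_g$ union of the two cells.

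Two of your counting steps do not close as written. First, the $6d_g^2$ bound: you bound the rerouting contribution by ``robots whose initial lies in one of the two $d_g\times d_g$ corner squares of $A$,'' which gives $2d_g^2$, not $d_g^2$, so your argument yields $7d_g^2$. The missing observation is that a robot rerouted at a corner must cross both the vertical and the horizontal cell boundary meeting at that corner, so if its offsets from the corner are $(a,b)$ then $a+b\le d_g$; hence each corner square contributes at most about $d_g^2/2$ rerouted crossers and the total is $5d_g^2+2\cdot d_g^2/2=6d_g^2$ as claimed. Second, donor availability: your justification (a cell has $25d_g^2$ robots, only $O(d_g^2)$ of which cross boundaries) ignores that the donor must also lie inside the $4d_g\times 4d_g$ window. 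Within the window, the portion of a donor cell guaranteed to contain only robots with in-cell goals is exactly a $d_g\times d_g$ square (the part of its $2d_g\times 2d_g$ quadrant lying more than $d_g$ from both of that cell's nearby boundaries), i.e., $d_g^2$ donors --- enough for the at most $d_g^2$ diagonal crossers of one direction, but only if the four diagonal directions are assigned to the four donor cells bijectively (the paper's ``clockwise'' rule). This is precisely the conflict-free assignment you defer as bookkeeping; it must be made explicit because the count is tight, not slack.
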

\begin{proof}We first show how to carry out the diagonal rerouting operation. 
For convenience and with more details, we reproduce Fig.~\ref{fig:crossover} in 
Fig.~\ref{fig:crossover-detail} and let the four involved cells be $c_1$ 
through $c_4$ as illustrated. By the definition of $d_g$, if a robot $1$ in $c_1$ 
has its goal in $c_3$, then the robot must be in the bottom right $d_g \times
d_g$ region of $ c_1$ and its goal must be in the top left $d_g \times d_g$ 
region of $c_3$. For each such robot, we pick an arbitrary robot $2$ from $c_2$ 
in the diagonal-line shaded region. Any robot in this region will have its goal in 
$c_2$ (by definition of $d_g$). If we swap the initial configurations of $1$ and 
$2$, then the diagonal movement of $1$ is eliminated. Going in a clockwise 
fashion, for any robot in $c_2$ that needs to move to $c_4$, we can swap it with a 
robot from $c_3$ in the diagonal-line shaded region. Within the $4d_g \times
4 d_g$ region, we create a (temporary) \mpp problem containing only these swaps.
For each such meetings of four cells, such an \mpp instance is created. Then, 
all these disjoint instances can be solved with \isag in parallel using only 
$O(d_g)$ makespan. For computation time, constructing the instance requires a 
single linear scan of the $4d_g\times 4d_g$ region and solving each 
\mpp instance takes $(16d_g^2)^{\frac{3}{2}} = O(d_g^3)$ time, 
by Corollary~\ref{c:sagmpp23}. There are $q_1q_2$ such instances, demanding 
a total time of $O(q_1q_2d_g^3) = O(m_1m_2d_g)$.

\begin{figure}[h]
\begin{center}
\begin{overpic}[width={\ifoc 3.6in \else 3in \fi},tics=5]{./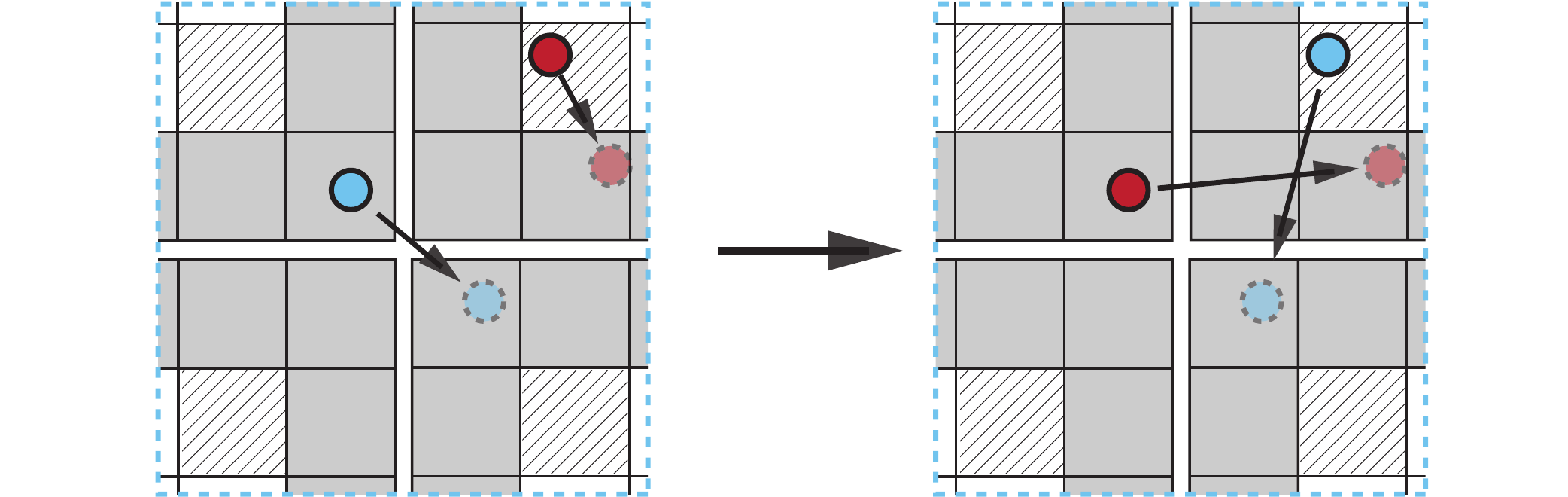}
\put(0,6.7){{\small $c_4$}}
\put(0,22){{\small $c_1$}}
\put(44,6.5){{\small $c_3$}}
\put(44,22){{\small $c_2$}}
\put(18.4,18.3){{\small $1$}}
\put(30,27){{\small $2$}}
\end{overpic}
\end{center}
\caption{\label{fig:crossover-detail} Illustration of four cells meeting at 
corners. Each small square region is of size $d_g \times d_g$. The entire region
is of size $4d_g \times 4 d_g$. Swapping $1$ and $2$ eliminates the need for 
$1$ to directly cross into a diagonally adjacent cell.}  
\end{figure}

The flow cancellation operation is carried out using a mechanism similar to that
for diagonal rerouting. Referring to Fig.~\ref{fig:cancellation-detail} as an
updated version of Fig.~\ref{fig:cancellation}(a), for a horizontal boundary 
between two adjacent cells $c_1$ and $c_2$, there can be robots that are more than 
$d_g$ away from the boundary that need to cross the boundary. This is due to 
the diagonal rerouting step. Suppose that there are $n_1$ robots that 
need to move from $c_1$ to $c_2$ and $n_2$ from $c_2$ to $c_1$. We may pick 
$\min\{n_1, n_2\}$ robots from each group and create an \mpp problem that swap them 
on the $5d_g \times 4d_g$ region as shown in Fig.~\ref{fig:cancellation-detail}. 
\begin{figure}[h]
\begin{center}
\begin{overpic}[width={\ifoc 3.6in \else 3in \fi},tics=5]{./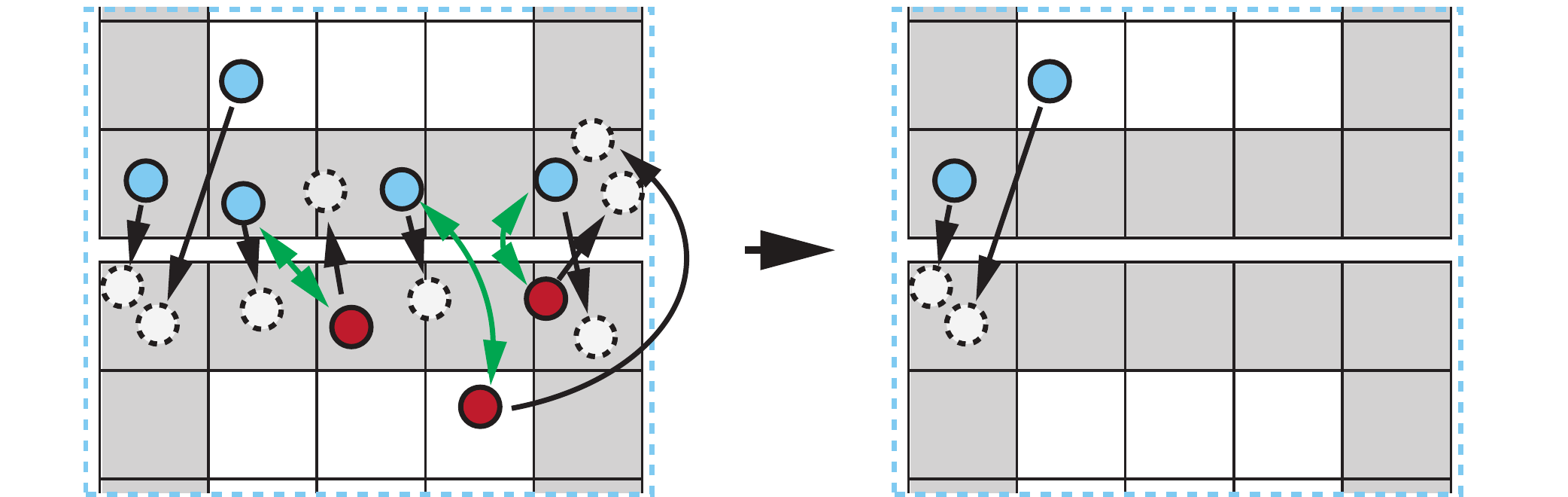}
\put(-2,7.5){{\small $c_2$}}
\put(-2,23){{\small $c_1$}}
\end{overpic}
\end{center}
\caption{\label{fig:cancellation-detail} A horizontal boundary between two adjacent
cells. Some potential robot movements across the boundary are
illustrated. Among these, three pairs of robots, as indicated with double sided 
arrows, may be matched to make the flow  across the boundary uni-directional.}  
\end{figure}
Applying \isag on the instance then renders the flow of robots between the boundary 
uni-directional. By applying \isag in parallel on all such instances over horizontal 
boundaries and then another round over vertical boundaries, flows of robots between 
cell boundaries are all uni-directional. 
Following the analysis of diagonal rerouting step, the flow cancellation operation 
also induces $O(d_g)$ makespan because each \mpp instance is on an $O(d_g)\times 
O(d_g)$ grid region. The running time is also the same as the diagonal rerouting step 
at $O(m_1m_2d_g)$. It is clear that the total flow through any boundary is no more 
than $5d_g^2 + d_g^2/2 + d_g^2/2 = 6d_g^2$.
\end{proof}

After the flow cancellation operation, we are left with only unidirectional flows
on the skeleton grid $G_S$ that are either vertical or horizontal between adjacent 
cells. To route the robots these flows represent, closed disjoint cycles must be 
constructed for moving the robots synchronously across multiple cell boundaries. 
To achieve this, we will first decompose the flow into unit circulations 
(i.e., describing a procedure for going from Fig.~\ref{fig:flows-decomposition}(a) 
to Fig.~\ref{fig:flows-decomposition}(b)). Then, we will show how the cycles on 
the skeleton grid $G_S$ (e.g., Fig.~\ref{fig:flows-decomposition}(b)) can be grouped
into a constant number of $d_g^2$ sized batches and turned into actual cycles on 
the original grid $G$. Our flow decomposition result, outlined below, works for 
arbitrary graphs. 

\begin{theorem}[Circulation Decomposition]\label{t:fd}
Let $\mathcal C$ be a circulation on a graph $G = (V, E)$ with the largest total 
incoming flow for any vertex being $f > 0$. $\mathcal C$ can be decomposed into $f$ 
unit circulations on $G$ in $O(f^2|V|)$ time or $O(f|V|\log|V|)$ 
expected time.
\end{theorem}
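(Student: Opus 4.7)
The plan is to reduce circulation decomposition to perfect matching decomposition in an $f$-regular bipartite multigraph, exploiting the same Hall's-theorem machinery already used in Lemma~\ref{l:2d-shuffle}. First I would build an auxiliary bipartite graph $B=(V^+\cup V^-, E_B)$: for every $v\in V$ create two copies $v^+\in V^+$ and $v^-\in V^-$, and for every directed edge $(u,v)$ of $G$ carrying $c>0$ units of flow in $\mathcal{C}$ insert $c$ parallel edges between $u^+$ and $v^-$. Because $\mathcal C$ is a circulation, $\deg_B(v^+)=\deg_B(v^-)$ at every vertex, and by hypothesis this common value is at most $f$. Pad $B$ to an exactly $f$-regular bipartite multigraph by adding, for each $v$ with $\deg_B(v^+)=d<f$, $f-d$ ``dummy'' parallel edges between $v^+$ and $v^-$; the resulting multigraph has $2|V|$ vertices and $f|V|$ edges.

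Next I would invoke Hall's theorem (as in the proof of Lemma~\ref{l:2d-shuffle}) to decompose the $f$-regular bipartite multigraph into $f$ edge-disjoint perfect matchings $M_1,\dots,M_f$. Translate each $M_i$ back to $G$ by discarding the dummy edges; what remains is a subset $C_i$ of edges of $G$ in which every vertex has in-degree equal to out-degree equal to at most one. Such a subgraph is exactly a disjoint union of simple directed cycles, i.e.\ a unit circulation in $G$. Edge-disjointness of the $M_i$ in $B$ (accounting for multiplicities) ensures that $\sum_i C_i=\mathcal C$ as a multiset of edges, so the $f$ unit circulations indeed decompose $\mathcal C$.

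For the running time, I would use known algorithms for repeatedly extracting a perfect matching from a regular bipartite multigraph of size $n'=O(|V|)$ and $m'=O(f|V|)$. Cole, Ost and Schirra's $O(m')$-time perfect matching routine for regular bipartite graphs, applied once per iteration (where after deleting matching $M_i$ the remainder is still regular, now of degree $f-i$), gives $f$ matchings in deterministic time $O(f\cdot f|V|)=O(f^2|V|)$. For the randomized bound, I would use the $O(n'\log n')$-expected-time matching algorithm for regular bipartite graphs (Goel--Kapralov--Khanna), applied $f$ times, giving $O(f|V|\log|V|)$ expected time.

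The main obstacle is really just making the bookkeeping watertight: verifying that the dummy-edge padding is innocuous (dummy edges chosen by a matching correspond to vertices idle in that unit circulation, which is fine), and that each iteration of matching extraction can legitimately be charged the cost of a single regular-bipartite perfect matching rather than a general bipartite matching. Both are standard consequences of the fact that deleting a perfect matching from a $d$-regular bipartite graph leaves a $(d-1)$-regular bipartite graph, which keeps the Hall condition and the regular-graph algorithms applicable at every step.
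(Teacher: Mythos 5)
Your proposal is correct and follows essentially the same route as the paper: the identical split-vertex bipartite multigraph construction, padding with $v^+$--$v^-$ dummy edges to reach $f$-regularity, repeated perfect-matching extraction justified by Hall's theorem, and the same two matching algorithms (the linear-time deterministic one for regular bipartite graphs and the $O(n\log n)$ expected-time randomized one) to obtain the $O(f^2|V|)$ and $O(f|V|\log|V|)$ bounds. The bookkeeping points you flag (dummy edges encoding idle vertices, regularity preserved after each matching deletion) are exactly the ones the paper relies on.
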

\begin{proof}
We proceed to build a bipartite graph over two copies of $|V|$. For a vertex
$v_i \in V$, we denote one of the copy $v_i^1$ (belonging to the first partite set) 
and the other $v_i^2$ (belonging to the second partite set). For any two 
adjacent vertices $v_i, v_j \in V$, if there is a flow of magnitude $f_{ij}$
from $v_i$ to $v_j$, then we add $f_{ij}$ edges between $v_i^1$ and $v_j^2$. 
Because the largest total incoming flow to any vertex is $f$, the maximum 
degree for any $v_i^j$, $j = 1, 2$, is also $f$. Also, due to flow conservation 
at vertices, for fixed $v_i \in V$, $v_i^1$ and $v_i^2$ have the same degree 
$f_i \le f$. For all $v_i$ with $f_i < f$, we add $f - f_i$ edges between 
$v_i^1$ and $v_i^2$. This brings the degrees of all vertices in the bipartite
graph to $f$, yielding a regular bipartite graph. The bipartite graph has $2|V|$
vertices and $f|V|$ edges. An illustration of the bipartite graph construction 
is given in Fig.~\ref{fig:flow-bipartite}.
\begin{figure}[h]
\vspace*{5mm}
\begin{center}
\begin{overpic}[width={\ifoc 3.6in \else 3in \fi},tics=5]{./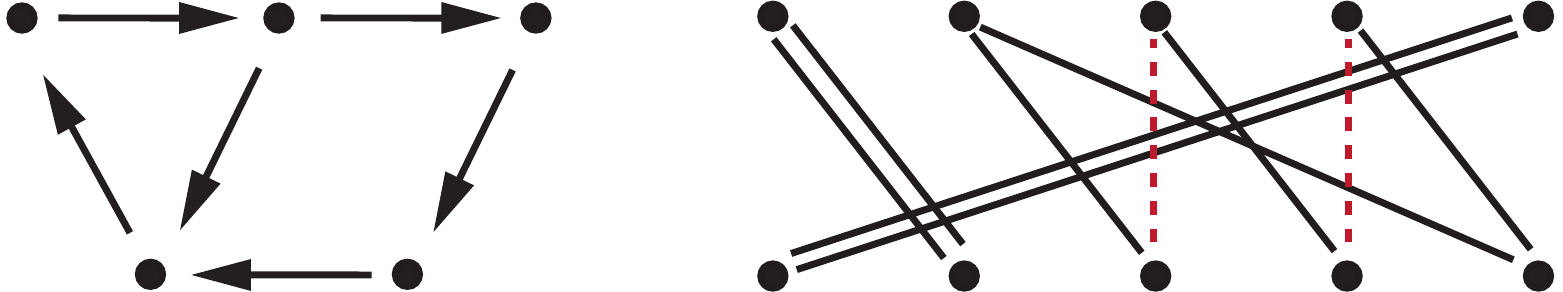}
\put(0,20.5){{\small $v_1$}}
\put(16,20.5){{\small $v_2$}}
\put(33,20.5){{\small $v_3$}}
\put(8,-4){{\small $v_5$}}
\put(8,19.5){{\small $2$}}
\put(2,5){{\small $2$}}
\put(23.5, -4){{\small $v_4$}}
\put(48,20.5){{\small $v_1^1$}}
\put(59,20.5){{\small $v_2^1$}}
\put(72.5,20.5){{\small $v_3^1$}}
\put(84,20.5){{\small $v_4^1$}}
\put(96,20.5){{\small $v_5^1$}}
\put(48,-4){{\small $v_1^2$}}
\put(59,-4){{\small $v_2^2$}}
\put(72.5,-4){{\small $v_3^2$}}
\put(84,-4){{\small $v_4^2$}}
\put(96,-4){{\small $v_5^2$}}
\put(16,-9){{\small (a)}}
\put(71.5,-9){{\small (b)}}
\end{overpic}
\end{center}
\vspace*{3mm}
\caption{\label{fig:flow-bipartite} (a) A graph with five vertices and a valid 
circulation of largest total incoming degree being $2$. The flow on each edge 
with non-unit flow is marked on the edge. (b) The constructed bipartite graph.
The dashed edges are the edges added to make the graph regular.}  
\end{figure}

With the regular bipartite graph of degree $f$, by Hall's theorem \cite{hall1935representatives}, 
there exists a perfect matching that can be computed in $O(|E|) = O(f|V|)$ time \cite{cole2001edge}. 
The matching corresponds 
to a unit circulation on $G$, which translates to either a single cycle or 
multiple vertex disjoint cycles. In the example, a
perfect matching may be $(v_1^1, v_2^2), (v_2^1, v_5^2), (v_3^1, v_3^2), 
(v_4^1, v_4^2), (v_5^2, v_1^2)$, which translates to the cycle $v_1v_2v_5$. 
An application of the perfect matching algorithm reduces the degree
of the bipartite graph by $1$, resulting in another regular bipartite graph. 
We may repeat the procedure $f$ times to obtain $f$ unit circulations 
on $G$. The total running time to obtain the $f$ unit circulations is $O(f^2|V|)$.
Alternatively, we may use the randomized $O(|V|\log |V|)$ perfect matching 
algorithm \cite{goel2013perfect}, which yields a total expected running time of $O(f|V|\log |V|)
= \tilde O(f|V|)$.
\end{proof}

For our setting, Theorem~\ref{t:fd} implies the following (note that $|V| = O(q_1q_2)$
and $f = O(d_g^2)$.) 

\begin{corollary}[Flow Decomposition on Skeleton Grid]\label{c:fd}
An $O(d_g^2)$ circulation on a $q_1\times q_2$ skeleton grid can be decomposed 
into $O(d_g^2)$ unit circulations in $O(m_1m_2d_g^2)$ time or 
$O(m_1m_2\log\frac{m_1m_2}{d_g^2})$ expected time. 
\end{corollary}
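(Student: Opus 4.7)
The plan is to obtain this as an immediate instantiation of Theorem~\ref{t:fd} once the two parameters $|V|$ and $f$ are identified for the skeleton grid setting. First I would note that by construction $m_1 = 5q_1 d_g$ and $m_2 = 5q_2 d_g$, so the skeleton grid $G_S$ has $|V(G_S)| = q_1 q_2 = m_1 m_2 / (25 d_g^2) = \Theta(m_1 m_2 / d_g^2)$ vertices. Next, to extract the per-vertex flow bound, I would use Lemma~\ref{l:orientation}, which guarantees that the flow across any single cell boundary (i.e., each directed edge of $G_S$) is at most $6 d_g^2$. Since every cell has at most four neighbors in the $q_1 \times q_2$ skeleton, the largest total incoming flow at any vertex of $G_S$ is bounded by $4 \cdot 6 d_g^2 = O(d_g^2)$, giving us $f = O(d_g^2)$ as the parameter for Theorem~\ref{t:fd}.

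With these two parameters in hand, I would then plug into the two running-time bounds from Theorem~\ref{t:fd}. The deterministic bound is
\[
O(f^2 |V|) \;=\; O\!\left( d_g^4 \cdot \frac{m_1 m_2}{d_g^2} \right) \;=\; O(m_1 m_2 d_g^2),
\]
and the randomized expected bound is
\[
O(f |V| \log |V|) \;=\; O\!\left( d_g^2 \cdot \frac{m_1 m_2}{d_g^2} \cdot \log\!\frac{m_1 m_2}{d_g^2} \right) \;=\; O\!\left( m_1 m_2 \log \frac{m_1 m_2}{d_g^2} \right),
\]
which matches exactly the two bounds claimed in the statement. The decomposition itself produces $f = O(d_g^2)$ unit circulations, as asserted.

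There is not really a main obstacle here: the corollary is essentially bookkeeping on top of Theorem~\ref{t:fd}. The only non-trivial step is making sure the per-vertex flow bound $f$ is tight; I would write one sentence citing Lemma~\ref{l:orientation} and accounting for the (constant) maximum degree $4$ of the skeleton grid. Everything else is a direct substitution of $|V| = \Theta(m_1 m_2 / d_g^2)$ and $f = O(d_g^2)$ into the two running-time expressions from Theorem~\ref{t:fd}.
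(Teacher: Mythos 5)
Your proposal is correct and matches the paper's own (very brief) justification, which simply notes that Corollary~\ref{c:fd} follows from Theorem~\ref{t:fd} by substituting $|V| = O(q_1 q_2)$ and $f = O(d_g^2)$; your extra sentence bounding the per-vertex incoming flow via Lemma~\ref{l:orientation} and the constant degree of the skeleton grid is exactly the right bookkeeping (the paper itself uses the slightly tighter constant $12 d_g^2$, but this makes no asymptotic difference).
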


Because at most $6d_g^2$ flows can pass through a cell boundary, at most 
$12d_g^2$ flow can pass through a cell (two incoming, two outgoing). 
Corollary~\ref{c:fd} gives us $12d_g^2$ unit circulations over the skeleton 
grid $G_S$. With the decomposed circulation, we may group them into batches 
and translate these into actual robot movements on $G$. To start, we handle a 
$d_g$ batch. 

\begin{lemma}[Single Batch Global Flow Routing]\label{l:gfrsb}
A batch of up to $d_g$ unit circulations on the $q_1 \times q_2$ skeleton 
grid may be translated into actual cyclic paths for robots on $G$ to complete in a 
single step, using $O(m_1m_2)$ time. 
\end{lemma}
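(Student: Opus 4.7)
The plan is to realize each of the up to $d_g$ unit circulations in the batch as a simple cycle in $G$ (or a vertex-disjoint union thereof), with all $d_g$ resulting cycles pairwise vertex-disjoint in $G$. A single synchronized rotation along these cycles is then a valid move in the robot model, advancing the flow of each circulation by one unit simultaneously, which is what ``completion in a single step'' requires.

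The construction proceeds cell by cell. First, I index the circulations $C_1, \ldots, C_{d_g}$. For each $C_i$, I trace out its cycle(s) on $G_S$, and at every cell boundary that $C_i$ crosses I reserve for $C_i$ the $i$-th of the $5d_g$ vertices along that boundary as its slot. Since the indices $i$ are distinct, the $d_g$ circulations occupy disjoint slots at every cell boundary, so the boundary-crossing portions of the cycles in $G$ are vertex-disjoint automatically.

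The bulk of the work is the within-cell routing. For each cell $c$ through which up to $d_g$ circulations pass, I route the corresponding sub-paths from entry slot to exit slot so that they are pairwise vertex-disjoint inside $c$. When the sub-paths turn at the same corner of $c$ (say top-left), a nested L-shape scheme works cleanly: $C_i$'s sub-path uses row $i$ and column $i$ measured from that corner, occupying only the top-left $i\times i$ sub-square. A short case check shows that for $i\neq j$ the L-paths share no vertex, because any candidate intersection $(i,j)$ lies on $C_i$'s row segment only if $j\le i$ and on $C_j$'s column segment only if $i\le j$, which jointly force $i=j$. Since the four $d_g\times d_g$ corner sub-squares of $c$ are pairwise disjoint, L-paths associated with different corners do not interfere either.

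The main obstacle is handling circulations that traverse a cell straight through (left-to-right or top-to-bottom), because a horizontal and a vertical straight sub-path in the same cell must share a vertex by planarity. I would address this by refining the grouping of the $O(d_g^2)$ unit circulations produced by Corollary~\ref{c:fd} so that, within any single cell of $G_S$, a batch does not simultaneously demand a horizontal and a vertical straight-through sub-path; since each cell sees only $O(d_g^2)$ flow spread over four sides while a batch draws only $d_g$ circulations, such a grouping exists and costs at most a constant factor in the number of batches, preserving the overall makespan bound. Concatenating slot crossings with in-cell sub-paths then yields the $d_g$ pairwise vertex-disjoint cycles in $G$, whose synchronized rotation is the claimed single-step execution. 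Finally, processing each cell requires $O(d_g^2)$ work (at most $d_g$ sub-paths, each of length $O(d_g)$), so the total time over all $q_1q_2$ cells is $O(q_1q_2\cdot d_g^2) = O(m_1m_2)$.
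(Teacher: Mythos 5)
Your overall framing is right and matches the paper's: realize the batch as $d_g$ pairwise vertex-disjoint cycles in $G$, execute them as one synchronized rotation, build the paths cell by cell in $O(d_g^2)$ time per cell for $O(q_1q_2d_g^2)=O(m_1m_2)$ total. You also correctly identify the real obstruction -- a cell through which one circulation passes straight horizontally and another straight vertically cannot be handled by disjoint straight/L-shaped tracks, by planarity. The problem is that your fix for this obstruction is the crux of the lemma, and it is both unproven and misplaced. You propose to regroup the $O(d_g^2)$ circulations so that no single batch puts an H-through and a V-through circulation in the same cell. But this is a \emph{global} constraint: a single unit circulation is H-through in some cells and V-through in others, so two circulations may be forced apart by conflicts at many different cells, and the required partition is a proper coloring of a conflict graph on $\Theta(d_g^2)$ vertices. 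You assert this costs only a constant factor in the number of batches, but give no argument that the chromatic number of that graph is $O(1)$ while keeping each color class of size $\Theta(d_g)$; a priori it need not be. Moreover, the lemma as stated must route an \emph{arbitrary} batch of up to $d_g$ unit circulations handed to it by the flow-decomposition step; redefining which circulations may share a batch is a modification of Corollary~\ref{c:fd} and Lemma~\ref{l:batch}, not a proof of this lemma.

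The paper avoids the obstruction entirely by a different in-cell routing scheme: all flow is confined to the central ``$+$''-shaped region of width $d_g$ in each $5d_g\times 5d_g$ cell, and -- this is the key point you are missing -- since a single batch carries at most $d_g$ units through any cell, the \emph{entire} incoming flow of the batch can be aggregated into the central $d_g\times d_g$ square of the ``$+$'' and re-sorted there before being distributed to the outgoing arms (Fig.~\ref{fig:route-31} and Fig.~\ref{fig:route-2}). Crossing horizontal and vertical through-flows are therefore not routed as disjoint crossing tracks at all; they are interleaved inside the center square, which has exactly enough room because $x+y+z\le d_g$. The paper also fixes an alignment convention at boundaries (left-aligned on horizontal boundaries, top-aligned on vertical ones) and forces paths to run straight through the $d_g\times d_g$ arms, which both resolves the slot-consistency issue your scheme leaves implicit and is what later allows Lemma~\ref{l:batch} to concatenate $d_g$ batches. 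To repair your proof you would need either to supply the aggregation idea or to actually prove the constant-chromatic-number claim; as written, the argument does not go through.
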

\begin{proof}
For a fixed cell, there are many possible orientations for the incoming and 
outgoing flows. However, we only need to analyze the case where all four boundaries 
of a cell have flows passing through. If we can handle these, other cases are 
degenerate ones with some flows crossing the boundaries being zero. Among all 
possible flows that go through all sides of a cell, there are only three possible 
orientations for the incoming and outgoing flows after considering rotation symmetries 
and flow direction symmetries, as illustrated in Fig.~\ref{fig:cell-flow}. For example, 
the case with one incoming flow and three outgoing flows is the same as reversing the 
directions of the arrows in the case shown in Fig.~\ref{fig:cell-flow}(a). Therefore, 
establishing how a $d_g$ amount of flow may be translated into feasible robot movements 
for the three cases in Fig.~\ref{fig:cell-flow} encompasses all possible scenarios. 
We will establish how up to $d_g$ robots can be arranged to go through the boundaries in 
a single step for all three cases.
\begin{figure}[h]
\begin{center}
\begin{overpic}[width={\ifoc 3.6in \else 3in \fi},tics=5]{./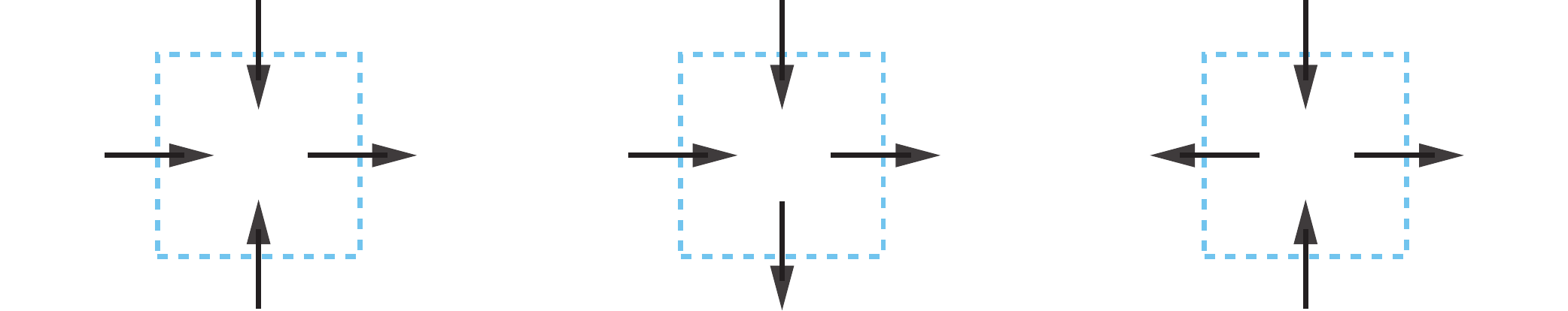}
\put(14,-5){{\small (a)}}
\put(48,-5){{\small (b)}}
\put(81,-5){{\small (c)}}
\end{overpic}
\end{center}
\caption{\label{fig:cell-flow} Three possible flow orientations that cover all possible
cases considering flow quantity (which may be zero) and symmetries (flipping of all flow 
directions and rotating the cell).}  
\end{figure}

To route the robots, we will only use the center ``+'' area of $d_g$ width of each 
$5d_g\times 5d_g$ cell. Fig.~\ref{fig:route-31} illustrates the routing plan for 
realizing the flow given in Fig.~\ref{fig:cell-flow}(a), which may be readily  
verified to be correct using basic algebra (i.e., assuming the top, left, and bottom
routes contain $x, y$, and $z$ flows, respectively, such that $x+y+z \le d_g$);
we omit the inclusion of the straightforward argument here. 
For arranging the robots, for horizontal cell boundaries, robots are aligned left. 
For vertical boundaries, robots are aligned toward the top. We note that if 
Fig.~\ref{fig:cell-flow}(a) is rotated, some adjustments are needed due to this 
choice of robot alignment but the change is minimal. Such alignments are 
necessary to ensure that the robot movements at cell boundaries match. 
\begin{figure}[h]
\begin{center}
\begin{overpic}[width={\ifoc 2.5in \else 2.5in \fi},tics=5]{./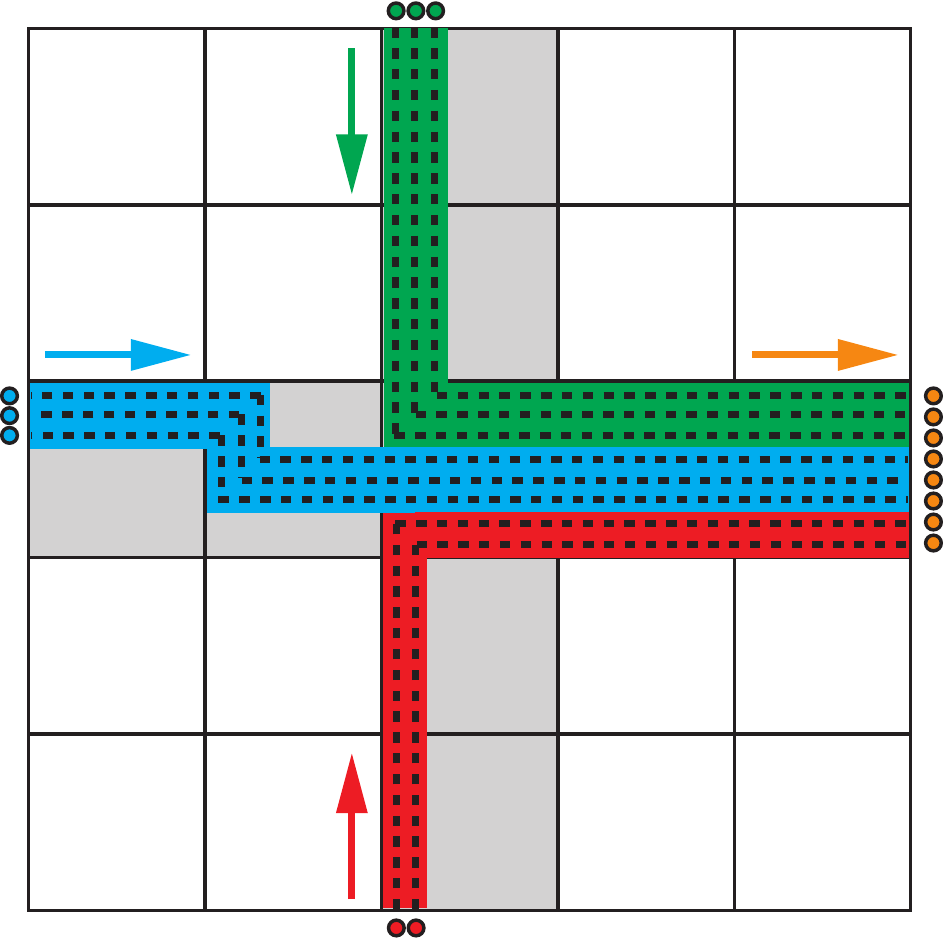}
\end{overpic}
\end{center}
\caption{\label{fig:route-31} Illustration of how a flow of size $8$ may be translated
to plans for robots for the case shown in Fig.~\ref{fig:cell-flow}(a). Each small square 
is of size $d_g \times d_g$.}  
\end{figure}

It is important to emphasize that we construct the paths so that for the 
incoming and outgoing $d_g \times d_g$ boundary areas of the ``+'' that are involved, 
robots only move straight through it, which is not necessary but simplifies 
things when we put multiple 
batches together later. For the cases from Fig.~\ref{fig:cell-flow}(b) and (c), 
illustrations of feasible routing plan construction are given in 
Fig.~\ref{fig:route-2}. Again, the incoming and outgoing flows follow straight lines 
in the $d_g\times d_g$ boundary areas of the ``+'' region. Because at most a $d_g$ 
amount of flow is being handled at a time per cell, the incoming flows can always 
be aggregated into the center $d_g \times d_g$ area before they get distributed 
outward to exit the cell. 
\begin{figure}[h]
\begin{center}
\begin{overpic}[width={\ifoc 4in \else 3.5in \fi},tics=5]{./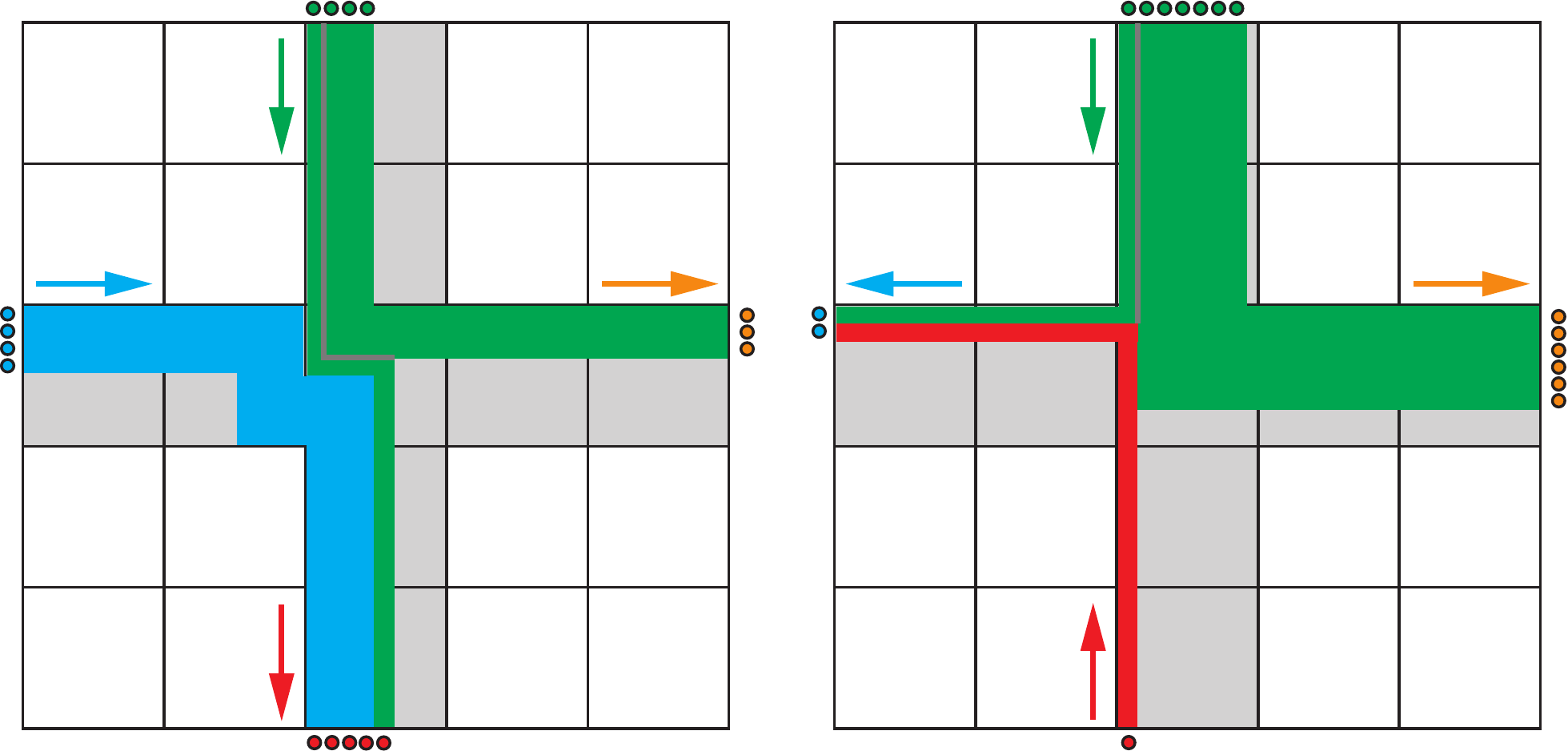}
\end{overpic}
\end{center}
\caption{\label{fig:route-2} Illustration of how flows be translated into feasible 
robot movements for the cases in Fig.~\ref{fig:cell-flow}, (b) and (c), respectively.
Each small square is of size $d_g \times d_g$. Because the incoming flows add up to 
no more than $d_g$, it is always possible to aggregate them into the center $d_g \times 
d_g$ area before sending them out of the cell.  
}
\end{figure}

Because there are only a constant number of flow arrangements for a cell (e.g., the 
three cases from Fig.~\ref{fig:cell-flow} plus some symmetric variants), there are 
only a constant number of possible parametrized routing plans. To compute such a plan, 
we note that each path is specified by a constant number of parameters. Together, this 
implies that the construction of the required paths for routing the robots in each 
cell only require a single pass through the cell, doable in $O(d_g^2)$ 
time. For $q_1q_2$ cells, the total is $O(q_1q_2d_g^2) = O(m_1m_2) = O(|V|)$.
\end{proof}

With a subroutine to push through $G$ a batch of up to $d_g$ unit circulations each 
step, $d_g$ such batches may be further grouped for sequential execution, allowing
the handling of up to $d_g^2$ at a time. This is established in the following lemma. 
\begin{lemma}[Multi-Batch Global Flow Routing]\label{l:batch}Up to $d_g^2$ 
unit circulations on the $q_1 \times q_2$ skeleton grid can be routed through $G$ using 
$O(d_g)$ makespan and $O(m_1m_2d_g)$ time. 
\end{lemma}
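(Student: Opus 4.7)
The plan is to reduce the multi-batch statement to a sequential application of Lemma~\ref{l:gfrsb}. I partition the given $O(d_g^2)$ unit circulations on the skeleton grid $G_S$ into $d_g$ batches of at most $d_g$ unit circulations each, chosen arbitrarily. Since a batch contains at most $d_g$ unit circulations and each such circulation contributes at most one unit of flow through any given cell boundary, the total flow of every batch through every boundary is at most $d_g$. Each batch therefore satisfies the precondition of Lemma~\ref{l:gfrsb}. Constructing the partition is a single linear pass over the decomposed circulations and is dominated by the other costs.

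I would then apply Lemma~\ref{l:gfrsb} to each batch in sequence. Each application produces a collection of vertex-disjoint cyclic paths on $G$ whose single synchronized rotation realizes that batch in one time step, using $O(m_1 m_2)$ planning time. Running $d_g$ such batches back-to-back yields a total makespan of $d_g \cdot 1 = O(d_g)$ and a total running time of $d_g \cdot O(m_1 m_2) = O(m_1 m_2 d_g)$, which is the claimed bound.

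The main obstacle is continuity between consecutive batches: after batch $i$'s synchronized rotation shifts robots along its cycles, the post-rotation robot configuration inside each cell must be compatible with the routing templates that batch $i{+}1$ will use. Here I would invoke the specific construction from the proof of Lemma~\ref{l:gfrsb}, which confines every constructed route to the central ``+'' strip of width $d_g$ inside each $5d_g \times 5d_g$ cell and, crucially, uses only straight-through motion inside the four $d_g \times d_g$ boundary pads at the ends of the ``+''. Combined with the canonical alignment rule (left-alignment on horizontal boundaries, top-alignment on vertical boundaries), this guarantees that the boundary-pad occupancy pattern left behind by batch $i$ is exactly the canonical input that batch $i{+}1$'s template expects on those boundaries, while the four $2d_g \times 2d_g$ corner regions of every cell remain untouched and thus serve as an unused ``buffer.'' Consequently no inter-batch reshuffle is needed, and the $O(d_g)$ batches can be scheduled in lockstep.

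If, in writing out the details, the alignment invariant turns out to require a minor adjustment after some batch (for instance to compact the boundary pad when a batch's incoming flow through that boundary is strictly less than $d_g$), I would insert an in-cell correction using a single parallel invocation of \isag restricted to the $d_g \times d_g$ boundary pads, which by Corollary~\ref{c:sagmpp23} costs $O(d_g)$ makespan per batch at a total of $O(q_1 q_2 d_g^{3/2}) = o(m_1 m_2 d_g)$ time, keeping both the $O(d_g)$ makespan and $O(m_1 m_2 d_g)$ running-time bounds intact.
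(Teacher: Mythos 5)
Your batching scheme and the appeal to the straight-through property of Lemma~\ref{l:gfrsb} match the paper's strategy, but there is a genuine gap at exactly the point you flag as the ``main obstacle,'' and your two proposed resolutions both fail. The straight-through property guarantees that a batch's paths pass cleanly through the boundary pads; it does not place the robots of batch $i{+}1$ at the starting positions of their routes. Those robots begin somewhere in the cell's border region, not on the ``+'' strip where the next template expects them, so the claim that ``no inter-batch reshuffle is needed'' does not follow. The paper resolves this with a step your proof omits entirely: a single upfront invocation of \isag per cell that \emph{pre-stages} all $d_g$ batches into a compacted queue (Fig.~\ref{fig:flow-arrange}(c)), arranged so that executing batch $i$ simultaneously advances batch $i{+}1$ into its starting configuration. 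This staging step is not a technicality --- it is both the correctness mechanism that makes lockstep execution possible and the dominating $O(m_1m_2d_g)$ term in the running time, which your accounting misses (you charge only $d_g \cdot O(m_1m_2)$ for path generation).

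Your fallback of a per-batch \isag correction on the boundary pads makes things worse quantitatively. By Corollary~\ref{c:sagmpp23}, \isag on a $d_g \times d_g$ region costs $O(d_g)$ makespan, so inserting one such correction between each of the $d_g$ batches yields $O(d_g^2)$ total makespan, not $O(d_g)$; the claimed bound is destroyed, not preserved. (The time estimate $O(q_1q_2d_g^{3/2})$ is also off: the per-cell cost is $O((d_g^2)^{3/2}) = O(d_g^3)$, giving $O(q_1q_2d_g^4) = O(m_1m_2d_g^2)$ over all cells and batches.) The fix is to do all the rearrangement once, before any batch executes, which is precisely the paper's construction.
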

\begin{proof}
For the proof, we only need to focus on a single $d_g\times d_g$ boundary area of a 
single cell; all other boundaries and cells will be handled similarly. Moreover, 
we only need to worry about robots moving out of a cell due to symmetry. With these
reductions, we outline how to push up to $d_g^2$ robots out of the right boundary of
of the ``+'' region of a cell, which is a $d_g\times d_g$ grid. Call this $d_g\times d_g$ 
grid $c$. After dicing up the $d_g^2$ circulations into $d_g$ of $d_g$ sized batches, 
we invoke Lemma~\ref{l:gfrsb} to generate feasible routing plans for each $d_g$ sized 
batch. Because Lemma~\ref{l:gfrsb} guarantees that the generated paths are straight 
lines from left to right inside $c$, these batches can be sequentially arranged one 
after another for execution. An example for $d_g =6$ is illustrated in Fig.~\ref{fig:flow-arrange}
with each color representing a $d_g$ sized batch to be moved out through the right in 
one step. It is straightforward to check that the batches, when arranged into configurations
as illustrated in Fig.~\ref{fig:flow-arrange}(c), can be readily executed sequentially. 
In particular, once the paths for earlier batches are completed, the execution itself also 
prepares the next batch for execution (see Fig.~\ref{fig:batch-execution}). 
\begin{figure}[h]
\begin{center}
\begin{overpic}[width={\ifoc 4in \else 3.5in \fi},tics=5]{./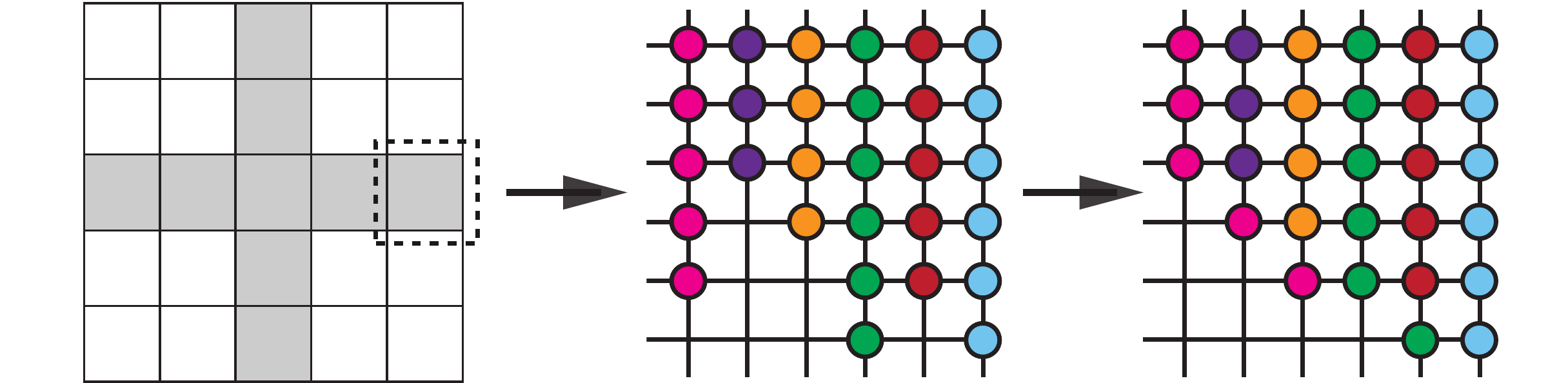}
\put(15.5,-4){{\small (a)}}
\put(50.5,-4){{\small (b)}}
\put(81.5,-4){{\small (c)}}
\end{overpic}
\end{center}
\caption{\label{fig:flow-arrange} (a) We are to route $d_g^2$ circulations 
through the right boundary of a cell in a $d_g \times d_g$ area, highlighted
with the dashed square. (b) The plans generated for the $d_g$ of $d_g$ sized
batches are arranged so that earlier plans appear on the right. For later 
plans, part of it get truncated. (c)  The further compacted batches for actual 
execution. For robots that are not shown, they will stay in the cell and have
no impact on the plan execution.}
\vspace*{-4mm}
\end{figure}
\begin{figure}[h]
\begin{center}
\begin{overpic}[width={\ifoc 4in \else 3.5in \fi},tics=5]{./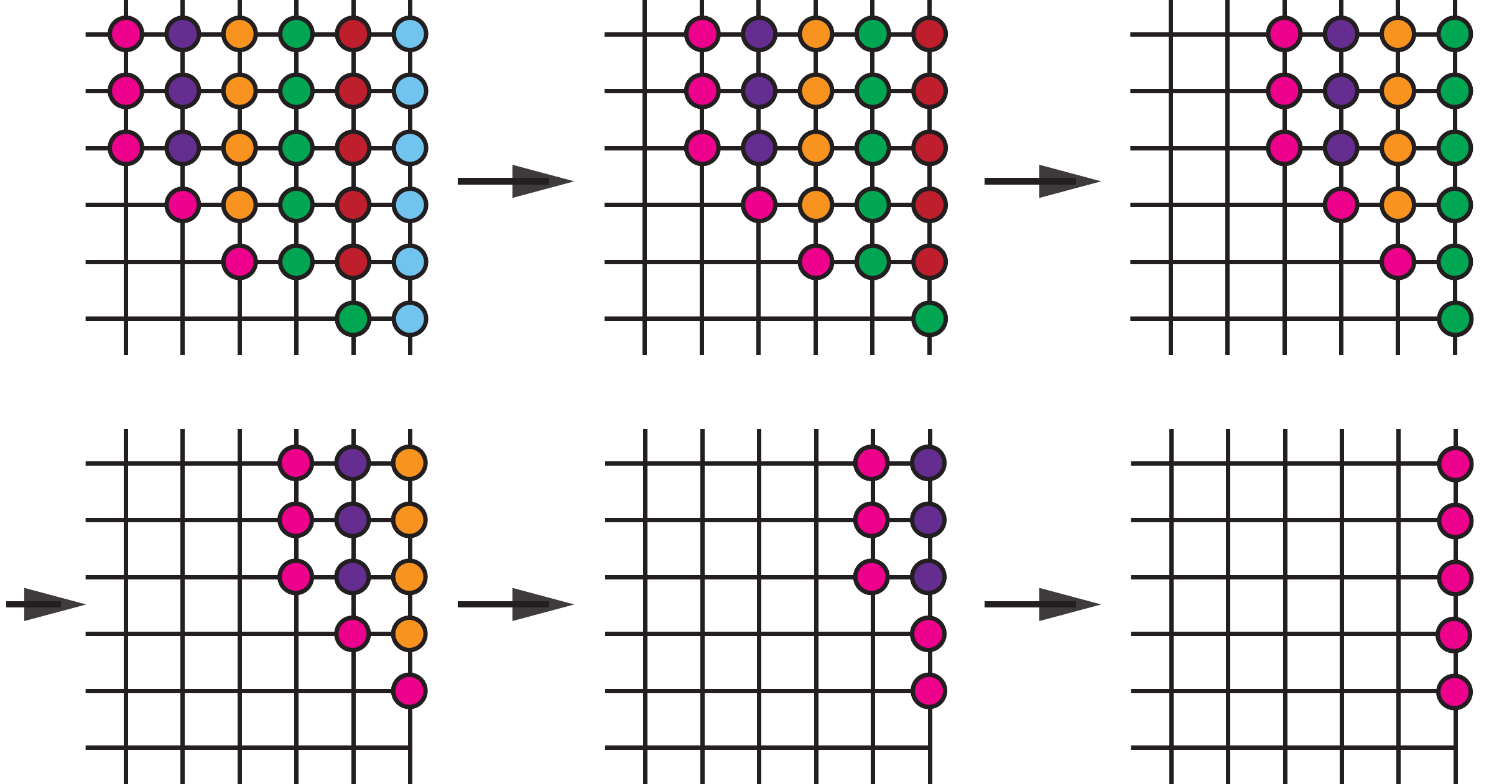}
\end{overpic}
\end{center}
\caption{\label{fig:batch-execution} Illustration of sequential execution of
$d_g = 6$ $d_g$ sized batches.}
\end{figure}

For computation time, for the $d_g^2$ circulation, we need to invoke the 
procedure from Lemma~\ref{l:gfrsb} for all $q_1q_2$ cells $d_g$ times, which 
incur a cost of $O(q_1q_2d_g^3) = O(m_1m_2d_g)$ running time, mostly used to 
write down the paths. To be able to actually prepare a cell for execution, 
\isag must be invoked on the cell once, which takes 
$O(m_1m_2d_g)$ time over all cells. This is 
the dominating term. 
\end{proof}

We will now complete proving Theorem~\ref{t:constant-time-optimal}. 

\begin{proof}[Proof of Theorem~\ref{t:constant-time-optimal}]
For the case of $d_g = o(m_2)$, on a $q_1\times q_2$ skeleton grid $G_S$ with each 
node being a $5d_g \times 5d_g$ cell, we first apply Lemma~\ref{l:orientation} to 
ensure that flows of robots across cell boundaries are uni-direction without 
diagonal movements, in $O(m_1m_2d_g)$ time. Then, Corollary~\ref{c:fd} computes a 
decomposition of the flow into up to $12d_g^2$ (vertex) unit circulations, in 
$O(m_1m_2d_g^2)$ time. Invoking Lemma~\ref{l:batch} a constant number of times, in 
$O(m_1m_2d_g)$ running time, we may globally route the robots so that all robots 
will be in the cell where its goal belongs to. We are then left with solving an 
\mpp for each individual cell, which again requires $O(m_1m_2d_g)$ running time 
over all cells. Putting this together with the cases handled by Lemma~\ref{l:2dspecial}, 
we concluded that an \mpp instance can be solved with $O(d_g)$ makespan in 
$O(m_1m_2d_g^2)$ time. If we use the randomized algorithm \cite{goel2013perfect} for 
matching, then the running time becomes  $O(m_1m_2d_g + m_1m_2\log\frac{m_1m_2}{d_g^2})$ 
expected time.
\end{proof}

Due to the dimension ignorant flow decomposition algorithm (Theorem~\ref{t:fd}), 
the running time for \paf in two dimensions incurs some additional cost 
over the comparable algorithm from \cite{demaine2018coordinated}.
On the other hand, the more general decomposition, coupled with \isag which 
directly supports arbitrary dimensions, enables the extension of \paf to 
three and higher dimensions. 

\section{\pafalgo in Higher Dimensions}\label{section:high-dimensions}
The overall \paf strategy for two dimensions generalizes to three and higher 
dimensions except when it comes to turn the decomposed flows into actually 
routing plan. We first establish that routing for the decomposed robot flow 
can be achieved for three dimensions with full details and then briefly 
discuss the necessary steps for extending it to arbitrary dimensions. 

\subsection{Three Dimensions}\label{section:3D}
On an $m_1 \times m_2 \times m_3$ grid, we first examine the main case of $d_g 
= o(m_3)$. For 3D, we will use a partition of cells of sizes $9d_g \times 9d_g 
\times 9d_g$ and assume that $9d_g$ divides $m_i$, i.e., $m_i = q_i9d_g, 1 
\le i \le 3$. It is straightforward to verify that \paf in 2D carries over 
except it is not clear how to route $d_g^3$ flow through the faces of a 
$9d_g \times 9d_g \times 9d_g$ cell, which requires the routing of 
$\Theta(d_g^2)$ flow in a single step. Generating paths for routing robots 
corresponding to the flow is significantly more involved than in the 2D case. 
In 2D, on a $d_g\times d_g$ grid, it is always possible to find up to $n \le d$ 
vertex disjoint paths that route $n$ robots through the grid (see 
Fig.~\ref{fig:2d3d-route}(a) for an illustration). These $n$ vertex disjoint 
paths then yield routes for routing the $n$ flow of robots. This may be 
readily proven via the observation that any {\em vertex cut} that isolates the 
incoming and outgoing batches of robots of size $n$ must have size at least 
$n$. On the other hand, for three and higher dimensions, it is no longer 
the case that robots on a face of a grid can be routed through the grid while
morphing the its shape, as illustrated with a counterexample in 
Fig.~\ref{fig:2d3d-route}(b). It can be shown that through a $k\ge 3$ 
dimensions grid with side lengths $d_g$, it is not always possible to find 
vertex disjoint paths for routing $d_g^{k-1} - d_g^{k-3}$ robots (for 3D, 
this number is $d_g^2 - 1$; for $d_g = 3$, this becomes $8$ as shown in 
Fig.~\ref{fig:2d3d-route}(b)). 
\begin{figure}[h]
\begin{center}
\begin{overpic}[width={\ifoc 4in \else 3.6in \fi},tics=5]{./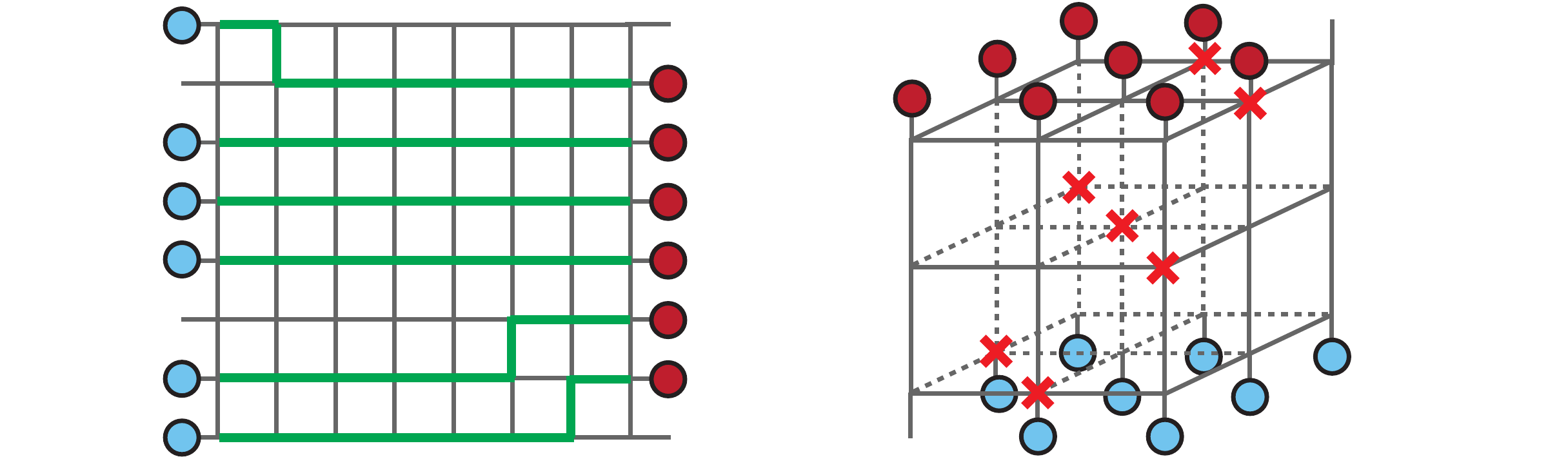}
\put(25.5,-4){{\small (a)}}
\put(72.5,-4){{\small (b)}}
\end{overpic}
\end{center}
\caption{\label{fig:2d3d-route} (a) Through a $d_g\times d_g$ grid, it is always 
feasible to find vertex disjoint paths that route up to $d_g$ robots through 
with arbitrary incoming and outgoing configurations. (b) The same is not true 
for three dimensional grids. Because the seven vertices marked by crosses 
isolate all possible paths between the two sets of eight robots, it is not 
possible to find eight paths that route the robots through.}
\end{figure}

For three (and higher) dimensions, we first systematically match the incoming 
flows into and the outgoing flows from a cell. In three dimensions, we match 
the up to six incoming and outgoing flows through a cell so that at most one 
face sends flow to its opposite face. If there is a single pair of opposite 
faces with one having incoming flow and one having outgoing flow, nothing 
needs to be done. Otherwise, if there are multiple such face pairs, pick two 
arbitrary such pairs $a_1, a_2, b_1$, and $b_2$. Without loss of generality, 
assume $f_{a_1} > 0, f_{a_2} < 0, f_{b_1} > 0$, and $f_{b_2} < 0$ (this is 
similar to the case illustrated in Fig.~\ref{fig:cell-flow}(b)). If $f_{a_1} 
\le |f_{b_2}|$, then we route all $f_{a_1}$ flow into $a_1$ to go out through 
$b_2$, which then avoids the need for routing any flow into $a_1$ to go out 
from $a_2$. If $f_{a_1} > |f_{b_2}|$, we do the same, which means that no flow 
from $b_1$ needs to go out through $b_2$. Either way, we effectively get rid 
of a dimension $i$ where $f_{i_1}*f_{i_2} < 0$. Doing this iteratively then 
leaves at most one such dimension where we may need to route any flow between 
the two opposite faces associated with that dimension. 

We now show how we may route flow from one face to other five faces through a 
$9d_g \times 9d_g \times 9d_g$ cell. Without loss of generality, we will show 
how to route flow coming in from the top face to the right face. Routing to 
the opposite face will be briefly explained afterward. 
We will route the flow to go through the center $d_g\times d_g$ regions on the 
six faces of the $9d_g \times 9d_g \times 9d_g$ cell and assume that a protocol 
is agreed on how the flow will be {\em shaped} between difference cells so the 
robot flows can be matched at cell boundaries. For example, on the top face, 
the $d_g^2$ flow may be ordered row by row (e.g., the $24$ robots on the top of 
Fig.~\ref{fig:breakdown}(a)), which result in a contiguous 2D shape inside 
a $d_g\times d_g$ region. Depending on the flow routing plan, this up to 
$d_g^2$ amount of flow is partitioned into 5 pieces (left, right, front, 
back, and center). We note that these pieces can again be made contiguous and 
in particular do not {\em interlock} with each other (bottom of 
Fig.~\ref{fig:breakdown}(a)). Based on the partition, the proper amount of 
flow to each face is then {\em pivoted} to go sideways row by row 
(see left figure of Fig.~\ref{fig:breakdown}(b)), except for flow that goes 
to the opposite face.
\begin{figure}[h]
\vspace*{-1mm}
\begin{center}
\begin{overpic}[width={\ifoc 4in \else 2.8in \fi},tics=5]{./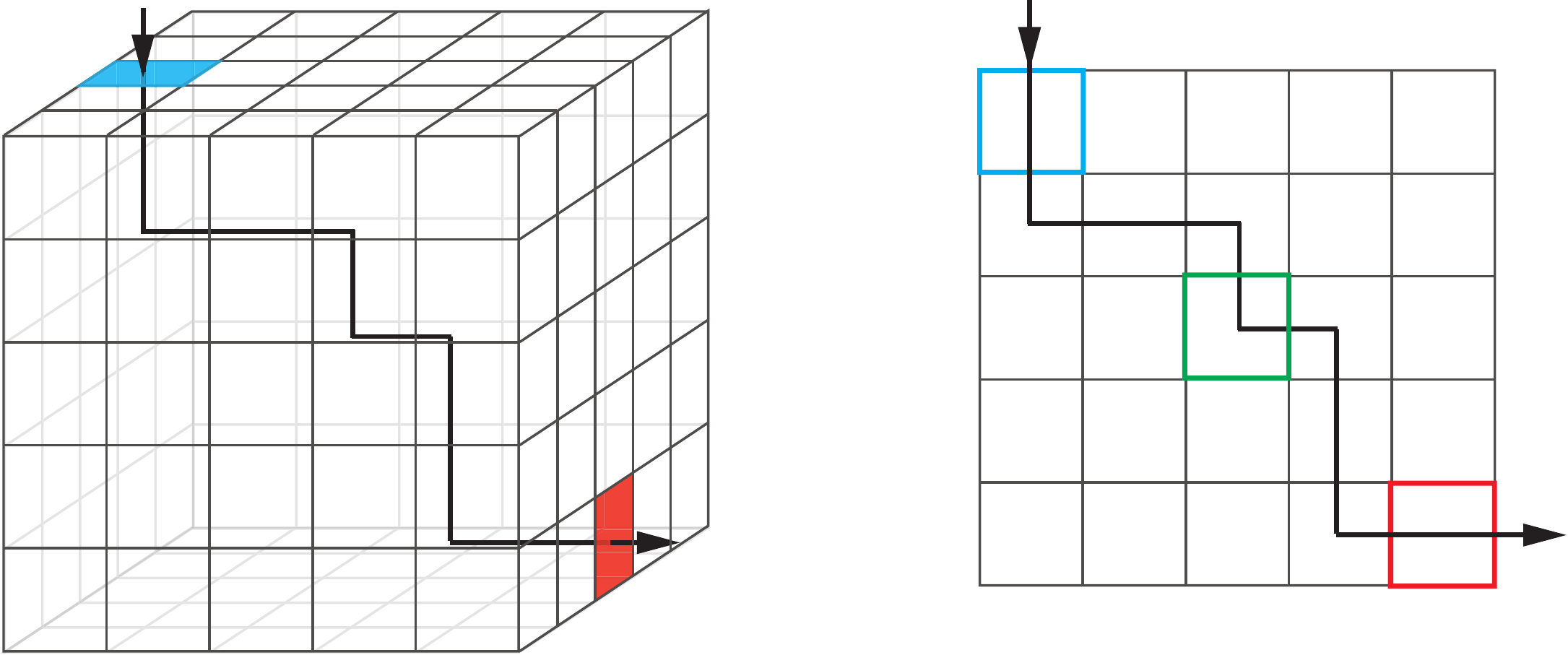}
\put(19,-5){{\small (a)}}
\put(76,-5){{\small (b)}}
\end{overpic}
\end{center}
\vspace*{-2mm}
\caption{\label{fig:sideways} Illustration of how a certain amount of flow may 
be routed sideways. Only the top-right-middle $5d_g\times 5d_g\times 5d_g$ portion 
of the cell is shown in (a). (b) is a projective view from the front. }  
\vspace*{-2mm}\end{figure}

For the flow going to the right face, we rearrange them to a row-majored shape 
using a $2d_g \times d_g \times d_g$ grid, as illustrated in Fig.~\ref{fig:breakdown}(b). 
At this point, we note that by symmetry, the same procedure can be applied to the flow
going out of the right face in the reverse direction. Using a $d_g \times d_g \times d_g$
grid  (the green one in Fig.~\ref{fig:sideways}(b) and Fig.~\ref{fig:breakdown}(b)) as 
a buffer zone, these two separately crafted routes can be perfectly matched, 
completing the routing plan for a pair of faces. For routing flow to an opposite face, we 
simply let the flow to go down two more $d_g \times d_g \times d_g$ grids after going
through the blue $d_g \times d_g \times d_g$ grid, after which we can do the same 
reshaping procedure. Once we can route $d_g^2$ flow through using a single step, 
we can do $d_g$ batches of these, pushing $d_g^3$ flow in $O(d_g)$ makespan. 
\begin{figure}[h]
\begin{center}
\begin{overpic}[width={\ifoc 4in \else 3.45in \fi},tics=5]{./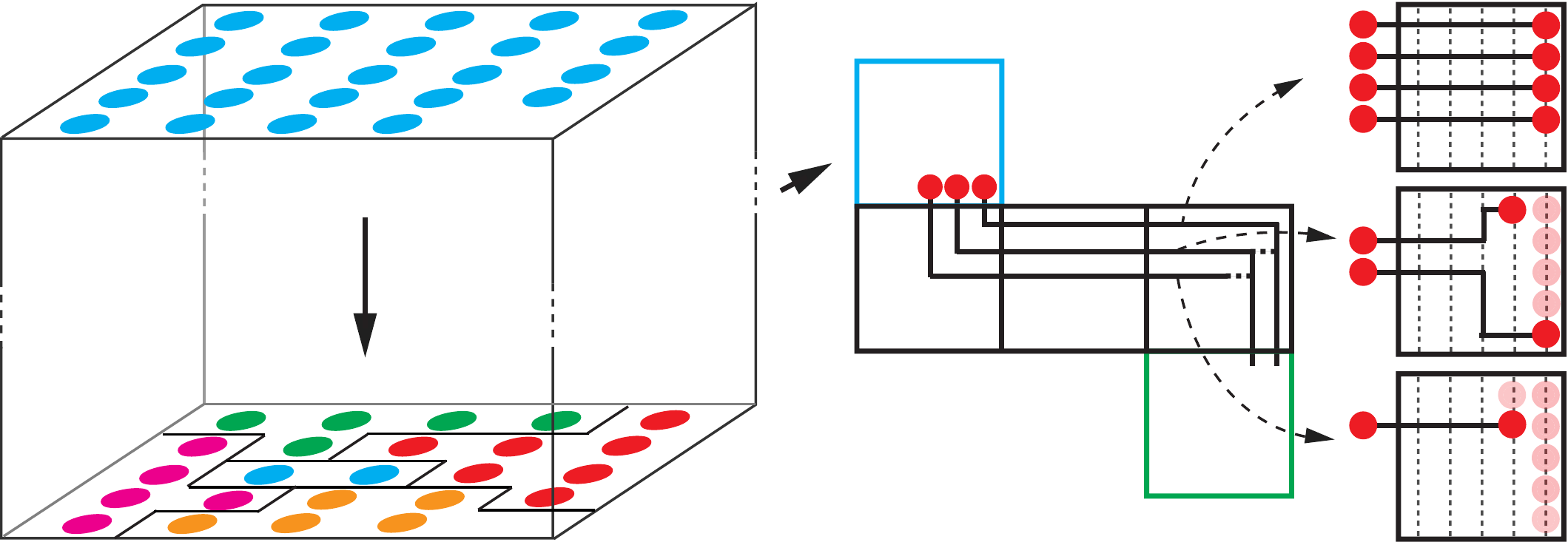}
\put(21,-5){{\small (a)}}
\put(76,-5){{\small (b)}}
\end{overpic}
\end{center}
\vspace*{-1mm}
\caption{\label{fig:breakdown} (a) Incoming $d_g^2$ flow may be broken into non-interlocking
pieces going to difference faces. This $d_g\times d_g\times d_g$ grid corresponds to the cyan
topped grid in Fig.~\ref{fig:sideways}(a).(b) A projective view (from the front) of how the 
three rows of red robots can be routed and reshaped into two row-major ordered rows, going
downwards.}  
\vspace*{-2mm}
\end{figure}

Our main goal so far is to show that it is feasible to route $d_g^3$ flow in $O(d_g)$ make
span. To actually create the routing plan, we apply the max-flow algorithm (e.g., 
\cite{ford1956maximal}) to an augmented direct graph generated on the $9d_g \times 9d_g 
\times 9d_g$ grid via vertex splitting, a standard technique used in finding vertex 
disjoint paths. We summarize the results in the following theorem. 

\begin{theorem}[\paf in Three Dimensions]\label{t:3dpaf}Let $G = (V, E)$ be an $m_1 
\times m_2 \times m_3$ grid. Let $p$ be an arbitrary \mpp instance on $G$. A solution 
with $O(d_g(p))$ makespan can be computed in $O(d_g^3|V|)$ and $O(|V|^2)$ time.
\end{theorem}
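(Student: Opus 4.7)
The plan is to mirror the structure of the 2D argument for Theorem~\ref{t:constant-time-optimal} while absorbing the additional machinery sketched immediately above the statement. First I would dispatch the degenerate subcases in which $d_g$ is comparable to one of the dimensions. If $d_g = \Omega(m_1)$, a single call to \isag already yields $O(d_g)$ makespan in $O(|V|^2)$ time. Otherwise, if $d_g = \Omega(m_3)$ (with $d_g = o(m_1)$), I would partition $G$ along $m_1$ into slabs of width $\Theta(d_g)$ exactly as in Lemma~\ref{l:2dspecial}, then invoke either \isag (when $d_g = \Omega(m_2)$) or the 2D version of \paf established in the previous section on pairs of adjacent slabs in two alternating parallel rounds; the same counting that justified Lemma~\ref{l:2dspecial} shows the makespan is $O(d_g)$ and the running time fits inside the $O(d_g^3|V|)$ budget.

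For the main case $d_g = o(m_3)$, I would partition $G$ into a skeleton grid $G_S$ of $9d_g \times 9d_g \times 9d_g$ cells (with up to eight boundary passes, the 3D analogue of Fig.~\ref{fig:multiple}, to handle divisibility). The analogue of Lemma~\ref{l:orientation} needs an extra form of diagonal rerouting: in 3D as many as eight cells meet at a skeleton-vertex corner, so a robot may need to traverse a face-diagonal or body-diagonal adjacency. I would handle each such corner with the same swap-and-\isag technique used in the 2D argument, now on disjoint $O(d_g)^3$ regions; by Corollary~\ref{c:sagmpp23} each \isag call costs $O(d_g^4)$, summing to $O(|V|d_g)$ over all corners while keeping the makespan at $O(d_g)$. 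Flow cancellation carries over face-by-face. After these steps every inter-cell flow is uni-directional and confined to axis-aligned neighbors, with magnitude bounded by $O(d_g^3)$ per face, since only robots within a $d_g$-thick border of a cell may cross the opposite face.

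Next I would invoke Theorem~\ref{t:fd} on $G_S$, which has $|V_S| = O(|V|/d_g^3)$ nodes and maximum per-vertex in-flow $f = O(d_g^3)$, decomposing the circulation into $O(d_g^3)$ unit circulations in $O(f^2 |V_S|) = O(d_g^3|V|)$ deterministic time. The decomposed circulations would then be bundled into $O(1)$ super-batches of size $O(d_g^3)$ each and pushed through $G$ by the pivoting-and-reshaping construction sketched in Section~\ref{section:3D}: per cell, first greedily reduce the count of opposite-face pairs carrying bidirectional traffic to at most one; then route up to $d_g^2$ simultaneous flows through the five or six active faces in a single synchronous step using the centered ``+''-plus-buffer scheme of Fig.~\ref{fig:sideways} and Fig.~\ref{fig:breakdown}; finally stack $d_g$ such single-step plans sequentially inside the cell's buffer region, exactly as in Lemma~\ref{l:batch}. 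A final parallel invocation of \isag per cell resolves the local \mpp.

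The main obstacle is certifying, across every symmetry class of in/out face assignments, that the pivot-reshape plan produces vertex-disjoint routes whose boundary patterns agree on each shared face. Rather than enumerate cases by hand, I would run a unit-capacity max-flow on the vertex-split auxiliary graph of a single $9d_g \times 9d_g \times 9d_g$ cell (using the agreed boundary shapes as sources and sinks): the explicit construction above certifies that the maximum flow attains the required $d_g^2$ value, and the max-flow computation then materializes the concrete routes in $\mathrm{poly}(d_g)$ time per cell, i.e., $O(|V|)$ aggregate. Summing the five stages gives overall makespan $O(d_g)$ and running time dominated by the flow-decomposition and preparation stages at $O(d_g^3 |V|)$, which collapses to the worst-case $O(|V|^2)$ bound precisely when $d_g = \Theta(|V|^{1/3})$.
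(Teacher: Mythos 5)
Your main case ($d_g = o(m_3)$) tracks the paper's proof closely: the same $9d_g \times 9d_g \times 9d_g$ partition, the same treatment of the $26$ neighbor types via extra \isag-based diagonal rerouting and flow cancellation, the same application of Theorem~\ref{t:fd} to the 3D skeleton with $f = O(d_g^3)$ and $|V_S| = O(|V|/d_g^3)$, and the same move of using the explicit pivot-and-reshape construction only as a feasibility certificate while materializing the actual vertex-disjoint routes with max-flow on a vertex-split auxiliary graph per cell. One accounting slip there: Ford--Fulkerson on a cell with $O(d_g^3)$ vertices, run for $d_g$ batches of value $d_g^2$ each, costs $O(d_g^6)$ per cell and hence $O(d_g^3|V|)$ aggregate, not $O(|V|)$; this does not change your final bound since flow decomposition already contributes $O(d_g^3|V|)$.

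The genuine gap is in the intermediate case $d_g = o(m_2)$ and $d_g = \Omega(m_3)$. Your plan partitions along $m_1$ into $\Theta(d_g)$-wide slabs and solves pairs of adjacent slabs in two alternating rounds, but a slab pair is a $\Theta(d_g) \times m_2 \times m_3$ grid with $m_2 \gg d_g$: \isag on it costs $\Omega(m_2)$ makespan, and ``the 2D version of \paf'' is not defined on a three-dimensional grid, so neither solver you name actually closes the subproblem in $O(d_g)$ makespan. The two-round trick of Lemma~\ref{l:2dspecial} works only because there every cell has all dimensions $O(d_g)$, which fails here in the $m_2$ direction. The paper instead partitions $G$ into $5d_g \times 5d_g \times m_3$ cells, obtaining a genuinely two-dimensional $q_1 \times q_2$ skeleton, and reruns the entire 2D pipeline (orientation, decomposition via Corollary~\ref{c:fd}, batched global routing) with per-boundary flow $O(m_3 d_g^2)$ split into $O(d_g)$ batches of size $m_3 d_g$, exploiting the fact that each cell boundary is a $d_g \times m_3$ face that admits $m_3 d_g$ robots per step. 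Your sketch gestures at this but does not supply the flattening argument or the revised flow bounds, and as written the slab-pair reduction would not go through.
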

\begin{proof}
Without loss of generality, we may assume that $m_1$, $m_2$, and $m_3$ are desired 
multiples of $d_g$ as needed. 

For $k = 3$, $d_g$, when compared with $m_1, m_2$, and $m_3$ (we remind the 
reader that $m_1 \ge m_2 \ge m_3$ and we assume $d_g = o(m_1)$), raises 
three cases: {\em (i)} $d_g = o(m_1)$ and $d_g = \Omega(m_2)$, 
{\em (ii)} $d_g = o(m_2)$ and $d_g = \Omega(m_3)$, and {\em (iii)} $d_g = o(m_3)$. 
We note that $O(d_g^3|V|)$ and $o(|V|^2)$ do not necessarily imply each other in all 
cases.

For the case of $d_g = o(m_1)$ and $d_g = \Omega(m_2)$, a result similar
to Lemma~\ref{l:2dspecial} can be proved, over a partition of $G$ as illustrated 
in Fig.~\ref{fig:dgomegam2m3}. The overall makespan is readily verified as $O(d_g)$. 
The only difference, as compared with the proof for Lemma~\ref{l:2dspecial}, is that 
the 3D version of $\isag$ needs to be used. By Theorem~\ref{t:isag}, solving a 
single 3D \isag on a $d_g\times m_2 \times m_3$ grid takes time $O(d_g^2m_2m_3 + 
d_gm_2^2m_3 + d_gm_2m_3^2)$. Making a total $\frac{m_1}{d_g}$ of such parallel calls 
takes time $O(d_gm_1m_2m_3) = O(d_g|V|)$, which is both $O(d_g^3|V|)$ and $o(|V|^2)$. 
\begin{figure}[h] 
\begin{center}
\begin{overpic}[width={\ifoc 4in \else 3.5in \fi},tics=5]{./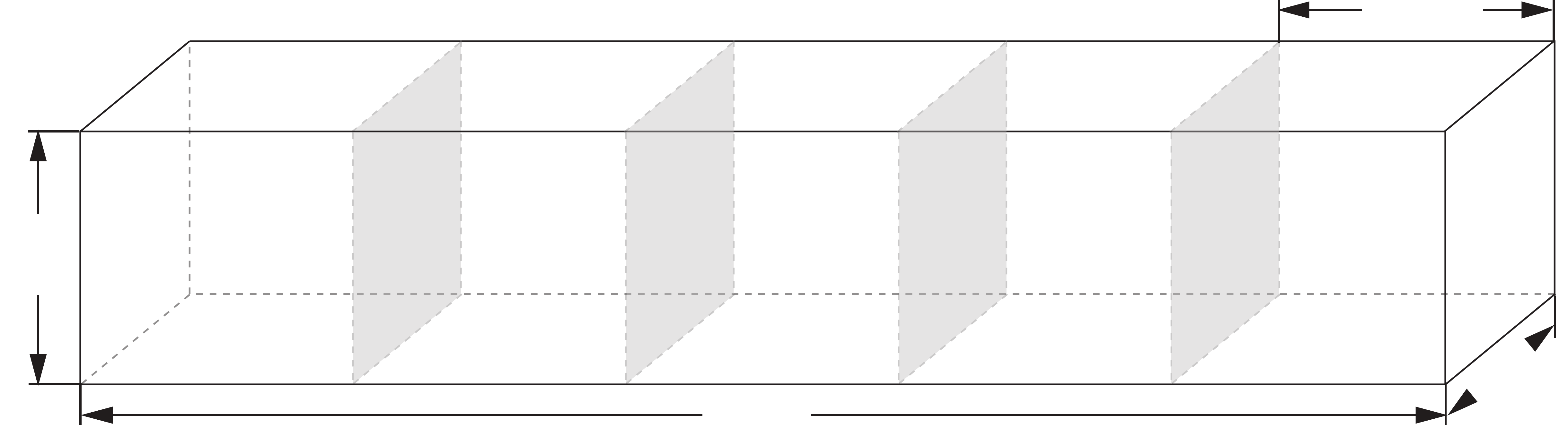}
\put(0,10){{\small $m_2$}}
\put(46,0){{\small $m_1$}}
\put(93.2,1.7){\rotatebox{40}{{\small $m_3$}}}
\put(88.5,26.5){{\small $d_g$}}
\end{overpic}
\end{center}
\vspace*{-3mm}
\caption{\label{fig:dgomegam2m3} In the case of $d_g = o(m_1)$ and $d_g = 
\Omega(m_2)$, $G$ may be partitioned blocks of size $d_g\times m_2 \times m_3$.}
\end{figure}

For the case of $d_g = o(m_2)$ and $d_g = \Omega(m_3)$, we partition $G$
into cells of size $5d_g\times 5d_g\times m_3$ each, as illustrated in 
Fig.~\ref{fig:dgomegam3}, to get a $q_1 \times q_2$ 2D skeleton grid $G_S$ 
of these cells. For solving the partitioned problem, we essentially follow 
the main case of \paf in 2D. The procedures for carrying out diagonal rerouting 
and flow cancellation (Lemma~\ref{l:orientation}) can be executed as is 
on $G_S$ using 3D \isag. For flow decomposition on $G_S$, instead of up to 
$6d_g^2$ flow, the 3D case now has up to $6m_3d_g^2$ flow through a cell 
boundary. The same flow decomposition procedure (i.e., Corollary~\ref{c:fd}) 
can nevertheless be carried out to decompose $O(m_3d_g^2)$ flow on $G_S$ 
into unit circulations, as $O(d_g)$ of $m_3d_g$ sized batches. 
This is because the boundary between two cells is now a $d_g \times m_3$ 2D
grid and can allow $m_3d_g$ robots to pass through at a single step. The 
conversion of the flow into executable paths for global robot routing can 
then be completed using the 3D extended version of Lemma~\ref{l:gfrsb} and 
Lemma~\ref{l:batch}; adding a dimension orthogonal to the $d_g\times d_g$ 
grid is straightforward. After the global routing step, each robot resides in 
a cell where its goal also resides; a parallel call to 3D \isag on all 
individual cells then solves the problem. 
\begin{figure}[h]
\begin{center}
\begin{overpic}[width={\ifoc 4in \else 3.5in \fi},tics=5]{./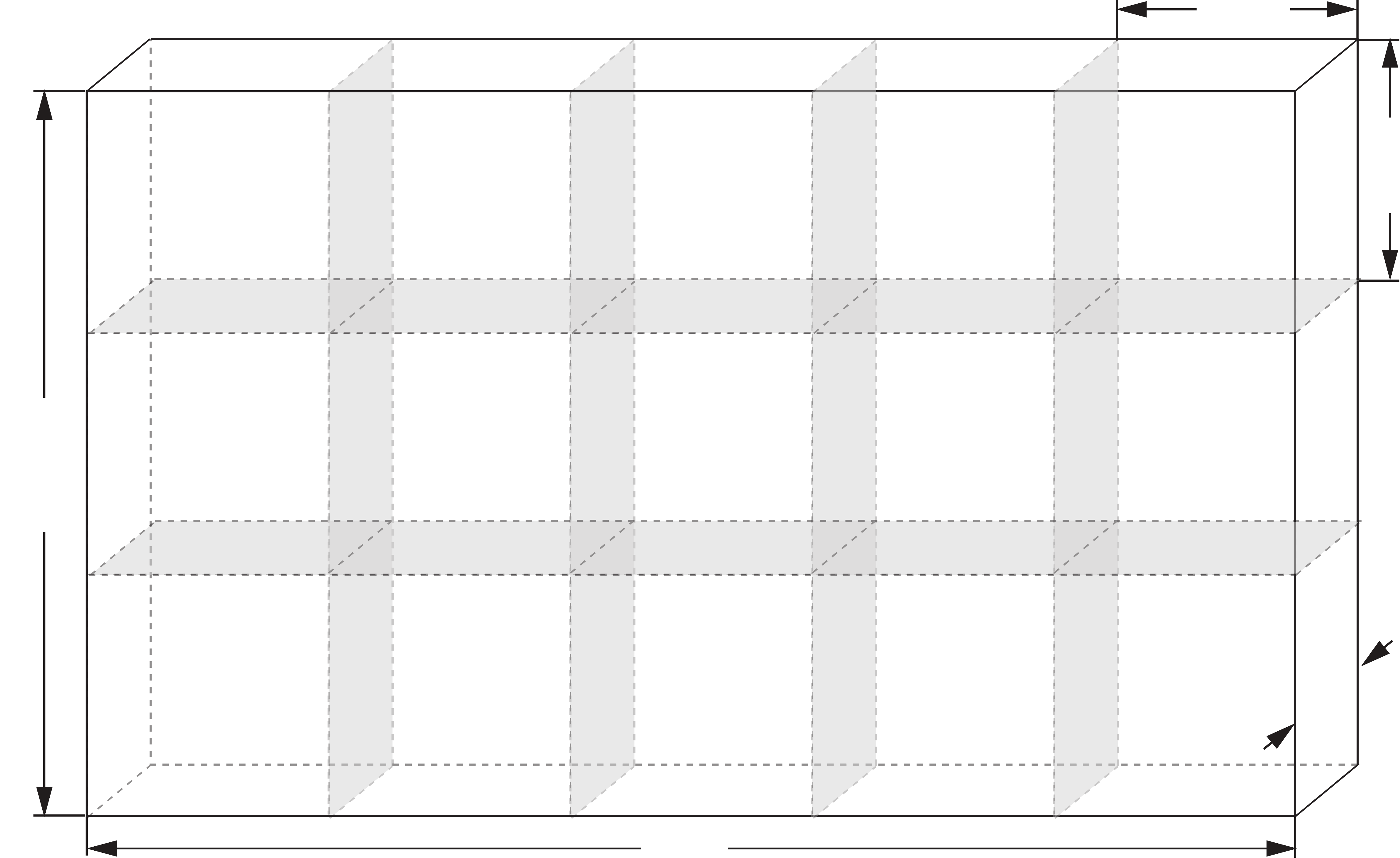}
\put(0,28){{\small $m_2$}}
\put(46.5,0){{\small $m_1$}}
\put(92.5,9.5){\rotatebox{40}{{\small $m_3$}}}
\put(86,60){{\small $5d_g$}}
\put(97.5,47){\rotatebox{90}{{\small $5 d_g$}}}
\end{overpic}
\end{center}
\caption{\label{fig:dgomegam3} In the case of $d_g = o(m_2)$ and $d_g = 
\Omega(m_3)$, $G$ may be partitioned into blocks of size $5d_g\times 5d_g \times m_3$.}
\end{figure}

Again, the resulting makespan for this case is clearly $O(d_g)$.
Running time wise, the flow decomposition needs to route $f = O(m_3d_g^2)$ flow on 
a skeleton grid with $\frac{m_1m_2}{d_g^2}$ edges, which requires a total 
running time of $O(m_1m_2m_3^2d_g^2)$. 
The other cost is to invoke 3D \isag in parallel on $\frac{m_1m_2}{d_g^2}$ of 
$5d_g\times 5d_g \times m_3$ sized cells, which by Theorem~\ref{t:isag} takes 
time $O(\frac{m_1m_2}{d_g^2}(d_g^3m_3 + d_g^3m_3 + d_g^2m_3^2)) = O(d_gm_1m_2m_3)$. 
The overall running time is then $O(d_g^2m_1m_2m_3^2)$. Because $d_g = o(m_2)$ and 
$d_g = \Omega(m_3)$, the running time is both $O(d_g^3|V|)$ and $o(|V|^2)$. 

For the main case of $d_g = o(m_3)$, we partition $G$ into $9d_g \times 9d_g \times9
d_g \times$ cells. Assuming $m_1 = 9q_1g_d, m_2 = 9q_2g_d$, and $m_3 = 9q_3g_d$, this 
yields a 3D skeleton grid $G_S$ of dimensions $q_1 \times q_2 \times q_3$ (we omit 
the pictorial illustration of the case, which is difficult to visually observe). In this 
case, each cell $c$ may interface with $26$ other cells, with $6$ of these neighbors 
each sharing a $9d_g\times 9d_g$ boundary with $c$. The rest $20$ neighbors of $c$ are 
diagonal neighbors of some form, either along a $9d_g$ length edge ($12$ of these, 
denoted as edge-diagonal neighbors) or a single vertex ($8$ of these, denoted as 
vertex-diagonal neighbors). 

For the main case, we verify that the flow orientations steps 
(Lemma~\ref{l:orientation}) carry over with minor modifications: more calls to 
\isag is required for each cell due to the increased number of neighbors and 
3D diagonal rerouting needs to first convert vertex-diagonals to edge-diagonals.
The number of such parallel calls to \isag remains constant, however, retaining 
the $O(d_g)$ makespan guarantee and actually reduces the asymptotic running time.
The flow decomposition step (Corollary~\ref{c:fd}) extends with the flow amount 
being $O(d_g^3)$ per cell, decomposed into $O(d_g)$ of $d_g^2$ sized batches. 

To construct a global routing plan for realizing these batches, instead of using 
manual construction as we have done with Lemma~\ref{l:gfrsb} and Lemma~\ref{l:batch}, 
in the 2D case, we directly apply max-flow to generate the vertex disjoint paths 
for routing the robots. After the flow decomposition step, each batch of up to
$d_g^2$ flow, by our earlier argument, is always possible to be routed through a 
$9d_g \times 9d_g\times 9d_g$ cell. We may invoke the Ford-Fulkerson algorithm 
\cite{ford1956maximal} on an auxiliary graph (through vertex splitting, a standard
technique) of the $9d_g \times 9d_g\times 9d_g$ cell to obtain up to $d_g^2$ vertex 
disjoint paths, which route that many robots in a single step through the cell. 
It is possible to bundle $d_g$ of $d_g^2$ sized batches together, which incur a 
makespan of $d_g$. Therefore, similar to the 2D case, the global robot routing 
can be completed using $O(d_g)$ makespan. 

With the global routing of robots completed, we again end up with the case
that every robot is now in a cell where its goal also belongs to. \isag can 
then be invoked to solve the problem in parallel. 
Following similar analysis as in the 2D case, the algorithm produces an $O(d_g)$ 
makespan solution. For running time, there are three main costs: {\em (i)} \isag calls, 
{\em (ii)} matching for flow decomposition, and {\em (iii)} max-flow based global 
robot routing plan generation. For {\em (i)}, $q_1q_2q_3$ parallel calls to 3D \isag 
is needed, demanding a time of $O(q_1q_2q_3d_g^4) = O(d_g|V|)$. For {\em (ii)}, flow 
decomposition is now performed on $O(d_g^3)$ flow on a graph with $O(q_1q_2q_3d_g^3)$
edges, requiring $O(d_g^3|V|)$ time. For {\em (iii)}, using Ford-Fulkerson 
\cite{ford1956maximal}, the total running time is $O(q_1q_2q_3d_g^6) = O(d_g^3|V|)$, 
which is $o(|V|^2)$. 
\end{proof}


\subsection{Arbitrary Fixed Dimension}
Arguments from Section~\ref{section:3D}, in particular 
Theorem~\ref{t:3dpaf}, suggest that the overall \paf strategy applies to an 
arbitrary $k$-dimensional grid with $k \ge 2$ when two conditions are met. 
First, after partitioning the larger grid into $\Theta(d_g)$ sized cells, it 
must be possible to make local robot exchanges in $O(d_g)$ makespan so that 
in the leftover problem, robots in a given cell, say $c_i$, only have targets 
in cells that share a $(k-1)$-dimensional face with $c_i$. If this is possible, 
we may then apply the dimension-invariant Theorem~\ref{t:fd} to break down the 
(up to $\Theta(d_g^k)$) robot flow between adjacent cells into batches of size  
$O(d_g^{k-1})$. Second, after the decomposition, it must be possible to find
vertex disjoint paths that route up to $\Theta(d_g^{k-1})$ robots in a single 
step through a $\Theta(d_g)$ sized grid cell (e.g., Fig.~\ref{fig:route-31},
Fig.~\ref{fig:route-2}, and Fig.~\ref{fig:breakdown}). For $k$ dimensions, 
it is readily verified that the first condition holds through direct 
generalization of the relevant results from 
Section~\ref{section:two-dimensions}. This leaves us with finding vertex 
disjoint paths for routing the robots globally, i.e., generalization of 
Lemma~\ref{l:gfrsb} to $k$ dimensions. We outline how this may be realized as 
a further generalization of the 3D case (e.g., Fig.~\ref{fig:breakdown}), 
starting with the introduction of some necessary definitions. 

For convenience, we use the short hand $[x]^k$ to denote a cubical 
$k$-dimensional grid $x \times \ldots \times x$ (i.e., each dimension 
spans a path of length $x$). Such a grid has $2k$ faces where each face is a 
$(k-1)$-dimensional grid (which can be expressed as $[x]^{k-1}$). Let these 
faces be $f_1, \ldots, f_{2k}$. On a $2$-dimensional grid and an ordering of 
its two dimensions (or axes) $[d_1, d_2]$, we say a set $P$ of vertices on the 
grid is {\em $[d_1, d_2]$-regular} if $P$ is arranged such that, viewing $d_1$ 
as the number of columns and $d_2$ as the number of rows, $P$ fully occupies 
the first $\lfloor |P|/d_1\rfloor$ rows and then the first $|P| \mod d_1$ 
vertices of row $\lfloor |P|/d_1\rfloor + 1$. This is essentially a type of 
row-major ordering of $P$. On a $k$-dimensional grid (graph) $G$, for an 
ordering of its dimensions $[d_1, \ldots, d_k]$, we say set of vertices $P$ 
on $G$ is {\em $[d_1, \ldots, d_k]$-regular} if $P$ fully occupies the first 
$\lfloor |P|/(d_1\times \ldots d_{k-1})\rfloor$ layers of $G$ and then the 
rest of $P$ is $[d_1, \ldots, d_{k-1}]$-regular in the $(\lfloor 
|P|/(d_1\times \ldots d_{k-1})\rfloor + 1)$-th layer of $G$ (if applicable). 
We denote the geometric arrangement of $P$ on a $[x]^k$ grid as the {\em shape} 
of $P$. Our main goal is to {\em reshape} two point sets on a $[x]^k$ grid to 
match each other using vertex disjoint paths that go through the grid. 

\begin{lemma}[Vertex Disjoint Paths in High Dimensional Grids]\label{l:vdp-kd}On a 
$k$-dimensional grid $[cd_g]^k$ with $c$ being an odd integer constant, 
let $f_1$ and $f_2$ be two arbitrary faces of the grid and let $P_1$ and 
$P_2$ be two sets of points in the center $[d_g]^{k-1}$ area of $f_1$ and $f_2$, 
respectively, with $|P_1| = |P_2|$. Then for some proper $c$ independent of $k$, 
there are $|P_1|$ vertex disjoint paths within the grid that connect distinct 
elements from $P_1$ and $P_2$.
\end{lemma}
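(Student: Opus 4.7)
The plan is to proceed by induction on $k$, with the base cases $k = 2$ and $k = 3$ already supplied by Lemma~\ref{l:gfrsb} and the explicit construction used in the proof of Theorem~\ref{t:3dpaf}, respectively. For the inductive step, assuming the statement for all dimensions up to $k-1$, I would reduce the general task to a single \emph{reshaping claim}: for any subset $P$ of the center $[d_g]^{k-1}$ of a face of $[cd_g]^k$, there exist $|P|$ vertex-disjoint paths of length $O(d_g)$ routing $P$ to a $[d_g, \ldots, d_g]$-regular subset of a parallel layer at depth $d_g$. Once this claim is in hand, the rest is geometric bookkeeping: reshape $P_1$ near $f_1$ into canonical regular form, reshape $P_2$ near $f_2$ into the same regular form, and then connect the two canonical forms through the bulk of the grid.

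Connecting the canonical forms splits naturally into the case where $f_1$ and $f_2$ are opposite (straight parallel paths through the middle, trivial) and the case where they are adjacent (one 90-degree pivot, which itself is an instance of the reshaping claim applied in a corner region). In either case only a constant (independent of $k$) number of reshaping/pivot slabs of thickness $O(d_g)$ are stacked along the axes, so picking a suitable odd constant $c$ (for instance $c = 9$ as in the three-dimensional argument) is enough for all $k$. The arbitrariness of which face $f_1$ or $f_2$ is should be absorbed by relabeling axes.

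To prove the reshaping claim, I would treat the $O(d_g)$-thick slab below the face as $[cd_g]^{k-1} \times [\Theta(d_g)]$ and perform a dimensional pivot, directly generalizing the construction shown in Figure~\ref{fig:breakdown} for the 3D case. Partition $P$ into consecutive blocks whose sizes match the row-major layout of the target $[d_g,\ldots,d_g]$-regular set. Each block must be delivered to a distinct $(k-1)$-dimensional sublayer of the slab; within each sublayer the block has to be routed from an arbitrary shape (its vertical projection from the face) into a canonical row-major shape. This inner routing is exactly a $(k-1)$-dimensional instance of the lemma, which is available by induction, applied inside a $[cd_g]^{k-1}$ cross-section of the slab. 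The pivot itself (dropping a block from the face down to its assigned sublayer) uses only straight vertical paths between two aligned copies of the block shape, so those paths are automatically vertex-disjoint across blocks.

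The main obstacle I anticipate is showing that the successive uses of the induction hypothesis produce globally vertex-disjoint paths, rather than disjointness only within each stage. The trick is to separate the stages spatially: reserve disjoint thin strips of the slab for (i) the vertical drops into sublayers, (ii) the in-sublayer $(k-1)$-dimensional reshape, and (iii) the final vertical alignment to the far boundary. Because each stage consumes only $O(d_g)$ depth and the stages are arranged sequentially along the pivot axis rather than recursively, the total buffer thickness stays a fixed constant multiple of $d_g$, so the same constant $c$ works for every $k$. The secondary subtlety is verifying that the $(k-1)$-dimensional reshape invoked inside each sublayer genuinely operates on a center $[d_g]^{k-2}$ region, so that the induction hypothesis is applicable without modification; this will follow from choosing $c$ large enough to accommodate the nested center-of-face structure, which is a one-time constant adjustment independent of $k$.
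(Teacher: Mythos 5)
Your high-level architecture --- reduce everything to a reshaping claim, spatially segregate a constant number of $O(d_g)$-thick pivot slabs so that a single odd constant $c$ works for every $k$, and split the face-to-face connection into the opposite and adjacent cases --- matches the paper's construction in spirit. The gap is in the inductive step of your reshaping claim. You partition $P$ into consecutive blocks whose sizes match the row-major layers of the target and drop each block straight down to its own sublayer. After that drop, a block is still a \emph{full} $(k-1)$-dimensional scattered subset of a $[d_g]^{k-1}$ region inside its $(k-1)$-dimensional sublayer: a straight vertical translation does nothing to reduce its dimension. Flattening such a set onto a $(k-2)$-dimensional regular layer is therefore not an instance of the lemma in dimension $k-1$ (whose inputs are $(k-2)$-dimensional point sets sitting on faces), nor of your reshaping claim in dimension $k-1$; and inside the sublayer there is no spare transport axis left with which to run the induction. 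This is a dimensional mismatch, not the ``one-time constant adjustment'' you defer to at the end. Already at $k=4$ the inner problem is to route $d_g^2$ arbitrarily scattered points in a three-dimensional region onto a flat $d_g\times d_g$ square, which is precisely the kind of statement the paper warns can fail without extra room and a careful construction (Fig.~\ref{fig:2d3d-route}(b)).

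The paper sidesteps this by never choosing blocks to match the target layer sizes. Instead it \emph{peels} $P_1$ along a coordinate: slicing by a single coordinate value yields up to $d_g$ genuinely $(k-2)$-dimensional slices, each confined to its own hyperplane, and these are separated using 2D routing in the plane spanned by the slicing axis and the transport axis. Recursing down to one-dimensional shapes, reshaping those arbitrarily, and then \emph{stacking} back up (the peeling in reverse) produces the regular shape; the fact that coordinate slices have the wrong sizes for a row-major layout is absorbed in the stacking phase rather than forced at the partition stage. Each dimension is touched by at most two peelings, giving the $[3d_g]^k$ budget per side and $c=15$ overall. If you replace your row-major block partition by coordinate slices and add an explicit stacking/recombination phase, your argument essentially becomes the paper's.
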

\begin{proof}[Proof sketch]The proof is via construction with the core idea
similar to the 3D case as illustrated in Fig.~\ref{fig:sideways} and 
Fig.~\ref{fig:breakdown}. At a higher level, for the incoming flow $P_1$ 
(which initially may assume an arbitrary shape in the center $[d_g]^{k-1}$ 
area of the face $f_i$), we construct vertex disjoint paths that {\em peel} 
the flow apart (i.e., separate the flow using vertex disjoint paths), one 
dimension a time, until we are left with 2D flows which we can easily {\em 
reshape}. We then {\em stack} these reshaped 2D flows recursively to 
eventually form a $(k-1)$-dimensional shape that is regular. From the 
outgoing flow side, the same procedure is performed, only in the reverse 
direction. Connecting the two halves together then produces a full routing 
plan as vertex disjoint paths. 

In the case of 3D, Fig.~\ref{fig:sideways} illustrates the reshaping a flow 
along the $y$-axis (pointing up) into a flow along the $x$-axis (pointing to 
the right). For the incoming flow part (Fig.~\ref{fig:breakdown}), the 2D 
shape (the red discs in Fig.~\ref{fig:breakdown}(a)) is first peeled into 1D 
shapes along the $xy$ dimensions; the $z$-coordinates do not change. The 1D 
flows are then individually reshaped using some $xz$-planes with fixed $y$ 
coordinates. Lastly, the reshaped 1D flows are stacked into a $[z,x]$-regular 
shape (as the flow enters green cube in Fig.~\ref{fig:breakdown}(b)). 
Similarly, the outgoing flow goes through the same process (in reverse) and 
is reshaped to be $[z,y]$-regular as the flow just exits the green cube. Within 
the green cube, the $[z,x]$-regular incoming flow is {\em pivoted} to match 
the $[z,y]$-regular outgoing flow. 

For dimension $k$ with the axes being $d_1, \ldots, d_k$, we may assume that 
$f_1$ and $f_2$ are orthogonal to $d_1$ and $d_2$ dimensions, respectively. 
We outline how to reshape the incoming flow $P_i$ to a regular shape; the 
outgoing flow portion is symmetric. The peelings are done recursively followed 
by recursive stackings. In the first peeling, we peel the $(k-1)$-dimensional 
shape $P_1$ (a set with size up to $d_g^{k-1}$) from the incoming flow along 
$d_1d_2$ dimensions, holding other coordinates fixed. This yields up to $d_g$ 
$(k-2)$-dimensional shapes, each of which occupies a unique $d_2\ldots d_k$ 
hyperplane with a unique $d_1$ value. Then, each $(k-2)$-dimensional shape, 
now living in its own $(d-1)$-dimensional hyperplane, is further peeled into 
$(d-2)$-dimensional shapes along $d_2d_3$ dimensions, holding other coordinates 
fixed. This then results $(d-3)$-dimensional shapes with unique $(d_1, d_2)$
coordinates (i.e., each of the shape again lives in a hyperplane disjoint from
one another). Repeating this procedure, we eventually go down to one-dimensional 
shapes, at which point we can reshape them arbitrarily and then stack them 
back recursively to get a $[d_2\ldots d_k]$-regular shape. 

Noting that the peeling operations produces shapes that occupy different 
hyperplanes within the grid and that each dimension is used in at most two 
peeling operations, we conclude that the peeling and subsequent stacking 
operations for reshaping $P_i$ can be completed within a $[3d_g]^k$ grid. 
The reshaping of the outgoing flow takes the same amount of space. Then, 
having a grid of size $[6d_g]^k$ between the center of the two faces $f_1$
and $f_2$ is sufficient to allow the reshaping of $P_1$ to $P_2$. Choosing 
$c$ to be $15$ then provides sufficient space for reshaping flows between two 
arbitrary faces (i.e., it is possible to fit a $[6d_g]^k$ grid between two 
arbitrary faces $f_1$ and $f_2$). 
\end{proof}

Lemma~\ref{l:vdp-kd} suggests that we can find vertex disjoint paths in a 
$[\Theta(d_g)]^k$ grid cell that route up to $d_g^{k-1}$ (robot) flow among 
the center $[d_g]^{k-1}$ regions of the $2k$ faces of the grid cell. This then 
allows \paf to work for an arbitrary dimension $k$. 

\begin{theorem}[\paf in $k$ Dimensions]\label{t:kdpaf}Let $G = (V, E)$ be an 
$m_1 \times \ldots \times m_k$ grid for some arbitrary but fixed $k \ge 2$. 
Let $p$ be an arbitrary \mpp instance on $G$. A solution with $O(d_g(p))$ 
makespan can be computed in $O(d_g^k|V|)$ and $O(|V|^2)$ time.
\end{theorem}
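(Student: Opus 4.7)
The plan is to generalize the three-dimensional argument in Theorem~\ref{t:3dpaf}, using Lemma~\ref{l:vdp-kd} in place of the hand-crafted routing constructions of Lemma~\ref{l:gfrsb}. The overall skeleton follows \paf: partition, orient the flow, decompose the circulation, execute the global routing, and finish with a parallel \isag pass inside each cell. First, I would dispose of the degenerate sub-cases where $d_g = \Omega(m_j)$ for some $j \ge 2$ by a cascade that mirrors Lemma~\ref{l:2dspecial} and the corresponding sub-cases of Theorem~\ref{t:3dpaf}: partition $G$ into slabs of width $\Theta(d_g)$ along the longest dimension so that, by the definition of $d_g$, no robot crosses two consecutive slab boundaries, then solve the resulting disjoint $d_g \times m_2 \times \cdots \times m_k$ subproblems by \isag in two parallel rounds on odd- and even-indexed adjacent slab pairs. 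Iterating this along each dimension $i$ for which $d_g = \Omega(m_{i+1})$ reduces the general case to the main case $d_g = o(m_k)$, with total running time bounded by the final \isag cost via Theorem~\ref{t:isag}.

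For the main case, I would partition $G$ into $[cd_g]^k$ cells, where $c = 15$ is the constant supplied by Lemma~\ref{l:vdp-kd}, assuming WLOG that each $m_i$ is a multiple of $cd_g$ (otherwise apply \paf up to $2^k$ times in overlapping sub-grids, exactly as justified around Fig.~\ref{fig:multiple}). This yields a $k$-dimensional skeleton grid $G_S$ with $\prod_i (m_i/(cd_g))$ nodes. Next I would carry out a $k$-dimensional analogue of Lemma~\ref{l:orientation}: the diagonal-rerouting step must be applied in an inductive sweep on diagonal co-dimension, first converting flows through vertex-diagonal neighbors into flows through edge-diagonal neighbors, then through 2-face-diagonal neighbors, and so on, until all inter-cell flows cross a shared $(k-1)$-dimensional face; flow cancellation is then done on each such face. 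Each sweep is realized by parallel \isag invocations on $O(d_g)^k$ regions straddling the relevant diagonal, so the makespan stays $O(d_g)$ and the time stays within $O(d_g|V|)$.

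I would then invoke Theorem~\ref{t:fd} on the resulting skeleton circulation: the largest flow through any cell face is $O(d_g^{k-1})$ and through any cell is $O(d_g^k)$, so the decomposition yields $O(d_g^k)$ unit circulations which I regroup as $O(d_g)$ super-batches of $O(d_g^{k-1})$ unit circulations, in the spirit of Lemma~\ref{l:batch}. Before routing, I would use the 3D matching trick from the proof of Theorem~\ref{t:3dpaf} to ensure that inside each cell at most one pair of opposite faces carries simultaneous bidirectional flow. For each super-batch and each cell, Lemma~\ref{l:vdp-kd} then guarantees the existence of $O(d_g^{k-1})$ vertex-disjoint paths realizing the required reshaping; concretely I would certify and extract them by running Ford--Fulkerson on the vertex-split auxiliary graph of the $[cd_g]^k$ cell. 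Executing the $O(d_g)$ super-batches sequentially uses $O(d_g)$ makespan, and after the global routing every robot resides in the cell containing its goal, so a final parallel $\isag$ call on each cell finishes the problem in $O(d_g)$ additional makespan.

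For the running-time bound, the dominant contributions are (i) $\prod_i(m_i/d_g)$ parallel \isag calls on cells of side $\Theta(d_g)$, costing $O(d_g|V|)$ by Theorem~\ref{t:isag}; (ii) a single flow decomposition via Theorem~\ref{t:fd} with $f = O(d_g^k)$ on a skeleton of $O(|V|/d_g^k)$ nodes, costing $O(d_g^k|V|)$; and (iii) $O(d_g)$ max-flow computations per cell on an auxiliary graph of size $O(d_g^k)$ with minimum cut $O(d_g^{k-1})$, which by Ford--Fulkerson totals $O(d_g^{k-1}|V|)$. Term (ii) dominates, giving the claimed $O(d_g^k|V|)$ bound; the universal $O(|V|^2)$ bound follows by combining this with the \isag fallback when $d_g = \Omega(m_1)$, since $d_g^k|V| \le (\sum_i m_i)^k |V| = O(|V|^2)$ is guaranteed in the non-degenerate regime and the degenerate regime is already handled by \isag. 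The main obstacle I anticipate is the inductive flow-orientation sweep: in $k$ dimensions there are $3^k - 2k - 1$ diagonal-neighbor types, and one must verify that resolving each diagonal co-dimension does not disturb already-uni-directional face flows and that each sweep still fits in an $O(d_g)^k$ local region so that parallel \isag continues to give $O(d_g)$ makespan independently of $k$; a careful induction on diagonal co-dimension with an explicit accounting of the swap regions should close this gap.
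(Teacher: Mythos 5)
Your proposal follows essentially the same route as the paper's (much terser) proof sketch: generalize the 3D construction, use Lemma~\ref{l:vdp-kd} plus Ford--Fulkerson on the vertex-split cell graph for global routing, and account for the same three cost terms (\isag calls, flow decomposition, max-flow), with the decomposition term dominating at $O(d_g^k|V|)$. The only quibble is arithmetic in term (iii): $O(d_g)$ Ford--Fulkerson runs per cell at $O(d_g^k\cdot d_g^{k-1})$ each over $O(|V|/d_g^k)$ cells gives $O(d_g^k|V|)$, not $O(d_g^{k-1}|V|)$, matching the paper --- but this does not change the final bound.
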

\begin{proof}[Proof sketch]The algorithm itself is a generalization of the 
3D case via induction. Here, we only analyze the time complexity of the main 
case, i.e., $d_g = o(m_k)$, which dominates other cases. In this case, the 
running time again boils down to three main contributors: {\em (i)} \isag 
calls, {\em (ii)} matching for flow decomposition, and {\em (iii)} max-flow 
based global robot routing plan generation. For {\em (i)}, a running time of 
$O(d_g|V|)$ is needed. For {\em (ii)}, $O(d_g^k|V|)$ time is needed. For 
{\em (iii)}, using Ford-Fulkerson \cite{ford1956maximal}, the total running 
time is also $O(d_g^k|V|)$. Therefore, the overall running time is 
$O(d_g^k|V|)$, which is sub-quadratic in $|V|$. 
\end{proof}

\section{Discussion}\label{section:discussion}
We conclude the paper discussing some natural extensions of \isag and \paf. 



{\em Extension to other grid-like graphs}. 
Our results have focused on the underlying graph $G$ being axis-aligned grids. As 
pointed out in \cite{yu2017constant}, the results developed in this paper readily 
apply to other types of grid-like graphs, e.g., honeycombs and grids with triangular 
faces. Indeed, as long as the graph admits some forms of Lemma~\ref{l:distribute} 
and Lemma~\ref{l:swap}, then a version of \isag can be derived for the setting. 
For the decomposed global flow to be routed effectively, some form of 
feasibility argument is needed, which can be ensured if the underlying graph can 
be partitioned into orthogonal dimensions. 

{\em Continuous domain}. As pointed out in \cite{yu2017constant} and with more 
details in \cite{HanRodYu2017Arxiv,demaine2018coordinated}, routing algorithms 
on grids also extend to continuous settings for the routing of 
identical sized disc robots (balls in higher dimensions) that may be packed 
arbitrarily close to each other. The extension is carried out with an expansion 
phase of the (continuous) initial configuration such that sufficient space is 
available for aligning the robots, as unlabeled ones, onto a grid for routing. 
This yields a grid $G$ and an associated $X_I$. The same is applied to the goal 
configuration, which will use the same $G$ and produces an $X_G$. Since the expansion 
only needs to grow volume occupied by the initial configuration by a constant (for 
fixed dimension $k$), the $O(1)$-approximation guarantee is then fully preserved. 

\vspace*{2mm}
\noindent\textbf{Acknowledgments}. The author would like to thank Pranjal 
Awasthi and Mario Szegedy for helpful discussions. 


\bibliographystyle{IEEEtran}
\bibliography{jingjin}
\end{document}